\newtheorem{assumption}{Assumption}
\newtheorem{lemma}{Lemma}
\newtheorem{theorem}{Theorem}
\newtheorem{corollary}{Corollary}
\newcommand{\etal}{\textit{et al}.}
\newcommand{\ie}{\textit{i}.\textit{e}.}
\newcommand{\eg}{\textit{e}.\textit{g}.}
\newcommand{\bfw}{{\theta}}
\newcommand{\bfx}{\mathrm{x}}
\newcommand{\bfy}{\mathrm{y}}
\newcommand{\mli}[1]{\mathit{#1}}
\newcommand{\algname}[1]{{\textrm{#1}}}
\newcommand{\comment}[1]{\ifthenelse{\boolean{showcomments}}
	{\textcolor{red}{(Comment: #1)}}
	{}
}
\newcommand{\red}[1]{\ifthenelse{\boolean{showcomments}}
	{\textcolor{red}{#1}}
	{}
}
\begin{document}
%
% paper title
% Titles are generally capitalized except for words such as a, an, and, as,
% at, but, by, for, in, nor, of, on, or, the, to and up, which are usually
% not capitalized unless they are the first or last word of the title.
% Linebreaks \\ can be used within to get better formatting as desired.
% Do not put math or special symbols in the title.
\title{Efficient Federated Meta-Learning over Multi-Access Wireless Networks}
%
%
% author names and IEEE memberships
% note positions of commas and nonbreaking spaces ( ~ ) LaTeX will not break
% a structure at a ~ so this keeps an author's name from being broken across
% two lines.
% use \thanks{} to gain access to the first footnote area
% a separate \thanks must be used for each paragraph as LaTeX2e's \thanks
% was not built to handle multiple paragraphs
%

\author{Sheng~Yue, \IEEEmembership{Student Member,~IEEE,}
        Ju~Ren, \IEEEmembership{Member,~IEEE,}
        Jiang~Xin, \IEEEmembership{Student Member,~IEEE,} 
        Deyu~Zhang, \IEEEmembership{Member,~IEEE,}
        Yaoxue~Zhang, \IEEEmembership{Senior Member,~IEEE,}
        and~Weihua~Zhuang, \IEEEmembership{Fellow,~IEEE}
        %and~Jane~Doe,~\IEEEmembership{Life~Fellow,~IEEE}% <-this % stops a space
\thanks{Sheng Yue, Jiang Xin, Deyu Zhang are with the School of Computer Science and Engineering, Central South University, Changsha, 410083 China. Emails: \{sheng.yue, xinjiang, zdy876\}@csu.edu.cn.}
\thanks{Ju Ren and Yaoxue Zhang are with the Department of Computer Science and Technology, Tsinghua University, Beijing, 100084 China. Emails: \{renju,zhangyx\}@tsinghua.edu.cn.}
\thanks{Weihua Zhuang is with the Department of Electrical and Computer Engineering, University of Waterloo, Waterloo, ON, Canada. Email: wzhuang@uwaterloo.ca.}
}

% make the title area
\maketitle

% As a general rule, do not put math, special symbols or citations
% in the abstract or keywords.
\begin{abstract}
Federated meta-learning (FML) has emerged as a promising paradigm to cope with the data limitation and heterogeneity challenges in today's edge learning arena. However, its performance is often limited by slow convergence and corresponding low communication efficiency. In addition, since the available radio spectrum and IoT devices' energy capacity are usually insufficient, it is crucial to control the resource allocation and energy consumption when deploying FML in practical wireless networks. To overcome the challenges, in this paper, we rigorously analyze the contribution of each device to the global loss reduction in each round and develop an FML algorithm (called NUFM) with a non-uniform device selection scheme to accelerate the convergence. After that, we formulate a resource allocation problem integrating NUFM in multi-access wireless systems to jointly improve the convergence rate and minimize the wall-clock time along with energy cost. By deconstructing the original problem step by step, we devise a joint device selection and resource allocation strategy to solve the problem with theoretical guarantees. Further, we show that the computational complexity of NUFM can be reduced from $O(d^2)$ to $O(d)$ (with the model dimension $d$) via combining two first-order approximation techniques. Extensive simulation results demonstrate the effectiveness and superiority of the proposed methods in comparison with existing baselines.

\end{abstract}
\begin{IEEEkeywords}
Federated meta-learning, multi-access systems, device selection, resource allocation, efficiency.
\end{IEEEkeywords}

\IEEEpeerreviewmaketitle

\section{Introduction}
\IEEEPARstart{T}{he} integration of Artificial Intelligence (AI) and Internet-of-Things (IoT) has led to a proliferation of studies on edge intelligence, aiming at pushing AI frontiers to the wireless network edge proximal to IoT devices and data sources \cite{park2019wireless}. It is expected that edge intelligence will reduce time-to-action latency down to milliseconds for IoT applications while minimizing network bandwidth and offering security guarantees \cite{plastiras2018edge}. However, a general consensus is that a single IoT device can hardly realize edge intelligence due to its limited computational and storage capabilities. Accordingly, it is natural to rely on collaboration in edge learning, whereby IoT devices work together to accomplish computation-intensive tasks \cite{zhang2019openei}.

Building on a synergy of federated learning \cite{mcmahan2017communication} and meta-learning \cite{finn2017model}, federated meta-learning (FML) has been proposed under a common theme of fostering edge-edge collaboration \cite{chen2018federated,jiang2019improving,lin2020collaborative}. In FML, IoT devices join forces to learn an \emph{initial shared model} under the orchestration of a central server such that current or new devices can quickly adapt the learned model to their local datasets via one or a few gradient descent steps. Notably, FML can keep all the benefits of the federated learning paradigm (such as simplicity, data security, and flexibility), while giving a more personalized model for each device to capture the differences among tasks \cite{fallah2020personalized}. Therefore, FML has emerged as a promising approach to tackle the heterogeneity challenges in federated learning and to facilitate efficient edge learning \cite{kairouz2019advances}. 

Despite its promising benefits, FML comes with new challenges. On one hand, the number of participated devices can be enormous. The uniform device selection at random, often done in the existing methods, leads to a low convergence speed \cite{yue2021inexact,fallah2020personalized}. Although recent studies \cite{nishio2019client,nguyen2020fast,chen2020joint} have characterized the convergence of federated learning and proposed non-uniform device selection mechanisms, they cannot be directly applied to FML problems due to the bias and high-order information in the stochastic gradients. On the other hand, the performance of FML in a wireless environment is highly related to its wall-clock time, including the computation time (determined by local data sizes and devices' CPU types) and communication time (depending on channel gains, interference, and transmission power) \cite{dinh2020federated,liu2021efficient}. If not properly controlled, a large wall-clock time can cause unexpected training delay and communication inefficiency. In addition, the DNN model training, which involves a large number of data samples and epochs, usually induces high computational cost, especially for sophisticated model structures consisting of millions of parameters \cite{dinh2020federated}. Due to the limited power capacity of IoT devices, energy consumption should also be properly managed to ensure system sustainability and stability \cite{chen2020joint}. In a nutshell, for the purpose of efficiently deploying FML in today's wireless systems, the strategy of device selection and resource allocation must be carefully crafted not only to accelerate the learning process, but also to control the wall-clock time of training and energy cost in edge devices. Unfortunately, despite their importance, there are limited studies on these aspects in the current literature.

In this paper, we tackle the above-mentioned challenges in two steps: 1) We develop an algorithm (called \algname{NUFM}) with a non-uniform device selection scheme to improve the convergence rate of vanilla FML algorithm; 2) based on \algname{NUFM}, we propose a resource allocation strategy (called \algname{URAL}) that jointly optimizes the convergence speed, wall-clock time, and energy consumption in the context of multi-access wireless systems. More specifically, first we rigorously quantify the contribution of each device to the convergence of FML via deriving a tight lower bound on the reduction of one-round global loss. Based on the quantitative results, we present the non-uniform device selection scheme that maximizes the loss reduction per round, followed by the NUFM algorithm. Then, we formulate a resource allocation problem for NUFM over wireless networks, capturing the trade-offs among convergence, wall-clock time, and energy cost. To solve this problem, we exploit its special structure and decompose it into two sub-problems. The first one is to minimize the computation time via controlling devices' CPU-cycle frequencies, which is solved optimally based on the analysis of the effect of device heterogeneity on the objective. The second sub-problem aims at optimizing the resource block allocation and transmission power management. It is a non-convex mixed-integer non-linear programming (MINLP) problem, and to derive a closed-form solution is a non-trivial task. Thus, after deconstructing the problem step by step, we devise an iterative method to solve it and provide a convergence guarantee. 

In summary, our main contributions are three-fold.
\begin{itemize}
	\item We provide a theoretical characterization of contribution of an individual device to the convergence of FML in each round, via establishing a tight lower bound on the one-round reduction of expected global loss. Using this quantitative result, we develop NUFM, a fast-convergent FML algorithm with non-uniform device selection;
	\item To embed NUFM in the context of multi-access wireless systems, we formulate a resource allocation problem, capturing the trade-offs among the convergence, wall-clock time, and energy consumption. By decomposing the original problem into two sub-problems and deconstructing sub-problems step by step, we propose a joint device selection and resource allocation algorithm (namely URAL) to solve the problem effectively with theoretical performance guarantees;
	\item To reduce the computational complexity, we further integrate our proposed algorithms with two first-order approximation techniques in \cite{fallah2020personalized}, by which the complexity of a one-step update in NUFM can be reduced from $O(d^2)$ to $O(d)$. We also show that our theoretical results hold in these cases;
	\item We provide extensive simulation results on challenging real-world benchmarks (\ie, Fashion-MNIST, CIFAR-10, CIFAR-100, and ImageNet) to demonstrate the efficacy of our methods.
\end{itemize} 

The remainder of this paper is organized as follows. Section~\ref{sec:related_work} briefly reviews the related work. Section~\ref{sec:preliminaries}
introduces the FML problem and standard algorithm. We present the non-uniform device selection scheme in Section~\ref{sec:nufm} and adapt the scheme for wireless networks in Section~\ref{sec:wireless_nufm}. Finally, Section~\ref{sec:extension} presents the extension to first-order approximation techniques, followed by the simulation results in Section~\ref{sec:simulation} and a conclusion drawn in Section~\ref{sec:conclusion}.

\section{Related Work}
\label{sec:related_work}
Federated learning (FL) \cite{mcmahan2017communication} has been proposed as a promising technique to facilitate edge-edge collaborative learning \cite{fadlullah2020hcp}. However, due to the heterogeneity in devices, models, and data distributions, a shared global model often fails to capture the individual information of each device, leading to performance degradation in inference or classification \cite{wu2020personalized,kairouz2019advances,yang2019federated}.

Very recently, based on the advances in meta-learning \cite{finn2017model}, federated meta-learning (FML) has garnered much attention, which aims to learn a personalized model for each device to cope with the heterogeneity challenges \cite{chen2018federated,jiang2019improving,lin2020collaborative,fallah2020personalized,yue2021inexact}. Chen \etal \cite{chen2018federated} first introduce an FML method called FedMeta, integrating the model-agnostic meta-learning (MAML) algorithm \cite{finn2017model} into the federated learning framework. They show that FML can significantly improve the performance of FedAvg \cite{mcmahan2017communication}. Jiang \etal \cite{jiang2019improving} analyze the connection between FedAvg and MAML, and empirically demonstrate that FML enables better and more stable personalized performance. From a theoretical perspective, Lin \etal \cite{lin2020collaborative} analyze the convergence properties and computational complexity of FML with strongly convex loss functions and exact gradients. Fallah \etal \cite{fallah2020personalized} further provide the convergence guarantees in non-convex cases with stochastic gradients. Different from the above gradient descent--based approaches, another recent work \cite{yue2021inexact} develops an ADMM-based FML method and gives its convergence guarantee under non-convex cases. However, due to selecting devices uniformly at random, the existing FML algorithms often suffer from slow convergence and low communication efficiency \cite{fallah2020personalized}. Further, deploying FML in practical wireless systems calls for effective resource allocation strategies \cite{dinh2020federated,yue2021todg}, which is beyond the scope of existing FML literature.

There exists a significant body of works placing interests in convergence improvement and resource allocation for FL \cite{ren2020scheduling,zhu2019broadband,zeng2020energy,shi2020joint,nguyen2020fast,chen2020convergence,ren2020accelerating,yu2019linear,dinh2020federated,xu2020client,wang2019adaptive,chen2020joint,karimireddy2020scaffold,luo2021cost,luping2019cmfl,li2019differentially,sim2019personalization,shi2020device,yang2020energy}. Regarding the convergence improvement, Nguyen \etal \cite{nguyen2020fast} propose a fast-convergent FL algorithm, called FOLB, which achieves a near-optimal lower bound for the overall loss decrease in each round. Note that, while the idea of NUFM is similar to \cite{nguyen2020fast}, the lower bound for FML is derived from a completely different technical path due to the inherent complexity in the local update. To minimize the convergence time, a probabilistic device selection scheme for FL is designed in \cite{chen2020convergence}, which assigns high probabilities to the devices with large effects on the global model. Ren \etal \cite{ren2020accelerating} investigate a batchsize selection strategy for accelerating the FL training process. Karimireddy \etal \cite{karimireddy2020scaffold} employ the variance reduction technique to develop a new FL algorithm. Based on momentum methods, Yu \etal \cite{yu2019linear} give an FL algorithm with linear speedup property. Regarding the resource allocation in FL, Dinh \etal \cite{dinh2020federated} embed FL in wireless networks, considering the trade-offs between training time and energy consumption, under the assumption that all devices participate in the whole training process. From a long-term perspective, Xu \etal \cite{xu2020client} empirically investigate the device selection scheme jointly with bandwidth allocation for FL, using the Lyapunov optimization method. Chen \etal \cite{chen2020joint} investigate a device selection problem with ``hard'' resource constraints to enable the implementation of FL over wireless networks. Wang \etal \cite{wang2019adaptive} propose a control algorithm to determine the best trade-off between the local update and global aggregation under a resource budget. 

Although extensive research has been carried out on FL, researchers have not treated FML in much detail. In particular, the existing FL acceleration techniques cannot be directly applied to FML due to the high-order information and biased stochastic gradients in the local update phases\footnote{Different from FML, FL is a first-order method with unbiased stochastic gradients.} (see Lemma \ref{lem:variance_Fi} in Section~\ref{sec:nufm}). At the same time, device selection and resource allocation require crafting jointly \cite{chen2018federated}, rather than simply plugging the existing strategies in.

\section{Preliminaries and Assumptions}
\label{sec:preliminaries}
In this section, we introduce federated meta-learning, including the learning problem, standard algorithm, and assumptions for theoretical analysis.

\begin{table*}[ht]
	% \setlength{\tabcolsep}{4pt}
	% \small
	\caption{Key Notations}
	\label{table:notations}
	\centering
	\renewcommand\arraystretch{1.25}
	\resizebox{\textwidth}{!}{
	\begin{tabular}{l|l||l|l}
		\hline
		\textbf{Notation} & \textbf{Definition} & \textbf{Notation} & \textbf{Definition}\\ \hline
		$\theta$ & Model parameter & $F_i(\cdot)$ & Meta--loss function\\
		$\alpha$, $\beta$ & Stepsize and meta--learning rate & $L_i$, $\rho_i$ & Lipschitz continuous parameters\\
		$i$, $j$ & Indexes of user devices and data samples & $\sigma_{G}$, $\sigma_{H}$ & Upper bounds of variances\\
		$k$ & Index of training rounds& $\gamma_{G}$, $\gamma_{H}$ & Similarity parameters\\
		$t$ & Index of local update steps & $u_i$ & Contribution to global loss reduction of device $i$\\
		$\tau$ & Number of local update steps & $v_i$ & CPU-cycle frequency of device $i$\\
		$P_i$ & Underlying data distribution of device $i$ & $p_i$ & Transmission power of device $i$\\
		$\mathcal{D}_i$, $D_i$ & Dataset and its size of device $i$       &$z_{i,m}$ & Binary variable indicating if device $i$ accesses RB $m$\\
		$\mathcal{N}$ & Set of devices & $\eta_1$, $\eta_2$ & Weight parameters\\
		$n$ & Number of devices & $c_i$ & CPU cycles for computing one sample by device $i$\\
		$\mathcal{N}_k$ & Set of participating devices in round $k$ & $h_i$, $I_m$ & Channel gain of device $i$ and inference in RB $m$\\
		$n_k$ & Number of participating devices in round $k$ & $B$, $N_0$ & Bandwidth of each RB and noise power spectral density\\
		$\mathrm{x}$, $\mathrm{y}$ & Input and corresponding label & $\mathcal{M}$, $M$ & Set and number of RBs\\
		$l_i(\theta;\mathrm{x},\mathrm{y})$ & Loss of model $\theta$ on sample $(\mathrm{x},\mathrm{y})$ & $E$ & Total energy consumption\\
		$f_i(\cdot)$ & Expected loss function & $T$ & Total latency \\
		\hline
	\end{tabular}
	}	
\end{table*}

\subsection{Federated Meta-Learning Problem}
\label{subsec:learning_prob}
We consider a set $\mathcal{N}$ of user devices that are all connected to a server. Each device $i\in\mathcal{N}$ has a labeled dataset $\mathcal{D}_i=\{\bfx^j_i,\bfy^j_i\}^{D_i}_{j=1}$ that can be accessed only by itself. Here, the tuple $(\bfx^j_i,\bfy^j_i)\in\mathcal{X}\times\mathcal{Y}$ is a data sample with input $\bfx^j_i$ and label $\bfy^j_i$, and follows an unknown underlying distribution $P_i$. Define $\theta$ as the model parameter, such as the weights of a Deep Neural Network (DNN) model. For device $i$, the loss function of a model parameter $\bfw\in\mathbb{R}^d$ is defined as $\ell_i(\bfw;\bfx,\bfy)$, which measures the error of model $\bfw$ in predicting the true label $\bfy$ given input $\bfx$. 

Federated meta-learning (FML) looks for a good model initialization (also called meta-model) such that the well-performed models of different devices can be quickly obtained via one or a few gradient descent steps. More specially, FML aims to solve the following problem
\begin{align}
	\label{prob:fed_meta}
	\mathop{\min}_{\bfw\in\mathbb{R}^d} F(\bfw)\coloneqq \frac{1}{n}\sum_{i\in\mathcal{N}} f_i(\bfw - \alpha\nabla f_i(\bfw))
\end{align}
where $f_i$ represents the expected loss function over the data distribution of device $i$, \ie, $f_i(\bfw)\coloneqq\mathbb{E}_{(\bfx,\bfy)\sim P_i}\left[\ell_i(\bfw;\bfx,\bfy)\right]$, $n=|\mathcal{N}|$ is the number of devices, and $\alpha$ is the stepsize. The advantages of this formulation are two-fold: 1) It gives a personalized solution that can capture any heterogeneity between the devices; 2) the meta-model can quickly adapt to new devices via slightly updating it with respect to their own data. Clearly, FML well fits edge learning cases, where edge devices have insufficient computing power and limited data samples.

Next, we review the standard FML algorithm in the literature.

\subsection{Standard Algorithm}
\label{subsec:standard_alg}
Similar to federated learning, vanilla FML algorithm solves \eqref{prob:fed_meta} in two repeating steps: \emph{local update} and \emph{global aggregation} \cite{fallah2020personalized}, as detailed below.
\begin{itemize}
	\item \emph{Local update:} At the beginning of each round $k$, the server first sends the current global model $\bfw^k$ to a fraction of devices $\mathcal{N}_k$ chosen uniformly at random with pre-set size $n_k$. Then, each device $i\in\mathcal{N}_k$ updates the received model based on its meta-function $F_i(\bfw) \coloneqq f_i(\bfw - \alpha\nabla f_i(\bfw))$ by running $\tau$ ($\ge1$) steps of stochastic gradient descent locally (also called mini-batch gradient descent), \ie,
	\begin{align}
		\label{eq:local_update}
		\bfw^{k,t+1}_i = \bfw^{k,t}_i - \beta\tilde{\nabla}F_i(\bfw^{k,t}_i),~\mathrm{for}~0\le t\le\tau-1
	\end{align}
	where $\bfw^k_t$ denotes the local model of device $i$ in the $t$-th step of the local update in round $k$ with $\bfw^{k,0}_i = \bfw^k$, and $\beta>0$ is the meta--learning rate. In \eqref{eq:local_update}, the stochastic gradient $\tilde{\nabla}F_i(\bfw)$ is given by 
	\begin{align}
		\label{eq:gradient_estimate}
		\tilde{\nabla}F_i(\bfw)\coloneqq \big(I - \alpha\tilde{\nabla}^2 f_i(\bfw,\mathcal{D}''_i)\big)\tilde{\nabla}f_i\big(\bfw - \alpha\tilde{\nabla} f_i(\bfw,\mathcal{D}_i),\mathcal{D}'_i\big)
	\end{align}
	where $\mathcal{D}_i$, $\mathcal{D}'_i$, and $\mathcal{D}''_i$ are independent batches\footnote{We slightly abuse the notation $\mathcal{D}_i$ as a batch of the local dataset of the $i$-th device.}, and for any batch $\mathcal{D}$, $\tilde{\nabla} f_i(\bfw,\mathcal{D})$ and $\tilde{\nabla}^2 f_i(\bfw,\mathcal{D})$ are the unbiased esimates of $\nabla f_i(\bfw)$ and $\nabla^2 f_i(\bfw)$ respectively, \ie,
	\begin{align}
		\tilde{\nabla}f_i(\bfw,\mathcal{D})&\coloneqq 	\frac{1}{|\mathcal{D}|}\sum_{(\bfx,\bfy)\in\mathcal{D}}\nabla\ell_i(\bfw;\bfx,\bfy)\\
		\tilde{\nabla}^2f_i(\bfw,\mathcal{D})&\coloneqq 	\frac{1}{|\mathcal{D}|}\sum_{(\bfx,\bfy)\in\mathcal{D}}\nabla^2\ell_i(\bfw;\bfx,\bfy).
	\end{align}
	
	\item \emph{Global aggregation:} After updating the local model parameter, each selected device sends its local model $\bfw^k_i=\bfw^{k,\tau-1}_i$ to the server. The server updates the global model by averaging over the received models, \ie,
	\begin{align}
		\label{eq:global_agregation}
		\bfw^{k+1}=\frac{1}{n_k}\sum_{i\in\mathcal{N}_k}\bfw^k_i.
	\end{align}         
\end{itemize}
It is easy to see that the main difference between federated learning and FML lies in the local update phase: In federated learning, local update is done using the unbiased gradient estimates while FML uses the biased one consisting of high-order information. Besides, federated learning can be considered as a special case of FML, \ie, FML under $\alpha=0$.

\subsection{Assumptions}
In this subsection, we list the standard assumptions for the analysis of FML algorithms \cite{fallah2020personalized,lin2020collaborative,yue2021inexact}.
\begin{assumption}[\textbf{Smoothness}]
	\label{assump:smoothness}
	The expected loss function $f_i$ corresponding to device $i\in\mathcal{N}$ is twice continuously differentiable and $L_i$-smooth, \ie,
	\begin{align}
		\quad \|\nabla f_i(\bfw_1)-\nabla f_i(\bfw_2)\|\le L_i\|\bfw_1 - \bfw_2\|,~ \forall \bfw_1,\bfw_2\in\mathbb{R}^d.
	\end{align}
	Besides, its gradient is bounded by a positive constant $\zeta_i $, \ie, $\|\nabla f_i(\bfw)\|\le \zeta_i$.
\end{assumption}
\begin{assumption}[\textbf{Lipschitz Hessian}]
	\label{assump:lipschitz_hessian}
	The Hessian of function $f_i$ is $\rho_i$-Lipschitz continuous for each $i\in\mathcal{N}$, \ie,
	\begin{align}
		\|\nabla^2 f_i(\bfw_1)-\nabla^2 f_i(\bfw_2)\|\le\rho_i\|\bfw_1-\bfw_2\|,~\forall \bfw_1,\bfw_2\in\mathbb{R}^d.
	\end{align}
\end{assumption}
\begin{assumption}[\textbf{Bounded Variance}]
	\label{assump:bounded_variance}
	Given any $\bfw\in\mathbb{R}^d$, the following facts hold for stochastic gradient $\nabla\ell_i (\bfw;\bfx,\bfy)$ and Hessians $\nabla^2\ell_i (\bfw;\bfx,\bfy)$ with $(\bfx,\bfy)\in\mathcal{X}\times\mathcal{Y}$
	\begin{align}
		\mathbb{E}_{(\bfx,\bfy)\sim P_i}\left[\left\|\nabla\ell_i (\bfw;\bfx,\bfy) - \nabla f_i(\bfw)\right\|^2\right] &\le \sigma^2_G\\
		\mathbb{E}_{(\bfx,\bfy)\sim P_i}\left[\left\|\nabla^2\ell_i (\bfw;\bfx,\bfy) - \nabla^2 f_i(\bfw)\right\|^2\right] &\le \sigma^2_H.
	\end{align}
\end{assumption}
\begin{assumption}[\textbf{Similarity}]
	\label{assump:similarity}
	For any $\bfw\in\mathbb{R}^d$ and $i,j\in\mathcal{N}$, there exist nonnegtive constants $\gamma_G\ge0$ and $\gamma_H\ge0$ such that the gradients and Hessians of the expected loss funtions $f_i(\bfw)$ and $f_j(\bfw)$ satisfy the following conditions
	\begin{align}
		\|\nabla f_i(\bfw)-\nabla f_j(\bfw)\|&\le\gamma_G\\
		\|\nabla^2 f_i(\bfw)-\nabla^2 f_j(\bfw)\|&\le\gamma_H.
	\end{align}
\end{assumption}
Assumption \ref{assump:lipschitz_hessian} implies the high-order smoothness of $f_i(\bfw)$ dealing with the second-order information in the local update step \eqref{eq:local_update}. Assumption \ref{assump:similarity} indicates that the variations of gradients between different devices are bounded by some constants, which captures the similarities between devices' tasks corresponding to non-IID data. It holds for many practical loss functions \cite{zhang2020fedpd}, such as logistic regression and hyperbolic tangent functions. In particular, $\psi^g_i$ and $\psi^h_i$ can be roughly seen as a distance between data distributions $P_i$ and $P_j$ \cite{fallah2020convergence}.

\section{Non-Uniform Federated Meta-Learning}
\label{sec:nufm}
Due to the uniform selection of devices in each round, the convergence rate of the standard FML algorithm is naturally slow. In this section, we present a non-uniform device selection scheme to tackle this challenge. 

\subsection{Device Contribution Quantification}
We begin with quantifying the contribution of each device to the reduction of one-round global loss using its dataset size and gradient norms. For convenience, define $\zeta\coloneqq\max_i \zeta_i $, $L\coloneqq\max_i L_i$, and $\rho\coloneqq\max_i \rho_i$. We first provide necessary lemmas before giving the main result.

\begin{lemma}
	\label{lem:smoothness_Fi}
	If Assumptions \ref{assump:smoothness} and \ref{assump:lipschitz_hessian} hold, local meta-function $F_i$ is smooth with parameter $L_F\coloneqq (1+\alpha L)^2L+\alpha\rho \zeta$.
\end{lemma}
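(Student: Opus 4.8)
The plan is to differentiate the composition defining $F_i$ and then bound the Lipschitz constant of the resulting gradient map directly. First I would apply the chain rule to $F_i(\bfw)\coloneqq f_i(\bfw-\alpha\nabla f_i(\bfw))$, which is justified because Assumption~\ref{assump:smoothness} guarantees $f_i$ is twice continuously differentiable; this gives
\begin{align}
\nabla F_i(\bfw) = \big(I - \alpha\nabla^2 f_i(\bfw)\big)\nabla f_i\big(\bfw - \alpha\nabla f_i(\bfw)\big).
\end{align}
It is convenient to abbreviate $A(\bfw)\coloneqq I - \alpha\nabla^2 f_i(\bfw)$ and $g(\bfw)\coloneqq \nabla f_i(\bfw - \alpha\nabla f_i(\bfw))$, so that $\nabla F_i(\bfw) = A(\bfw)g(\bfw)$.

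Next I would estimate $\|\nabla F_i(\bfw_1) - \nabla F_i(\bfw_2)\|$ by the standard add-and-subtract split
\begin{align}
\|A(\bfw_1)g(\bfw_1) - A(\bfw_2)g(\bfw_2)\| \le \|A(\bfw_1)\|\,\|g(\bfw_1) - g(\bfw_2)\| + \|A(\bfw_1) - A(\bfw_2)\|\,\|g(\bfw_2)\|,
\end{align}
and then bound the four factors separately using the assumptions. For $\|A(\bfw_1)\|$ I would use that $L$-smoothness of $f_i$ forces $\|\nabla^2 f_i\|\le L$, giving $\|A(\bfw_1)\|\le 1+\alpha L$. For $\|A(\bfw_1) - A(\bfw_2)\|$ the Lipschitz-Hessian Assumption~\ref{assump:lipschitz_hessian} yields the bound $\alpha\rho\|\bfw_1 - \bfw_2\|$. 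For $\|g(\bfw_2)\|$ the gradient bound in Assumption~\ref{assump:smoothness} gives $\|g(\bfw_2)\|\le\zeta$.

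The one factor needing a little more care is $\|g(\bfw_1) - g(\bfw_2)\|$: here I would first apply $L$-smoothness of $f_i$ to move the difference inside the gradient, then control the perturbed arguments via
\begin{align}
\big\|(\bfw_1 - \alpha\nabla f_i(\bfw_1)) - (\bfw_2 - \alpha\nabla f_i(\bfw_2))\big\| \le (1+\alpha L)\|\bfw_1 - \bfw_2\|,
\end{align}
again using $L$-smoothness, which produces $\|g(\bfw_1) - g(\bfw_2)\|\le L(1+\alpha L)\|\bfw_1 - \bfw_2\|$. Substituting the four bounds collapses the right-hand side to $\big[(1+\alpha L)^2 L + \alpha\rho\zeta\big]\|\bfw_1 - \bfw_2\|$, which is exactly $L_F\|\bfw_1 - \bfw_2\|$. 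The argument is essentially mechanical; the only place to stay attentive is keeping the two $(1+\alpha L)$ factors straight---one from $\|A(\bfw_1)\|$ and one from the Lipschitz constant of the inner map $\bfw\mapsto\bfw - \alpha\nabla f_i(\bfw)$---since their product is what yields the dominant $(1+\alpha L)^2 L$ term.
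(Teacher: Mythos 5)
Your proof is correct and follows essentially the same route as the paper: the identical add-and-subtract decomposition of $A(\bfw_1)g(\bfw_1)-A(\bfw_2)g(\bfw_2)$, with the same four bounds ($\|A\|\le 1+\alpha L$ from smoothness, $\alpha\rho\|\bfw_1-\bfw_2\|$ from the Lipschitz-Hessian assumption, $\|g\|\le\zeta$ from the gradient bound, and the $(1+\alpha L)L$ Lipschitz constant for the inner composition). The only cosmetic difference is that you state the chain-rule derivation of $\nabla F_i$ explicitly, which the paper leaves implicit.
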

\begin{proof}[Sketch of Proof]
We expand the expression of $\|\nabla F_i(\theta_1)-\nabla F_i(\theta_2)\|$ into two parts by triangle inequality, followed by bounding each part via Assumptions 1 and 2. The detailed proof is presented in Appendix \ref{proof:lem_smooth_Fi} of the technical report \cite{technicalreport}.
\end{proof}
Lemma \ref{lem:smoothness_Fi} gives the smoothness of the local meta-function $F_i$ and global loss function $F$. 

\begin{lemma}
	\label{lem:variance_Fi}
	Suppose that Assumptions \ref{assump:smoothness}-\ref{assump:bounded_variance} are satisfied, and $\mathcal{D}_i$, $\mathcal{D}'_i$, and $\mathcal{D}''_i$ are independent batches with sizes $D_i$, $D'_i$, and $D''_i$ respectively. For any $\bfw\in\mathbb{R}^d$, the following holds
	\begin{align}
		\left\| \mathbb{E}\left[\tilde{\nabla}F_i(\bfw) - \nabla F_i(\bfw)\right]\right\| &\le \frac{\alpha\sigma_G L(1+\alpha L)}{\sqrt{D_i}}\\
		\mathbb{E}\left[\left\| \tilde{\nabla}F_i(\bfw) - \nabla F_i(\bfw)\right\|^2\right]&\le \sigma^2_{F_i}
	\end{align}
	where $\sigma^2_{F_i}$ is denoted as
	\begin{align}
		\label{eq:sigma_Fi}
		\sigma^2_{F_i}\coloneqq&~ 6\sigma^2_G(1+\alpha L)^2\left(\frac{1}{D'_i}+\frac{(\alpha L)^2}{D_i}\right)+\frac{3(\alpha \zeta\sigma_H)^2}{D''_i}\nonumber\\
		&+ \frac{6(\alpha\sigma_G\sigma_H)^2}{D''_i}\left(\frac{1}{D'_i}+\frac{(\alpha L)^2}{D_i}\right).
	\end{align}
\end{lemma}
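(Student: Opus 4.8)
The plan is to bound the bias and the second moment separately, in both cases exploiting that $\mathcal{D}_i$, $\mathcal{D}'_i$, $\mathcal{D}''_i$ are mutually independent and that each of $\tilde{\nabla}f_i$, $\tilde{\nabla}^2 f_i$ is an unbiased estimate. For the bias, I would first use independence of $\mathcal{D}''_i$ from the other two batches to factor the expectation of the product defining $\tilde{\nabla}F_i(\bfw)$; the Hessian factor $\mathbb{E}[I - \alpha\tilde{\nabla}^2 f_i(\bfw,\mathcal{D}''_i)]$ then collapses to $I - \alpha\nabla^2 f_i(\bfw)$. For the surviving factor I would take the expectation over $\mathcal{D}'_i$ before that over $\mathcal{D}_i$, so that $\tilde{\nabla}f_i(\bfw - \alpha\tilde{\nabla}f_i(\bfw,\mathcal{D}_i),\mathcal{D}'_i)$ becomes $\nabla f_i(\bfw - \alpha\tilde{\nabla}f_i(\bfw,\mathcal{D}_i))$; all of the bias is then confined to the gap between this quantity, averaged over $\mathcal{D}_i$, and $\nabla f_i(\bfw - \alpha\nabla f_i(\bfw))$. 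I would close this using $L$-smoothness (Assumption \ref{assump:smoothness}), Jensen's inequality, and the batch-averaged gradient variance $\mathbb{E}\|\tilde{\nabla}f_i(\bfw,\mathcal{D}_i) - \nabla f_i(\bfw)\|^2 \le \sigma_G^2/D_i$ (Assumption \ref{assump:bounded_variance}), and finally multiply by $\|I - \alpha\nabla^2 f_i(\bfw)\| \le 1 + \alpha L$ to recover the stated bound.

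For the second moment I would insert a telescoping chain of three intermediate vectors between $\tilde{\nabla}F_i(\bfw)$ and $\nabla F_i(\bfw)$, each replacing one source of randomness by its population counterpart, in the order: the outer stochastic gradient, then the inner stochastic gradient inside the argument, and finally the stochastic Hessian. Applying $\|a+b+c\|^2 \le 3(\|a\|^2+\|b\|^2+\|c\|^2)$ (which conveniently sidesteps any cross-term analysis) reduces the task to three separate expectations. The pure-Hessian difference equals $\alpha(\nabla^2 f_i - \tilde{\nabla}^2 f_i)\nabla f_i(\bfw - \alpha\nabla f_i(\bfw))$, bounded by the Hessian variance $\sigma_H^2/D''_i$ and the gradient bound $\zeta$ to give the term $3(\alpha\zeta\sigma_H)^2/D''_i$. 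The other two differences both carry the random matrix factor $I - \alpha\tilde{\nabla}^2 f_i(\bfw,\mathcal{D}''_i)$ acting on a gradient-error vector; I would peel off this matrix using submultiplicativity of the operator norm and independence of $\mathcal{D}''_i$, then bound the two vector parts by $\sigma_G^2/D'_i$ (outer noise) and, via $L$-smoothness, by $(\alpha L)^2\sigma_G^2/D_i$ (inner error propagated through the argument).

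The crux, and the step that fixes the constants in $\sigma^2_{F_i}$, is controlling $\mathbb{E}\|I - \alpha\tilde{\nabla}^2 f_i(\bfw,\mathcal{D}''_i)\|^2$: writing $I - \alpha\tilde{\nabla}^2 f_i = (I - \alpha\nabla^2 f_i) + \alpha(\nabla^2 f_i - \tilde{\nabla}^2 f_i)$ and using $\|x+y\|^2 \le 2\|x\|^2 + 2\|y\|^2$ yields the bound $2(1+\alpha L)^2 + 2\alpha^2\sigma_H^2/D''_i$. Multiplying these two summands against the combined gradient-error factor $\sigma_G^2(1/D'_i + (\alpha L)^2/D_i)$ is exactly what produces the $6(1+\alpha L)^2\sigma_G^2(\cdots)$ and $6(\alpha\sigma_G\sigma_H)^2/D''_i(\cdots)$ terms of $\sigma^2_{F_i}$. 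I expect the main obstacle to be bookkeeping rather than any single inequality: one must keep the three batches' contributions cleanly separated so that independence (and hence each factorization of an expectation into a product of expectations) is legitimate at every step, and must resist collapsing the telescoping prematurely, since the order of replacement and the $3(\cdot)$ and $2(\cdot)$ splits together determine the leading constants.
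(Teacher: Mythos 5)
Your proof is correct and takes essentially the same route as the paper's: the paper writes $\tilde{\nabla}F_i(\bfw)$ as $\big(I-\alpha\nabla^2 f_i(\bfw)+\delta^*_1\big)\big(\nabla f_i(\bfw-\alpha\nabla f_i(\bfw))+\delta^*_2\big)$ and bounds the first and second moments of $\delta^*_1$ (Hessian noise) and $\delta^*_2$ (outer plus propagated inner gradient noise) using exactly your ingredients — batch independence, $L$-smoothness, the variance bounds, Jensen, submultiplicativity, and the $3(\cdot)$ and $2(\cdot)$ splits. Your telescoping grouping, which keeps the random matrix $I-\alpha\tilde{\nabla}^2 f_i(\bfw,\mathcal{D}''_i)$ and bounds its second moment by $2(1+\alpha L)^2+2(\alpha\sigma_H)^2/D''_i$, is an algebraic rearrangement of the paper's expansion into $\big(I-\alpha\nabla^2 f_i\big)\delta^*_2+\delta^*_1\nabla f_i(\cdot)+\delta^*_1\delta^*_2$, and both yield the identical constants in $\sigma^2_{F_i}$.
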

\begin{proof}[Sketch of Proof]
We first obtain
\begin{align*}
    \big\|\mathbb{E}\big[\tilde{\nabla}F_i(\bfw) &- \nabla F_i(\bfw)\big]\big\| \le \big\|\big(I - \alpha\nabla^2 f_i(\bfw)\big)\mathbb{E}\big[\delta^*_2\big] + \mathbb{E}\big[\delta^*_1\delta^*_2\big] \nonumber\\
    & + \mathbb{E}\big[\delta^*_1\big]\nabla f_i(\bfw - \alpha\nabla f_i(\bfw))\big\|,\\
    \mathbb{E}\big[\|\tilde{\nabla}F_i(\bfw) &- \nabla F_i(\bfw)\|^2\big]\le  3\Big(\mathbb{E}\big[\|\delta^*_1\|^2\big]\mathbb{E}\big[\|\delta^*_2\|^2\big] \nonumber\\
    & + \zeta^2\mathbb{E}\big[\|\delta^*_1\|^2\big] +  \left(1+\alpha L\right)^2\mathbb{E}\big[\|\delta^*_2\|^2\big]\Big),
\end{align*}
where $\delta^*_1$ and $\delta^*_2$ are given by
\begin{align*}
	\delta^*_1 &= \alpha\left(\nabla^2 f_i(\bfw) - \tilde{\nabla}^2 f_i(\bfw,\mathcal{D}''_i)\right)\\
	\delta^*_2 &= \tilde{\nabla} f_i\big(\bfw - \alpha\tilde{\nabla} f_i(\bfw,\mathcal{D}_i),\mathcal{D}'_i\big) - \nabla f_i\left(\bfw - \alpha\nabla f_i(\bfw)\right).
\end{align*}
Then, we derive the results via bounding the first and second moments of $\delta^*_1$ and $\delta^*_2$. The detailed proof is presented in Appendix \ref{proof:lem_variance_Fi} of the technical report \cite{technicalreport}.
\end{proof}
Lemma \ref{lem:variance_Fi} shows that the stochastic gradient $\tilde{\nabla}F_i(\bfw)$ is a biased estimate of $\nabla F_i(\bfw)$, revealing the challenges in analyzing FML algorithms.

\begin{lemma}
	\label{lem:similarity_Fi}
	If Assumptions \ref{assump:smoothness}, \ref{assump:lipschitz_hessian}, and \ref{assump:similarity} are satisfied, then for any $\bfw\in\mathbb{R}^d$ and $i,j\in\mathcal{N}$, we have
	\begin{align}
		\left\|\nabla F_i(\bfw) - \nabla F_j(\bfw)\right\|\le \left(1 + \alpha L\right)^2\gamma_G + \alpha \zeta \gamma_H.
	\end{align}
\end{lemma}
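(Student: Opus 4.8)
The plan is to begin by computing the gradient of the meta-function via the chain rule. Since $F_i(\bfw) = f_i(\bfw - \alpha\nabla f_i(\bfw))$, differentiating yields
\begin{align}
\nabla F_i(\bfw) = \big(I - \alpha\nabla^2 f_i(\bfw)\big)\nabla f_i\big(\bfw - \alpha\nabla f_i(\bfw)\big).
\end{align}
Abbreviating $A_i \coloneqq I - \alpha\nabla^2 f_i(\bfw)$ and $g_i \coloneqq \nabla f_i(\bfw - \alpha\nabla f_i(\bfw))$, the target quantity becomes $\|A_i g_i - A_j g_j\|$. First I would split this by inserting the cross term $A_i g_j$ and applying the triangle inequality,
\begin{align}
\|A_i g_i - A_j g_j\| \le \|A_i\|\,\|g_i - g_j\| + \|A_i - A_j\|\,\|g_j\|,
\end{align}
reducing the problem to bounding the four factors separately.

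Three of the factors are immediate. By Assumption~\ref{assump:smoothness}, $\|\nabla^2 f_i(\bfw)\| \le L$, so $\|A_i\| \le 1 + \alpha L$; the bounded-gradient part of the same assumption gives $\|g_j\| \le \zeta$; and Assumption~\ref{assump:similarity} directly yields $\|A_i - A_j\| = \alpha\|\nabla^2 f_i(\bfw) - \nabla^2 f_j(\bfw)\| \le \alpha\gamma_H$.

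The main obstacle is the term $\|g_i - g_j\|$, because $g_i$ and $g_j$ are gradients of \emph{different} functions evaluated at \emph{different} points, $u_i \coloneqq \bfw - \alpha\nabla f_i(\bfw)$ and $u_j \coloneqq \bfw - \alpha\nabla f_j(\bfw)$. I would handle this with a second add-and-subtract step, writing $g_i - g_j = [\nabla f_i(u_i) - \nabla f_i(u_j)] + [\nabla f_i(u_j) - \nabla f_j(u_j)]$. The first bracket is controlled by $L$-smoothness of $f_i$ together with the displacement bound $\|u_i - u_j\| = \alpha\|\nabla f_i(\bfw) - \nabla f_j(\bfw)\| \le \alpha\gamma_G$, giving the bound $\alpha L\gamma_G$; the second bracket is bounded directly by the gradient-similarity assumption, giving $\gamma_G$. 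Hence $\|g_i - g_j\| \le (1 + \alpha L)\gamma_G$. Substituting the four bounds back into the triangle-inequality split produces $(1+\alpha L)^2\gamma_G + \alpha\zeta\gamma_H$, which is exactly the claimed estimate.
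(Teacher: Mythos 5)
Your proposal is correct and follows essentially the same argument as the paper: an add-and-subtract product decomposition of $\|A_ig_i - A_jg_j\|$, with the Hessian-difference term bounded by $\alpha\zeta\gamma_H$ and the gradient-difference term bounded by $(1+\alpha L)^2\gamma_G$ via a second add-and-subtract combined with $L$-smoothness and the similarity assumption. The only (immaterial) difference is which cross term you insert ($A_ig_j$ rather than the paper's $A_jg_i$), which simply swaps the roles of $i$ and $j$ in the two factors.
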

\begin{proof}[Sketch of Proof]
We divide the bound of $\|\nabla F_i(\bfw) - \nabla F_j(\bfw)\|$ into two independent terms, followed by bounding the two terms separately. The detailed proof is presented in Appendix \ref{proof:lem_similarity_Fi} of the technical report \cite{technicalreport}.
\end{proof}
Lemma \ref{lem:similarity_Fi} characterizes the similarities between the local meta-functions, which is critical for analyzing the one-step global loss reduction because it relates the local meta-function and global objective. Based on Lemmas \ref{lem:smoothness_Fi}--\ref{lem:similarity_Fi}, we are now ready to give our main result.

\begin{theorem}
	\label{thm:gap}
	Suppose that Assumptions \ref{assump:smoothness}-\ref{assump:similarity} are satisfied, and $\mathcal{D}_i$, $\mathcal{D}'_i$, and $\mathcal{D}''_i$ are independent batches. If the local update and global aggregation follow \eqref{eq:local_update} and \eqref{eq:global_agregation} respectively, then the following fact holds true for $\tau=1$
	\begin{align}
		\label{eq:thm1_1}
		\mathbb{E}[F(\bfw^k) - F(\bfw^{k+1})]\ge\;& \beta\mathbb{E}\Bigg[\frac{1}{n_k}\sum_{i\in\mathcal{N}_k}\Bigg(\Big(1-\frac{L_F\beta}{2}\Big)\big\|\tilde{\nabla} F_i(\bfw^k)\big\|^2\nonumber\nonumber\\
		&-\bigg(\sqrt{\left(1 + \alpha L\right)^2\gamma_G + \alpha \zeta \gamma_H}+\sigma_{F_i}\bigg)\nonumber\\
		&\times\sqrt{ \mathbb{E}\Big[\big\|\tilde{\nabla} F_i(\bfw^k)\big\|^2\Big | \mathcal{N}_k\Big]}\Bigg)\Bigg]
	\end{align}
	where the outer expectation of RHS is taken with respect to the selected user set $\mathcal{N}_k$ and data sample sizes, and the inner expectation is only regarding data sample sizes.
\end{theorem}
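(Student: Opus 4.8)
The plan is to bound the one-round decrease by applying the $L_F$-smoothness of the global objective to the aggregated update and then decomposing the resulting descent direction into a true-gradient part, an inter-device heterogeneity bias, and a stochastic bias. Since $\tau=1$, the aggregation rule \eqref{eq:global_agregation} together with the single local step \eqref{eq:local_update} gives the clean form $\bfw^{k+1}-\bfw^k=-\frac{\beta}{n_k}\sum_{i\in\mathcal{N}_k}\tilde{\nabla}F_i(\bfw^k)$. First I would invoke the smoothness of $F$ from Lemma~\ref{lem:smoothness_Fi} to obtain the quadratic upper bound $F(\bfw^{k+1})\le F(\bfw^k)+\langle\nabla F(\bfw^k),\bfw^{k+1}-\bfw^k\rangle+\frac{L_F}{2}\|\bfw^{k+1}-\bfw^k\|^2$, and substitute the update to expose a linear (inner-product) term and a quadratic term. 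The convexity of $\|\cdot\|^2$ (Jensen over $\mathcal{N}_k$) bounds $\|\frac{1}{n_k}\sum_{i}\tilde{\nabla}F_i(\bfw^k)\|^2$ by $\frac{1}{n_k}\sum_i\|\tilde{\nabla}F_i(\bfw^k)\|^2$; this quadratic contribution, together with the $\|\tilde{\nabla}F_i(\bfw^k)\|^2$ extracted below, assembles the coefficient $(1-\frac{L_F\beta}{2})$ after factoring out $\beta$.

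Next I would treat the inner-product term device by device, writing $\langle\nabla F(\bfw^k),\tilde{\nabla}F_i(\bfw^k)\rangle=\|\tilde{\nabla}F_i(\bfw^k)\|^2+\langle\nabla F(\bfw^k)-\tilde{\nabla}F_i(\bfw^k),\tilde{\nabla}F_i(\bfw^k)\rangle$ and splitting the residual through the true local meta-gradient as $\nabla F(\bfw^k)-\tilde{\nabla}F_i(\bfw^k)=(\nabla F(\bfw^k)-\nabla F_i(\bfw^k))+(\nabla F_i(\bfw^k)-\tilde{\nabla}F_i(\bfw^k))$. Using $\nabla F=\frac{1}{n}\sum_{j\in\mathcal{N}}\nabla F_j$ and the triangle inequality, the first (deterministic, heterogeneity) difference is controlled by the similarity estimate of Lemma~\ref{lem:similarity_Fi}, which supplies the factor $\sqrt{(1+\alpha L)^2\gamma_G+\alpha\zeta\gamma_H}$, while the second (stochastic-bias) difference has conditional second moment at most $\sigma_{F_i}^2$ by Lemma~\ref{lem:variance_Fi}, supplying $\sigma_{F_i}$. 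Applying Cauchy--Schwarz to each residual inner product and then Jensen's inequality in the forms $\|\mathbb{E}[X]\|\le\sqrt{\mathbb{E}\|X\|^2}$ and $\mathbb{E}[\|Y\|\,\|X\|]\le\sqrt{\mathbb{E}\|Y\|^2}\sqrt{\mathbb{E}\|X\|^2}$ pulls out the common factor $\sqrt{\mathbb{E}[\|\tilde{\nabla}F_i(\bfw^k)\|^2\mid\mathcal{N}_k]}$ appearing in \eqref{eq:thm1_1}. Then I would take expectations in the order dictated by the statement: the inner expectation over the independent batches $\mathcal{D}_i,\mathcal{D}_i',\mathcal{D}_i''$ conditioned on $\mathcal{N}_k$, followed by the outer expectation over $\mathcal{N}_k$ and the random sample sizes, and collect the three contributions to reproduce \eqref{eq:thm1_1}.

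The hard part will be the nested-expectation bookkeeping in the presence of the \emph{biased} stochastic meta-gradient. Because Lemma~\ref{lem:variance_Fi} shows that $\tilde{\nabla}F_i(\bfw^k)$ is not unbiased, the residual inner product $\langle\nabla F_i-\tilde{\nabla}F_i,\tilde{\nabla}F_i\rangle$ does not vanish in conditional expectation and must be bounded through its second moment rather than cancelled; moreover the Cauchy--Schwarz/Jensen steps must act on the conditional (inner) expectation so that $\sqrt{\mathbb{E}[\|\tilde{\nabla}F_i(\bfw^k)\|^2\mid\mathcal{N}_k]}$ ends up inside, not outside, the outer expectation. A related point to verify carefully is the exact power of the similarity constant in \eqref{eq:thm1_1}: a direct Cauchy--Schwarz bound on the deterministic difference $\|\nabla F-\nabla F_i\|$ through Lemma~\ref{lem:similarity_Fi} yields the constant $(1+\alpha L)^2\gamma_G+\alpha\zeta\gamma_H$ itself, so recovering its square root presumably requires routing the heterogeneity term through an $L^2$/second-moment estimate, and I would track this constant explicitly. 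Finally, fixing $\tau=1$ is essential, since it keeps $\bfw^{k+1}-\bfw^k$ a clean average of single stochastic meta-gradients and avoids the local-drift accumulation that a multi-step ($\tau>1$) analysis would otherwise introduce.
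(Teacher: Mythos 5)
Your proposal follows essentially the same route as the paper's proof: $L_F$-smoothness applied to the aggregated single-step update, the per-device decomposition $\nabla F(\bfw^k)=(\nabla F(\bfw^k)-\nabla F_i(\bfw^k))+(\nabla F_i(\bfw^k)-\tilde{\nabla}F_i(\bfw^k))+\tilde{\nabla}F_i(\bfw^k)$, bounds on the two residuals via Lemmas \ref{lem:similarity_Fi} and \ref{lem:variance_Fi}, and a conditional Cauchy--Schwarz plus tower-rule argument that correctly places $\sqrt{\mathbb{E}[\|\tilde{\nabla}F_i(\bfw^k)\|^2\mid\mathcal{N}_k]}$ inside the outer expectation. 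The constant-power subtlety you flagged is handled exactly as you guessed: the paper routes the heterogeneity term through its second moment, bounding $\mathbb{E}[\|A_i\|^2]$ by $(1+\alpha L)^2\gamma_G+\alpha\zeta\gamma_H$ and then taking the square root of that bound inside the Cauchy--Schwarz step.
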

\begin{proof}[Sketch of Proof]
Using the smoothness condition of $F_i$, we express the lower bound of loss reduction by 
\begin{align*}
	\mathbb{E}[F(\bfw^k) - F(\bfw^{k+1})]\ge \mathbb{E}[G^k],
\end{align*}
where $G^k$ is defined as
\begin{align*}
	G^k\coloneqq\;&\beta\nabla F(\bfw^k)^\top\left(\frac{1}{n_k}\sum_{i\in\mathcal{N}_k}\tilde{\nabla}F_i(\bfw^k)\right)\\
	&- \frac{L_F\beta^2}{2}\left\|\frac{1}{n_k}\sum_{i\in\mathcal{N}_k}\tilde{\nabla}F_i(\bfw^k)\right\|^2.
\end{align*}
The key step to derive the desired result is providing a tight lower bound for the product of $\nabla F(\bfw^k)$ and $\tilde{\nabla} F(\bfw^k)$. The detailed proof is presented in Appendix \ref{proof:thm:gap} of the technical report \cite{technicalreport}.
\end{proof}
Theorem \ref{thm:gap} provides a lower bound on the one-round reduction of the global objective function $F$ based on the device selection. It implies that different user selection has varying impacts on the objective improvement and quantifies the contribution of each device to the objective improvement, depending on the variance of local meta-function, task similarities, smoothness, and learning rates. It therefore provides a criterion for selecting users to accelerate the convergence. 

From Theorem \ref{thm:gap}, we have the following Corollary, which simplifies the above result and extends it to multi-step cases.
\begin{corollary}
	\label{coro:gap}
	Suppose that Assumptions \ref{assump:smoothness}-\ref{assump:similarity} are satisfied, and $\mathcal{D}_i$, $\mathcal{D}'_i$, and $\mathcal{D}''_i$ are independent batches with $D_i=D'_i=D''_i$. If the local update and global aggregation follow \eqref{eq:local_update} and \eqref{eq:global_agregation} respectively, then the following fact holds true with $\beta\in[0,1/L_F)$
	\begin{align}
		\label{eq:lowerbound}
		\mathbb{E}\big[F(\bfw^k) - F(\bfw^{k+1})\big]
		\ge \frac{\beta}{2}\mathbb{E}\Bigg[\frac{1}{n_k}\sum_{i\in\mathcal{N}_k}\sum^{\tau-1}_{t=0}\Bigg(\big\|\tilde{\nabla} F_i(\bfw^{k,t}_i)\big\|^2 \nonumber\\
		- 2\big(\lambda_1+\frac{\lambda_2}{\sqrt{D_i}}\big)\sqrt{ \mathbb{E}\Big[\big\|\tilde{\nabla} F_i(\bfw^{k,t}_i)\big\|^2\Big | \mathcal{N}_k\Big]}\Bigg)\Bigg]
	\end{align}
	where positive constants $\lambda_1$ and $\lambda_2$ satisfy that
	\begin{align}
		\label{eq:lambda_1}
		\lambda_1\ge&~ \sqrt{\left(1 + \alpha L\right)^2\gamma_G + \alpha \zeta \gamma_H} + \beta\tau\sqrt{35 (\gamma^2_G + 2\sigma^2_F)}\\
		\label{eq:lambda_2}
		\lambda^2_2\ge&~
		6\sigma^2_G\left(1+(\alpha L)^2\right)\left((\alpha\sigma_H)^2+(1+\alpha L)^2\right)\nonumber\\
		&+3(\alpha\zeta\sigma_H)^2.
	\end{align}
\end{corollary}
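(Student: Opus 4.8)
The plan is to reproduce the mechanism behind Theorem~\ref{thm:gap}, but now to account for the $\tau$ local steps and to repackage the constants into $\lambda_1,\lambda_2$. Since $F$ inherits $L_F$-smoothness from Lemma~\ref{lem:smoothness_Fi}, I would start from the descent inequality $F(\bfw^{k+1})\le F(\bfw^k)+\langle\nabla F(\bfw^k),\bfw^{k+1}-\bfw^k\rangle+\tfrac{L_F}{2}\|\bfw^{k+1}-\bfw^k\|^2$ and substitute the aggregation identity $\bfw^{k+1}-\bfw^k=-\tfrac{\beta}{n_k}\sum_{i\in\mathcal{N}_k}\sum_{t=0}^{\tau-1}\tilde{\nabla}F_i(\bfw^{k,t}_i)$, which follows by telescoping \eqref{eq:local_update} and averaging through \eqref{eq:global_agregation}. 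This yields a positive inner-product term minus the quadratic remainder. Writing, for each pair $(i,t)$, $\langle\nabla F(\bfw^k),\tilde{\nabla}F_i(\bfw^{k,t}_i)\rangle=\|\tilde{\nabla}F_i(\bfw^{k,t}_i)\|^2+\langle\nabla F(\bfw^k)-\tilde{\nabla}F_i(\bfw^{k,t}_i),\tilde{\nabla}F_i(\bfw^{k,t}_i)\rangle$ isolates the squared-norm term that survives in \eqref{eq:lowerbound} together with a cross term to be controlled.

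For the cross term I would split the discrepancy into three pieces, namely $\nabla F(\bfw^k)-\tilde{\nabla}F_i(\bfw^{k,t}_i)=[\nabla F(\bfw^k)-\nabla F_i(\bfw^k)]+[\nabla F_i(\bfw^k)-\nabla F_i(\bfw^{k,t}_i)]+[\nabla F_i(\bfw^{k,t}_i)-\tilde{\nabla}F_i(\bfw^{k,t}_i)]$. The first (task-similarity) piece is bounded by $(1+\alpha L)^2\gamma_G+\alpha\zeta\gamma_H$ via Lemma~\ref{lem:similarity_Fi} and the triangle inequality over the average defining $\nabla F$. The third (bias/variance) piece is controlled in mean square by $\sigma_{F_i}$ through Lemma~\ref{lem:variance_Fi}. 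The second piece is the new ingredient absent from the $\tau=1$ case: it is the \emph{iterate drift}, which I would bound by $L_F\|\bfw^k-\bfw^{k,t}_i\|=L_F\beta\|\sum_{s=0}^{t-1}\tilde{\nabla}F_i(\bfw^{k,s}_i)\|$ using smoothness of $F_i$. A conditional Cauchy--Schwarz inequality then converts the whole cross term into the factor $\sqrt{\mathbb{E}[\|\tilde{\nabla}F_i(\bfw^{k,t}_i)\|^2\mid\mathcal{N}_k]}$ multiplied by a coefficient assembled from the three bounds, matching the structure of \eqref{eq:lowerbound}.

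It then remains to compress the coefficients. The $\lambda_2$ term is routine: setting $D_i=D'_i=D''_i$ in \eqref{eq:sigma_Fi} collapses the bracket $\tfrac{1}{D'_i}+\tfrac{(\alpha L)^2}{D_i}$ to $(1+(\alpha L)^2)/D_i$, and bounding the residual $1/D_i^2$ factor by $1/D_i$ (since $D_i\ge1$) gives $\sigma_{F_i}^2\le\lambda_2^2/D_i$, i.e.\ $\sigma_{F_i}\le\lambda_2/\sqrt{D_i}$ with exactly the stated $\lambda_2$. The similarity constant supplies the leading term of $\lambda_1$. The drift piece, after taking expectations, needs a uniform second-moment bound $\mathbb{E}\|\tilde{\nabla}F_i(\cdot)\|^2\le C(\gamma_G^2+\sigma_F^2)$ (obtained by splitting $\tilde{\nabla}F_i=\nabla F_i+(\tilde{\nabla}F_i-\nabla F_i)$ and invoking Lemma~\ref{lem:variance_Fi} together with the similarity and boundedness assumptions); feeding this into the accumulated drift over at most $\tau$ steps produces a contribution of order $\beta\tau\sqrt{\gamma_G^2+\sigma_F^2}$, which a single loose universal constant ($35$) is chosen to dominate. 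Finally, the quadratic remainder is handled by $\beta<1/L_F$: this keeps the coefficient of each $\|\tilde{\nabla}F_i(\bfw^{k,t}_i)\|^2$ at least $\tfrac12$ (exactly as in passing from $1-L_F\beta/2$ to $1/2$ in Theorem~\ref{thm:gap}), while its $\tau$-dependence is routed into the same linear coefficient $\lambda_1$ rather than into the squared-norm coefficient.

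The main obstacle is the second, drift, piece. Unlike the $\tau=1$ analysis, the stochastic meta-gradients are evaluated at the moving local iterates $\bfw^{k,t}_i$ rather than at $\bfw^k$, and by Lemma~\ref{lem:variance_Fi} these estimates are both biased and carry second-order (Hessian) information, so the drift cannot be dispatched by a single unbiased-gradient argument. Two points need care: (i) the conditioning/filtration must be arranged so that the mean-square variance bound of Lemma~\ref{lem:variance_Fi} legitimately applies at the \emph{random} point $\bfw^{k,t}_i$; and (ii) the $\tau$-scaling of both the drift and the quadratic remainder must be reconciled with the advertised range $\beta\in[0,1/L_F)$, which forces one to push the surplus $\tau$-factors into $\lambda_1$ (hence the $\beta\tau$ prefactor there) instead of degrading the $\tfrac12$ coefficient. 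Once these bounds are established, collecting the numerical constants such as $35$ is purely mechanical.
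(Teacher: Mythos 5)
Your plan diverges from the paper's proof in one structural choice, and that choice is fatal to the stated constants. You anchor everything at the round start: a single application of the descent lemma over the whole round, with the drift measured as $\bfw^k-\bfw^{k,t}_i$. The paper instead introduces virtual averaged iterates $\bfw^{k,t}=\frac{1}{n_k}\sum_{i\in\mathcal{N}_k}\bfw^{k,t}_i$, applies the descent lemma once \emph{per local step}, and measures drift relative to the average iterate, $C_i=\nabla F(\bfw^{k,t})-\nabla F(\bfw^{k,t}_i)$. The difference matters because your drift piece $\nabla F_i(\bfw^k)-\nabla F_i(\bfw^{k,t}_i)$ has size $L_F\beta\big\|\sum_{s<t}\tilde{\nabla}F_i(\bfw^{k,s}_i)\big\|$, i.e., it scales with the \emph{absolute} magnitude of the meta-gradients. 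The uniform second-moment bound you invoke, $\mathbb{E}\|\tilde{\nabla}F_i(\cdot)\|^2\le C(\gamma_G^2+\sigma_F^2)$, is false: Assumption \ref{assump:similarity} and Lemma \ref{lem:similarity_Fi} bound only \emph{differences} of gradients across devices, never the gradients themselves (take identical tasks and exact gradients, so $\gamma_G=\gamma_H=0$ and $\sigma_F=0$, while $\nabla F_i\neq0$ away from stationarity). The only bound available under the assumptions is $\mathbb{E}\|\tilde{\nabla}F_i\|^2\le 2(1+\alpha L)^2\zeta^2+2\sigma_{F_i}^2$ (Assumption \ref{assump:smoothness} plus Lemma \ref{lem:variance_Fi}), so your route produces a $\lambda_1$ containing $\zeta$, which does not establish \eqref{eq:lambda_1} with the heterogeneity-based term $\beta\tau\sqrt{35(\gamma_G^2+2\sigma_F^2)}$. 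In the paper's decomposition this problem never arises: the recursion for $a_t=\max_i\mathbb{E}\|\bfw^{k,t}-\bfw^{k,t}_i\|^2$ involves only $\tilde{\nabla}F_i(\bfw^{k,t}_i)-\frac{1}{n_k}\sum_j\tilde{\nabla}F_j(\bfw^{k,t}_j)$, a cross-device difference that Lemmas \ref{lem:variance_Fi} and \ref{lem:similarity_Fi} control by $\gamma_G^2+2\sigma_F^2$; that relative-drift recursion (imported from Fallah et al.) is exactly where the constant $35$ comes from.

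There is a second, independent problem with the single-descent-step route: the quadratic remainder is $\frac{L_F\beta^2}{2}\big\|\frac{1}{n_k}\sum_i\sum_t\tilde{\nabla}F_i(\bfw^{k,t}_i)\big\|^2$, and distributing it over the $\tau$ steps costs a factor $\tau$ via $\|\sum_t v_t\|^2\le\tau\sum_t\|v_t\|^2$, so each $\|\tilde{\nabla}F_i(\bfw^{k,t}_i)\|^2$ carries coefficient $1-\frac{L_F\beta\tau}{2}$ rather than $1-\frac{L_F\beta}{2}$. Your proposal to ``push the surplus $\tau$-factors into $\lambda_1$'' cannot work as described: a deficit multiplying $\|\tilde{\nabla}F_i\|^2$ cannot be dominated by a term linear in $\sqrt{\mathbb{E}[\|\tilde{\nabla}F_i\|^2\,|\,\mathcal{N}_k]}$ unless the second moment itself is uniformly bounded, and the only such bound again involves $\zeta$, not the parameters appearing in \eqref{eq:lambda_1}. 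The paper's per-step application of the descent lemma at the virtual iterates avoids the $\tau$-inflation entirely, keeping the coefficient at $1-\frac{L_F\beta}{2}\ge\frac{1}{2}$ for $\beta<1/L_F$. (As an aside, the paper's own drift bound quietly requires $\beta<1/(10L_F\tau)$ from the cited Corollary F.2, so even the paper's advertised range is generous; but your construction has the two additional failures above, and repairing them essentially forces you back to the paper's virtual-iterate, per-step decomposition.)
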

\begin{proof}[Sketch of Proof]
The proof is similar to Theorem \ref{thm:gap}, with additional tricks in bounding the product of $\nabla F(\bfw^k)$ and $\tilde{\nabla} F(\bfw^k)$.  The detailed proof is presented in Appendix \ref{proof:corollaries} of the technical report \cite{technicalreport}.
\end{proof}
Corollary \ref{coro:gap} implies that the device with a large gradient naturally accelerates the global loss decrease, but a small dataset size degrades the process due to corresponding high variance. Besides, as the device dissimilarities become large, the lower bound \eqref{eq:lowerbound} weakens. 

Motivated by Corollary \ref{coro:gap}, we study the device selection in the following subsection.

\subsection{Device Selection}
To improve the convergence speed, we aim to maximize the lower bound \eqref{eq:lowerbound} on the one-round objective reduction. Based on Corollary \ref{coro:gap}, we define the contribution $u^k_i$ device $i$ to the convergence in round $k$ as 
\begin{align}
	\label{eq:ui}
	u^k_i \coloneqq \sum^{\tau-1}_{t=0}\big\|\tilde{\nabla} F_i(\bfw^{k,t}_i)\big\|^2 - 2\big(\lambda_1+\frac{\lambda_2}{\sqrt{D_i}}\big)\big\|\tilde{\nabla} F_i(\bfw^{k,t}_i)\big\|
\end{align}
where we replace the second moment of $\tilde{\nabla} F_i(\bfw^{k,t}_i)$ by its sample value $\|\tilde{\nabla} F_i(\bfw^{k,t}_i)\|^2$. Then, the device selection problem in round $k$ can be formulated as
\begin{align}
	\label{prob:selection}
	\begin{split}
		\max_{\{z_i\}}~&\sum_{i\in\mathcal{N}}z_i u^k_i\\
		\mathrm{s.t.}~& \sum_{i\in\mathcal{N}}z_i= n_k\\
		&\, z_i\in\{0,1\},~\forall i\in\mathcal{N}.
	\end{split}
\end{align}
In \eqref{prob:selection}, $z_i$ is a binary variable: $z_i=1$ for selecting device $i$ in this round; $z_i=0$, otherwise. The solution of \eqref{prob:selection} can be found by the \emph{Select} algorithm introduced in \cite[Chapter 9]{cormen2009introduction} with the worst-case complexity $O(n)$ (the problem \eqref{prob:selection} is indeed a \emph{selection problem}). Accordingly, our device selection scheme is presented as follows.

\emph{Device selection:} Following the local update phase, instead of selecting devices uniformly at random, each device $i$ first computes its contribution scalar $u^k_i$ locally and sends it to the server. After receiving $\{u^k_i\}_{i\in\mathcal{N}}$ from all devices, the server runs the \emph{Select} algorithm and finds the optimal device set denoted by $\mathcal{N}^*_k$. Notably, although constants $\lambda_1$ and $\lambda_2$ in \eqref{prob:selection} consist of unknown parameters such as $L$, $\gamma_G$, and $\gamma_H$, they can be either estimated during training as in \cite{wang2019adaptive} or directly tuned as in our simulations.

Based on the device selection scheme, we propose the \emph{Non-Uniform Federated Meta-Learning (\algname{NUFM})} algorithm (depicted in Algorithm \ref{alg:nufm}). In particular, although \algname{NUFM} requires an additional communication phase to upload $u^k_i$ to the server, the communication overhead can be negligible because $u^k_i$ is just a scalar. 

\begin{algorithm}[ht]
	\caption{Non-Uniform Federated Meta-Learning (\algname{NUFM})}
	\label{alg:nufm}
	\LinesNumbered
	%\SetKwFunction{LocalUpdate}{LocalUpdate}
	\KwIn{$\alpha$, $\beta$, $\lambda_1$, $\lambda_2$}
	Server initializes model $\bfw^0$ and sends it to all devices\;
	\For{round $k=0$ \KwTo $K-1$}{
		\ForEach{device $i\in\mathcal{N}$}{
			\tcp{Local update}
			Initialize $\bfw^{k,0}_i \leftarrow \bfw^k$ and $u^k_i \leftarrow 0$\;
			\For{local step $t=0$ \KwTo $\tau-1$}{
				Compute stochastic gradient $\tilde{\nabla}F_i(\bfw^{k,t}_i)$ by \eqref{eq:gradient_estimate} using batches $\mathcal{D}_i$, $\mathcal{D}'_i$, and $\mathcal{D}''_i$\;
				Update local model $\bfw^{k,t+1}_i$ by \eqref{eq:local_update}\;
				Update contribution scalar $u^k_i$ by $\begin{aligned}[b]
					u^k_i \leftarrow&~ u^k_i + \|\tilde{\nabla} F_i(\bfw^{k,t}_i)\|^2\\ &- 2(\lambda_1+\tfrac{\lambda_2}{\sqrt{D_i}})\|\tilde{\nabla} F_i(\bfw^{k,t}_i)\|
				\end{aligned}$\;	
			}
			Set $\bfw^k_i = \bfw^{k,\tau}_i$ and send $u^k_i$ to server\;
		}
		\tcp{Device selection}
		Once receiving $\{u^k_i\}_{i\in\mathcal{N}}$, server computes optimal device selection $\mathcal{N}^*_k$ by solving \eqref{prob:selection}\;
		\tcp{Global aggregation}
		After receiving local models $\{\bfw^k_i\}_{i\in\mathcal{N}^*_k}$, server computes the global model by \eqref{eq:global_agregation}\;
	}
	\KwRet{$\bfw^K$}
\end{algorithm} 

\section{Federated Meta-Learning over Wireless Networks}
\label{sec:wireless_nufm}
In this section, we extend \algname{NUFM} to the context of multi-access wireless systems, where the bandwidth for uplink transmission and the power of IoT devices are limited. First, we present the system model followed by the problem formulation. Then, we decompose the original problem into two sub-problems and devise solutions for each of them with theoretical performance guarantees.

\subsection{System Model}
As illustrated in Fig. \ref{figure:system}, we consider a wireless multi-user system, where a set $\mathcal{N}$ of $n$ end devices joint forces to carry out federated meta-learning aided by an edge server. Each round consists of two stages: the computation phase and the communication phase. In the computation phase, each device $i\in\mathcal{N}$ downloads the current global model and computes the local model based on its local dataset; in the communication phase, the selected devices transmit the local models to the edge server via a limited number of wireless channels. After that, the edge server runs the global aggregation and starts the next round. Here we do not consider the downlink communication due to the asymmetric uplink-downlink settings in wireless networks. That is, the transmission power at the server (\eg, a base station) and the downlink communication bandwidth are generally sufficient for global meta-model transmission. Thus, the downlink time is usually neglected, compared to the uplink data transmission time \cite{dinh2020federated}. Since we focus on the device selection and resource allocation problem in each round, we omit subscript $k$ for brevity throughout this section.

\begin{figure}[ht]
	\centering
	\includegraphics[width=0.9\linewidth]{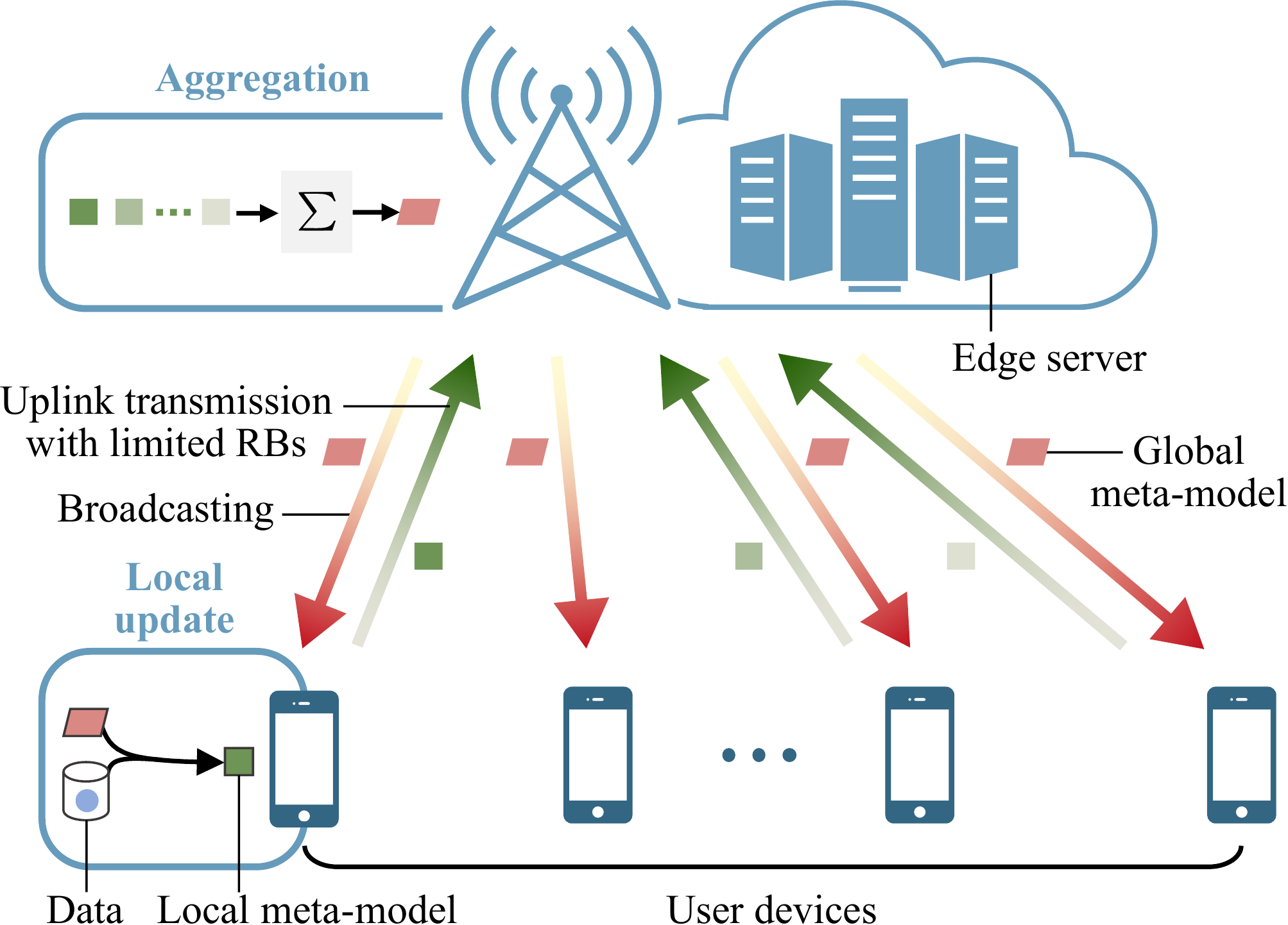}
	\caption{The architecture of federated meta-learning over a wireless network with multiple user devices and an edge server. Due to limited communication resources, only part of user devices can upload their local models in each training round.}
	\label{figure:system}
\end{figure}

\subsubsection{Computation Model}
We denote $c_i$ as the CPU cycles for device $i$ to update the model with one sample, which can be measured offline as a priori knowledge \cite{dinh2020federated}. Assume that the batch size of device $i$ used in local update phase \eqref{eq:local_update} is $D_i$. Then, the number of CPU cycles required for device $i$ to run a one-step local update is $c_i D_i$. We denote the CPU-cycle frequency of device $i$ as $\nu_i$. Thus, the CPU energy consumption of device $i$ in the computation during the local update phase can be expressed by
\begin{align}
	E^\mli{cp}_i(\nu_i) \coloneqq \frac{\iota_i}{2}\tau_i c_i D_i \nu^2_i
\end{align}
where $\iota_i/2$ is the effective capacitance coefficient of the computing chipset of device $i$ \cite{burd1996processor}. The computational time of device $i$ in a round can be denoted as
\begin{align}
	T^\mli{cp}_i(\nu_i)\coloneqq\frac{\tau_i c_i D_i}{\nu_i}.
\end{align}
For simplicity, we set $\tau=1$ in the following. 

\subsubsection{Communication Model} 
We consider a multi-access protocol for devices, \ie, the orthogonal frequency division multiple access (OFDMA) technique whereby each device can occupy one uplink resource block (RB) in a communication round to upload its local model. There are $M$ RBs in the system, denoted by $\mathcal{M}=\{1,2,\dots,M\}$. The achievable transmission rate of device $i$ is \cite{chen2020convergence}
\begin{align}
	\label{eq:transmission_rate}
	r_i(\bm{z}_i,p_i) \coloneqq \sum_{m\in\mathcal{M}}z_{i,m}B\log_2\left(1+\frac{h_i p_i}{I_{m} + B N_0}\right)
\end{align}
with $B$ being the bandwidth of each RB, $h_i$ the channel gain, $N_0$ the noise power spectral density, $p_i$ the transmission power of device $i$, and $I_{m}$ the interference caused by the devices that are located in other service areas and use the same RB. In \eqref{eq:transmission_rate}, $z_{i,m}\in\{0,1\}$ is a binary variable associated with the $m$-th RB allocation for device $i$: $z_{i,m}=1$ indicates that RB $m$ is allocated to device $i$, and $z_{i,m}=0$ otherwise. Each device can only occupy one RB at maximum while each RB can be accessed by at most one device, thereby we have
\begin{align}
	\label{eq:constraint_zm}
	\sum_{m\in\mathcal{M}}z_{i,m}&\le 1\quad\forall i\in\mathcal{N}\\
	\label{eq:constraint_zi}
	\sum_{i\in\mathcal{N}}z_{i,m}&\le 1\quad\forall m\in\mathcal{M}.
\end{align}

Due to the fixed dimension of model parameters, we assume that the model sizes of devices are constant throughout the learning process, denoted by $S$. If device $i$ is selected, the time duration of transmitting the model is given by
\begin{align}
	\label{eq:Tico}
	T^\mli{co}_i(\bm{z}_i,p_i) \coloneqq \frac{S}{r_i(\bm{z}_i,p_i)}
\end{align}
where $\bm{z}_i\coloneqq \{z_{i,m}\mid m\in\mathcal{M}\}$. Besides, the energy consumption of the transmission is
\begin{align}
	E^\mli{co}_i(\bm{z}_i,p_i) \coloneqq \sum_{m\in\mathcal{M}}z_{i,m}T^\mli{co}_i(\bm{z}_i,p_i) p_i.
\end{align}
If no RB is allocated to device $i$ in current round, its transmission power and energy consumption is zero.

\subsection{Problem Formulation}
For ease of exposition, we define $\bm{\nu}\coloneqq \{\nu_i\mid i\in\mathcal{N}\}$, $\bm{p}\coloneqq \{p_i\mid i\in\mathcal{N}\}$, and $\bm{z}\coloneqq\{z_{i,m}\mid i\in\mathcal{N},m\in\mathcal{M}\}$. Recall the procedure of \algname{NUFM}. The total energy consumption $E(\bm{z},\bm{p},\bm{\nu})$ and wall-clock time $T(\bm{z},\bm{p},\bm{\nu})$ in a round can be expressed by
\begin{align}
	\label{eq:E}
	E(\bm{z},\bm{p},\bm{\nu}) &\coloneqq \sum_{i\in\mathcal{I}}\Big(E^\mli{cp}_i(\nu_i) + E^\mli{co}_i(\bm{z}_i,p_i)\Big)\\
	\label{eq:T}
	T(\bm{z},\bm{p},\bm{\nu}) &\coloneqq \max_{i\in\mathcal{N}} T^\mli{cp}_i(\nu_i) + \max_{i\in\mathcal{N}} \sum_{m\in\mathcal{M}} z_{i,m}T^\mli{co}_i(\bm{z}_i,p_i)
\end{align}
where we neglect the communication time for transmitting the scalar $u_i$. The total contribution to the convergence is
\begin{align}
	\label{eq:U}
	U(\bm{z}) = \sum_{i\in\mathcal{N}}\sum_{m\in\mathcal{M}}z_{i,m}u_i
\end{align}
where $u_i$ is given in \eqref{eq:ui}\footnote{One can regularize $u_i$ via adding a large enough constant in \eqref{eq:ui} to keep it positive.}.

We consider the following non-convex mixed-integer non-linear programming (MINLP) problem
\begin{align}
	\mathrm{(P)}~\mathop{\max}_{\bm{z},\bm{p},\bm{\nu}}&~~U(\bm{z}) - \eta_1 E(\bm{z},\bm{p},\bm{\nu}) - \eta_2 T(\bm{z},\bm{p},\bm{\nu})\nonumber\\
	\mathrm{s.t.}\label{eq:constraints_pi}
	&~~0\le p_i\le p^\mli{max}_i,~\forall i\in\mathcal{N}\\
	\label{eq:constraints_nui}
	&~~0\le\nu_i\le\nu^\mli{max}_i,~\forall i\in\mathcal{N}\\
	\label{eq:constraint_z01}
	&~~z_{i,m}\in\{0,1\},~\forall i\in\mathcal{N},m\in\mathcal{M}\\
	&~\,\sum\nolimits_{i\in\mathcal{N}}z_{i,m}\le 1,~\forall m\in\mathcal{M}\tag{\ref{eq:constraint_zm}}\\
	&~\,\sum\nolimits_{m\in\mathcal{M}}z_{i,m}\le 1,~ \forall i\in\mathcal{N}\tag{\ref{eq:constraint_zi}}
\end{align}
where $\eta_1\ge0$ and $\eta_2\ge0$ are weight parameters to capture the Pareto-optimal trade-offs among convergence, latency, and energy consumption, the values of which depend on specific scenarios. Constraints \eqref{eq:constraints_pi} and \eqref{eq:constraints_nui} give the feasible regions of devices' transmission power levels and CPU-cycle frequencies, respectively. Constraints \eqref{eq:constraint_zm} and \eqref{eq:constraint_zi} restrict that each device can only access one uplink RB while each RB can be allocated to one device at most.

In this formulation, we aim to maximize the convergence speed of FML, while minimizing the energy consumption and wall-clock time in each round. Notably, our solution can adapt to the problem with hard constraints on energy consumption and wall-clock time as in \cite{chen2020joint} via setting ``virtual devices'' (see Lemma \ref{lem:sp1_nui} and Lemma \ref{lem:sp2_pi}). 

Next, we provide a joint device selection and resource allocation algorithm to solve this problem.

\subsection{A Joint Device Selection and Resource Allocation Algorithm}
Substituting \eqref{eq:E}, \eqref{eq:T}, and \eqref{eq:U} into problem (P), we can easily decompose the original problem into the following two sub-problems (SP1) and (SP2).
\begin{alignat*}{2}
	\mathrm{(SP1)}~&\mathop{\min}_{\bm{\nu}}&&~~ g_1(\bm{\nu})=\eta_1\sum_{i\in\mathcal{N}}\frac{\iota_i}{2}c_i D_i \nu^2_i + \eta_2\max_{i\in\mathcal{N}}\frac{c_i D_i}{\nu_i}\\
	~&~\,\mathrm{s.t.}&&~~ 0 \le \nu_i \le \nu^\mli{max}_i,~\forall i\in\mathcal{N}.\\
	\mathrm{(SP2)}~&\mathop{\max}_{\bm{z},\bm{p}}&&~~ g_2(\bm{z},\bm{p})= \sum_{i\in\mathcal{N}}\sum_{m\in\mathcal{M}}z_{i,m}u_i\\
	&~ &&~~-\sum_{i\in\mathcal{N}}\sum_{m\in\mathcal{M}}\eta_1\frac{S p_i}{B\log_2\left(1+\frac{h_i p_i}{I_{m} + BN_0}\right)}\\
	&~ &&~~ -\eta_2\max_{i\in\mathcal{N}}\sum_{m\in\mathcal{M}}z_{i,m}\frac{S}{B\log_2\left(1+\frac{h_i p_i}{I_{m} + BN_0}\right)}\\
	~&~\,\mathrm{s.t.} &&~~0 \le p_i \le p^\mli{max}_i,~\forall i\in\mathcal{N}\\
	~&~ &&~\,\sum\nolimits_{i\in\mathcal{N}}z_{i,m}\le 1,~\forall m\in\mathcal{M}\\
	~&~ &&~\, \sum\nolimits_{m\in\mathcal{M}}z_{i,m}\le 1,~ \forall i\in\mathcal{N}\\
	~&~ &&~~z_{i,m}\in\{0,1\},~\forall i\in\mathcal{N},m\in\mathcal{M}.
\end{alignat*}
(SP1) aims at controlling the CPU-cycle frequencies for devices to minimize the energy consumption and latency in the computational phase. (SP2) controls the transmission power and RB allocation to maximize the convergence speed while minimizing the transmission cost and communication delay. We provide the solutions to these two sub-problems separately.

\subsubsection{Solution to (SP1)}
\label{sec:solution_sp1} 
Denote the optimal CPU-cycle frequencies of (SP1) as $\bm{\nu}^* = \{\nu^*_i\}_{i\in\mathcal{N}}$. We first give the following lemma to offer insights into this sub-problem.
\begin{lemma}
	\label{lem:sp1_nui}
	If device $j$ is the straggler of all devices with optimal CPU frequencies $\bm{\nu}^*$, \ie, $j=\mathop{\arg\max}_{i\in\mathcal{N}}c_i D_i/\nu^*_i$, the following holds true for any $\eta_1,\eta_2>0$
	\begin{align}
		\label{eq:optimal_j}
		\nu^*_i=
		\begin{cases}
			\min\left\{\sqrt[3]{\frac{a_2}{2a_1}},\min_{i\in\mathcal{N}}\frac{c_j D_j \nu^\mli{max}_j}{c_i D_i}\right\}&,\mathrm{if}~i=j\\
			\frac{c_i D_i \nu^*_j}{c_j D_j}&,\mathrm{otherwise}.
		\end{cases}
	\end{align}
	The positive constants $a_1$ and $a_2$  in \eqref{eq:optimal_j} are defined as
	\begin{align}
		a_1&\coloneqq \eta_1\sum_{i\in\mathcal{N}}\frac{\iota_i (c_i D_i)^3}{2(c_j D_j)^2}\\
		a_2&\coloneqq \eta_2c_j D_j.
	\end{align}
\end{lemma}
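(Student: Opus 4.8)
The plan is to exploit the asymmetric roles of the two terms in $g_1$: the energy term $\eta_1\sum_{i\in\mathcal{N}}\frac{\iota_i}{2}c_iD_i\nu_i^2$ is separable and strictly increasing in each $\nu_i$, whereas the latency term $\eta_2\max_{i\in\mathcal{N}}c_iD_i/\nu_i$ depends only on the \emph{largest} computation time. My first step is to establish that, once the straggler $j$ is fixed, every other device should run exactly at the straggler's pace, i.e.\ $c_iD_i/\nu_i^* = c_jD_j/\nu_j^*$ for all $i\neq j$. Indeed, the max term equals the constant $c_jD_j/\nu_j^*$ and is insensitive to $\nu_i$ as long as $c_iD_i/\nu_i\le c_jD_j/\nu_j^*$; since the energy term is strictly increasing in $\nu_i$, any non-straggler running faster than this pace can have its frequency lowered until its time matches $c_jD_j/\nu_j^*$, which strictly decreases $g_1$ while leaving the max unchanged, contradicting optimality. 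This forces $\nu_i^* = c_iD_i\nu_j^*/(c_jD_j)$, the ``otherwise'' branch of \eqref{eq:optimal_j}.

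Next I would substitute this relation back into $g_1$ to reduce the problem to the single scalar $\nu_j^*$. The energy term collapses to $\eta_1\sum_{i\in\mathcal{N}}\frac{\iota_i}{2}c_iD_i\big(c_iD_i\nu_j^*/(c_jD_j)\big)^2 = a_1(\nu_j^*)^2$ with $a_1$ as defined in the lemma, and the latency term becomes $\eta_2 c_jD_j/\nu_j^* = a_2/\nu_j^*$. Hence along this optimal family the objective equals $h(\nu)\coloneqq a_1\nu^2 + a_2/\nu$, a one-dimensional function to be minimized over the admissible range of $\nu_j^*$.

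It then remains to determine that range and minimize $h$. Rewriting the box constraints $\nu_i^*\le\nu^\mli{max}_i$ through $\nu_i^* = c_iD_i\nu_j^*/(c_jD_j)$ gives $\nu_j^*\le c_jD_j\nu^\mli{max}_i/(c_iD_i)$ for every $i$ (the $i=j$ term recovering $\nu_j^*\le\nu^\mli{max}_j$), so the admissible upper bound is $\min_{i\in\mathcal{N}}c_jD_j\nu^\mli{max}_i/(c_iD_i)$, the second argument of the minimum in the first branch of \eqref{eq:optimal_j}. Since $h$ is strictly convex on $(0,\infty)$ — its second derivative is $2a_1+2a_2/\nu^3>0$ — with unconstrained stationary point solving $2a_1\nu = a_2/\nu^2$, i.e.\ $\nu = \sqrt[3]{a_2/(2a_1)}$, the constrained minimizer is the smaller of this stationary point and the upper bound, which is exactly the first branch of \eqref{eq:optimal_j}.

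The step I expect to be the main obstacle is making the equalization argument fully rigorous rather than merely intuitive. One must check that lowering a non-straggler's frequency to the equalizing value $c_iD_i\nu_j^*/(c_jD_j)$ stays feasible — this is precisely what induces the cap on $\nu_j^*$ above, and conversely any $\nu_j^*$ below that cap keeps all $\nu_i^*\le\nu^\mli{max}_i$ — and that no non-straggler is driven past the straggler's pace, which would alter the identity of the $\arg\max$ and break the reduction. Holding $j$ fixed as the straggler throughout, as the hypothesis of the lemma permits, and treating the latency term as the constant $c_jD_j/\nu_j^*$ during the per-device energy minimization is what keeps the two terms decoupled and legitimizes passing to the scalar problem $h$.
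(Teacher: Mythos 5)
Your proposal is correct and follows essentially the same route as the paper's proof: fix the straggler $j$, argue that each non-straggler's energy term is increasing in $\nu_i$ so it is optimal to equalize $c_iD_i/\nu_i^* = c_jD_j/\nu_j^*$ (the paper phrases this as per-device convex subproblems whose optima sit on the lower-bound constraint, you phrase it as an exchange/contradiction argument --- the same idea), then substitute to obtain the scalar objective $a_1\nu_j^2 + a_2/\nu_j$ and take the minimum of its stationary point $\sqrt[3]{a_2/(2a_1)}$ and the feasibility cap. Incidentally, your cap $\min_{i\in\mathcal{N}} c_jD_j\nu^{\mathrm{max}}_i/(c_iD_i)$ is the correct one implied by the constraints derived in the paper's own proof; the subscript $j$ on $\nu^{\mathrm{max}}$ in the lemma statement appears to be a typo.
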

\begin{proof}[Sketch of Proof]
The derivation of the results involves two steps, \ie, expressing $v^*_i$ by $v^*_j$ and deriving $v^*_j$ by solving the corresponding optimization problem. The detailed proof is presented in Appendix \ref{proof:lem_sp1_nui} of the technical report \cite{technicalreport}.
\end{proof}
Lemma \ref{lem:sp1_nui} implies that if the straggler (a device with the lowest computational time) can be determined, then the optimal CPU-cycle frequencies of all devices can be derived as closed-form solutions. Intuitively, due to the contradiction in minimizing the energy consumption and computational time, if the straggler is fixed, then the other devices can use the smallest CPU-cycle frequencies as long as the computational time is shorter than that of the straggler. It leads to the following Theorem. 
\begin{theorem}
	\label{thm:find_nu}
	Denote $\bm{\nu}^*_{\mli{straggler:}j}$ as the optimal solution (\ie, \eqref{eq:optimal_j}) under the assumption that $j$ is the straggler. Then, the global optimal solution of (SP1) can be obtained by
	\begin{align}
		\label{eq:find_nu}
		\bm{\nu}^*=\mathop{\arg\min}_{\bm{\nu}\in\mathcal{V}}g_1(\bm{\nu})
	\end{align}
	where $\mathcal{V}\coloneqq\{\bm{\nu}^*_{\mli{straggler:}j}\}_{j\in\mathcal{N}}$, and $g_1$ is the objective function in (SP1).
\end{theorem}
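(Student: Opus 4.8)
The plan is to show that the finite family $\mathcal{V}=\{\bm{\nu}^*_{\mli{straggler:}j}\}_{j\in\mathcal{N}}$ is guaranteed to contain a global optimizer of (SP1), so that the enumeration \eqref{eq:find_nu} is exhaustive. The single structural fact driving everything is that any feasible frequency vector has at least one device attaining $\max_{i\in\mathcal{N}}c_iD_i/\nu_i$; Lemma \ref{lem:sp1_nui} then pins down the entire optimal vector once this straggler is named. Hence only $n$ candidate vectors can possibly be optimal, and comparing their objective values settles the problem.

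First I would establish that a global minimizer exists. On the box $\prod_{i\in\mathcal{N}}[0,\nu^\mli{max}_i]$ the objective $g_1$ is continuous wherever it is finite, and the term $\eta_2\max_{i\in\mathcal{N}}c_iD_i/\nu_i$ forces $g_1\to+\infty$ as any $\nu_i\to 0^+$; by compactness the infimum is therefore attained at some feasible $\bm{\nu}^\dagger$ with strictly positive entries. I would then define its straggler $j^*\coloneqq\arg\max_{i\in\mathcal{N}}c_iD_i/\nu^\dagger_i$, breaking any tie arbitrarily.

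Next I would invoke Lemma \ref{lem:sp1_nui}. Since $\bm{\nu}^\dagger$ is globally optimal and $j^*$ is its straggler, the hypotheses of the lemma hold with $j=j^*$, so $\bm{\nu}^\dagger$ must coincide with the closed form \eqref{eq:optimal_j} evaluated at $j^*$, i.e. $\bm{\nu}^\dagger=\bm{\nu}^*_{\mli{straggler:}j^*}\in\mathcal{V}$. Every other candidate $\bm{\nu}^*_{\mli{straggler:}j}$ is feasible by construction, because the outer minimum in \eqref{eq:optimal_j} caps $\nu_j$ precisely so that the induced frequencies $\nu_i=c_iD_i\nu_j/(c_jD_j)$ respect the box constraints; global optimality of $\bm{\nu}^\dagger$ then gives $g_1(\bm{\nu}^*_{\mli{straggler:}j})\ge g_1(\bm{\nu}^\dagger)$ for every $j\in\mathcal{N}$. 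Combining the two facts yields $\min_{\bm{\nu}\in\mathcal{V}}g_1(\bm{\nu})=g_1(\bm{\nu}^\dagger)$ with the minimizing candidate equal to $\bm{\nu}^\dagger$, which is exactly the claim \eqref{eq:find_nu}.

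The heavy analytic lifting is already carried out in Lemma \ref{lem:sp1_nui}, so the main obstacle here is structural rather than computational: I must ensure the global minimizer exists and that its straggler is a well-defined index to which the lemma can be applied. The delicate point is that at the optimum the energy term pushes non-straggler devices to match the straggler's completion time $c_iD_i/\nu^\dagger_i$, so the $\arg\max$ defining the straggler is generally attained by several indices at once. I would handle this by noting that it suffices for a single maximizing index $j^*$ to reproduce $\bm{\nu}^\dagger$ through \eqref{eq:optimal_j}, which is precisely what the lemma guarantees, and that the verified feasibility of the remaining candidates prevents the enumerated minimum from dropping below the true optimum.
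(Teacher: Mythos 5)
Your proof is correct and follows essentially the same route as the paper, which simply states that the theorem ``can be directly obtained from Lemma~\ref{lem:sp1_nui}'' and omits the details. Your write-up supplies exactly the steps that one-line justification presupposes -- existence of a global minimizer via coercivity near $\nu_i=0$ and compactness, membership of that minimizer in $\mathcal{V}$ via the lemma (with ties in the straggler index handled correctly), and feasibility of the remaining candidates -- so it is a faithful, fleshed-out version of the intended argument.
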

\begin{proof}
The result can be directly obtained from Lemma \ref{lem:sp1_nui}. We omit it for brevity.
\end{proof}
Theorem \ref{thm:find_nu} shows that the optimal solution of (SP1) is the fixed-straggler solution in Lemma \ref{lem:sp1_nui} corresponding to the minimum objective $g_1$. Thus, (SP1) can be solved with computational complexity $O(n)$ by comparing the achievable objective values corresponding to different stragglers.

\subsubsection{Solution to (SP2)}
Similar to Section~\ref{sec:solution_sp1}, we denote the optimal solutions of (SP2) as $\bm{z}^*=\{z^*_{i,m}\mid i\in\mathcal{N},m\in\mathcal{M}\}$ and  $\bm{p}^*=\{p^*_i\mid i\in\mathcal{N}\}$ respectively, $\mathcal{N}^*\coloneqq\{i\in\mathcal{N}\mid \sum_{m\in\mathcal{N}}z^*_{i,m}=1\}$ as the optimal set of selected devices and, for each $i\in\mathcal{N}^*$,  RB block allocated to $i$ as $m^*_i$, \ie, $z^*_{i,m^*_i}=1$. 

It is challenging to derive a closed-form solution for (SP2) because it is a non-convex MINLP problem with non-differentiable ``max'' operator in the objective. Thus, in the following, we develop an iterative algorithm to solve this problem and show that the algorithm will converge to a local minimum.

We begin with analyzing the properties of the optimal solution in the next lemma. 
\begin{lemma}
	\label{lemma:delta}
	Denote the transmission delay regarding $\bm{z}^*$ and $\bm{p}^*$ as $\delta^*$, \ie,
	\begin{align}
		\label{eq:definition_Delta}
		\delta^*\coloneqq\max_{i\in\mathcal{N}}\sum_{m\in\mathcal{M}}z^*_{i,m}\frac{S}{B\log_2\left(1+\frac{h_i p^*_i}{I_{m} + BN_0}\right)}.
	\end{align}
	The following relation holds
	\begin{align}
		%\label{eq:Delta_1}
		%\delta^*&\ge\sum_{m\in\mathcal{M}}z^*_{i,m}\frac{S}{B\log_2\left(1+\frac{h_i p^\mli{max}_i}{I_{m} + BN_0}\right)},\\
		\label{eq:Delta_3}
		& \delta^*\ge\frac{S}{B\log_2\left(1+\frac{h_i p^\mli{max}_i}{I_{m} + BN_0}\right)}
	\end{align}
	and $p^*_i$ can be expressed by
	\begin{align}
		\label{eq:Delta_2}
		p^*_i =
		\begin{cases}
			\frac{(I_{m^*_i}+BN_0)(2^\frac{S}{B\delta^*}-1)}{h_i}&,~\mathrm{if}~i\in\mathcal{N}^*\\
			0&,~\mathrm{otherwise}.
		\end{cases}
	\end{align}
\end{lemma}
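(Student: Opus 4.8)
The plan is to prove the power characterization \eqref{eq:Delta_2} first, by a perturbation (exchange) argument on the optimal pair $(\bm{z}^*,\bm{p}^*)$, and then to obtain the bound \eqref{eq:Delta_3} as a direct consequence of primal feasibility $p^*_i\le p^\mli{max}_i$. The second branch of \eqref{eq:Delta_2} is immediate: for $i\notin\mathcal{N}^*$ we have $\sum_{m}z^*_{i,m}=0$, so no RB is allocated and the transmission power is zero by convention, giving $p^*_i=0$.

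For the selected devices, the key claim is that at optimality every $i\in\mathcal{N}^*$ transmits in exactly $\delta^*$, \ie, $T^\mli{co}_i(\bm{z}^*_i,p^*_i)=\delta^*$. I would argue by contradiction: suppose some selected device $i$ is \emph{not} the communication straggler, so $T^\mli{co}_i(\bm{z}^*_i,p^*_i)<\delta^*$. The decisive structural fact is that, with RB $m^*_i$ fixed, the per-device transmission time $T^\mli{co}_i(p)=S/\big(B\log_2(1+h_ip/(I_{m^*_i}+BN_0))\big)$ is strictly decreasing in $p$, while the per-device transmission energy $E^\mli{co}_i(p)=Sp/\big(B\log_2(1+h_ip/(I_{m^*_i}+BN_0))\big)$ is strictly \emph{increasing} in $p$ on $[0,p^\mli{max}_i]$. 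I would then lower $p^*_i$ to the unique $\tilde p_i$ with $T^\mli{co}_i(\tilde p_i)=\delta^*$; strict monotonicity of $T^\mli{co}_i$ forces $0<\tilde p_i<p^*_i$. This perturbation leaves the allocation $\bm{z}^*$ (hence $U(\bm{z}^*)$) untouched, does not increase the max-delay term since device $i$ was not the straggler and its new delay is exactly $\delta^*$, and strictly decreases the energy term because $E^\mli{co}_i$ is strictly increasing. Hence $g_2$ strictly increases, contradicting optimality. Therefore $T^\mli{co}_i(\bm{z}^*_i,p^*_i)=\delta^*$ for all $i\in\mathcal{N}^*$, and inverting $S/\big(B\log_2(1+h_ip^*_i/(I_{m^*_i}+BN_0))\big)=\delta^*$ algebraically yields the first branch of \eqref{eq:Delta_2}. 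Finally, substituting this closed form into $p^*_i\le p^\mli{max}_i$ gives $2^{S/(B\delta^*)}\le 1+h_ip^\mli{max}_i/(I_{m^*_i}+BN_0)$, and taking base-$2$ logarithms rearranges to $\delta^*\ge S/\big(B\log_2(1+h_ip^\mli{max}_i/(I_{m^*_i}+BN_0))\big)$, which is \eqref{eq:Delta_3} with $m=m^*_i$.

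The hard part will be the monotonicity of $E^\mli{co}_i$ in $p$, since it is a ratio whose numerator $p$ grows while the reciprocal-rate factor shrinks, so strict increase is not obvious a priori. I would settle it with the short calculus step of writing $u\coloneqq h_ip/(I_{m^*_i}+BN_0)$ and checking that the sign of $\tfrac{d}{dp}E^\mli{co}_i$ matches the sign of $\ln(1+u)-u/(1+u)$, which vanishes at $u=0$ and has derivative $u/(1+u)^2>0$, hence is positive for every $u>0$. Everything else is a standard exchange argument together with the algebraic inversion of the rate formula, so once this monotonicity lemma is in hand the remaining steps are routine.
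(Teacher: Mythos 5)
Your proposal is correct and follows essentially the same route as the paper's proof: both rest on the single calculus fact that $p \mapsto p/\log_2(1+p)$ (\ie, transmission energy as a function of power) is strictly increasing — your computation with $\ln(1+u)-u/(1+u)$ is exactly the paper's computation of $g_3'$ and its numerator $f_3$ — so that each selected device must use the least power achieving delay $\delta^*$, after which inverting the rate formula gives \eqref{eq:Delta_2} and feasibility $p^*_i\le p^\mli{max}_i$ gives \eqref{eq:Delta_3}. The only difference is presentational: the paper fixes $(\bm{z}^*,\delta^*)$ and poses a per-device energy-minimization subproblem whose delay constraint binds by monotonicity, while you reach the same binding-constraint conclusion via an explicit perturbation argument on the optimum of (SP2), which in fact supplies the justification (left implicit in the paper) for why $p^*_i$ must solve that decomposed subproblem in the first place.
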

\begin{proof}[Sketch of Proof]
We prove \eqref{eq:Delta_3} by contradiction and \eqref{eq:Delta_2} by solving the corresponding transformed problem. The detailed proof is presented in Appendix \ref{proof:lem_delta} of the technical report \cite{technicalreport}.
\end{proof}
Lemma \ref{lemma:delta} indicates that the optimal transmission power can be derived as a closed-form via \eqref{eq:Delta_2}, given the RB allocation and transmission delay. Lemma \ref{lemma:delta} also implies that for any RB allocation strategy $\tilde{\bm{z}}$ and transmission delay $\tilde{\delta}$ (not necessarily optimal), equation \eqref{eq:Delta_2} provides the ``optimal'' transmission power under $\tilde{\bm{z}}$ and $\tilde{\delta}$ as long as \eqref{eq:Delta_3} is satisfied. Based on that, we have the following result.
\begin{theorem}
	\label{thm:find_z}
	Denote $\mu_{i,m}\coloneqq(I_m+BN_0)(2^\frac{S}{B\delta^*}-1)/h_i$. Given transmission delay $\delta^*$, the optimal RB allocation strategy can be obtained by
	\begin{align}
		\label{eq:find_z}
		\bm{z}^*=\mathop{\arg\max}_{\bm{z}}\bigg\{\sum_{i,m}z_{i,m}\left(u_i- e_{i,m}\right),~\mathrm{s.t.}~\eqref{eq:constraint_zm}-\eqref{eq:constraint_z01}\bigg\}
	\end{align}
	where 
	\begin{align}
		e_{i,m}\coloneqq
		\begin{cases}
			\eta_1\delta^*\mu_{i,m}&,~\mathrm{if}~\mu_{i,m}\le p^\mli{max}_i\\
			u_i + 1&,~\mathrm{otherwise}.
		\end{cases}
	\end{align}
\end{theorem}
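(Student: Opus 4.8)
The plan is to condition on the transmission delay being fixed at $\delta^*$ and to eliminate the power variables $\bm{p}$ by means of Lemma \ref{lemma:delta}, thereby collapsing (SP2) into a pure binary assignment problem over $\bm{z}$. First I would invoke the second part of Lemma \ref{lemma:delta}: for any RB allocation and the given delay $\delta^*$, the energy-minimizing power of a device $i$ assigned to RB $m$ is exactly $p_i=\mu_{i,m}$, namely the power that makes the transmission occupy the full budget $T^\mli{co}_i=\delta^*$. This choice is power-feasible (\ie, $p_i\le p^\mli{max}_i$) precisely when \eqref{eq:Delta_3} holds, and rearranging \eqref{eq:Delta_3} shows that this is equivalent to $\mu_{i,m}\le p^\mli{max}_i$.

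Next I would substitute $p_i=\mu_{i,m}$ back into $g_2$. At this power level $B\log_2(1+h_ip_i/(I_m+BN_0))=S/\delta^*$, so the per-device energy cost $\eta_1 S p_i/(B\log_2(\cdots))$ collapses to $\eta_1\delta^*\mu_{i,m}$, while the delay term $\eta_2\max_{i}\sum_m z_{i,m}T^\mli{co}_i$ equals the constant $\eta_2\delta^*$ because every selected device transmits at delay exactly $\delta^*$. Hence $g_2$ reduces to $\sum_{i,m}z_{i,m}(u_i-\eta_1\delta^*\mu_{i,m})-\eta_2\delta^*$, and since the last term is independent of $\bm{z}$, maximizing $g_2$ is equivalent to maximizing the linear objective $\sum_{i,m}z_{i,m}(u_i-\eta_1\delta^*\mu_{i,m})$ subject to \eqref{eq:constraint_zm}--\eqref{eq:constraint_z01}.

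It remains to encode the power-feasibility restriction $\mu_{i,m}\le p^\mli{max}_i$ inside the assignment problem. For an infeasible pair (\ie, $\mu_{i,m}> p^\mli{max}_i$, where device $i$ cannot meet delay $\delta^*$ on RB $m$ even at full power) I would set the coefficient to $e_{i,m}=u_i+1$, making the net reward $u_i-e_{i,m}=-1<0$; for a feasible pair I keep $e_{i,m}=\eta_1\delta^*\mu_{i,m}$, which matches the true substituted cost. Because constraints \eqref{eq:constraint_zm}--\eqref{eq:constraint_z01} are one-sided inequalities, no device is forced to be matched, so any maximizer of \eqref{eq:find_z} assigns $z_{i,m}=0$ to every pair with a negative coefficient and in particular never selects an infeasible pair. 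This yields the claimed characterization of $\bm{z}^*$.

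The main obstacle is to argue rigorously that the penalty mechanism reproduces the \emph{exact} feasible region of (SP2) rather than merely a relaxation of it. I would close this by a two-way comparison: every (SP2)-feasible allocation at delay $\delta^*$ uses only feasible pairs and so attains the same value in \eqref{eq:find_z}, while every optimizer of \eqref{eq:find_z} avoids the negative-coefficient pairs and is therefore (SP2)-feasible; hence the two problems share an optimal allocation and agree in value up to the additive constant $-\eta_2\delta^*$. A minor point to dispatch is the degenerate empty allocation, for which the delay contribution is $0$ rather than $\delta^*$; this is harmless since its assignment objective is $0$ and is dominated as soon as some feasible pair carries positive net reward.
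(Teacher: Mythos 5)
Your proposal is correct and follows essentially the same route as the paper's own (much terser) proof: fix $\delta^*$, use Lemma \ref{lemma:delta} to pin the optimal power at $p_i=\mu_{i,m}$ so the energy term collapses to $\eta_1\delta^*\mu_{i,m}$ and the delay term becomes a constant, then penalize power-infeasible pairs via $e_{i,m}=u_i+1$ so they are never selected. Your additional care with the two-way comparison and the empty-allocation edge case only makes explicit what the paper leaves implicit.
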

\begin{proof}
From Lemma \ref{lemma:delta}, Eq. \eqref{eq:find_z} holds if $\mu_{i,m}\le p^\mli{max}_i$. On the other hand, when $\mu_{i,m}> p^\mli{max}_i$, if device $i$ is selected, the transmission delay will be larger than $\delta^*$ (see the proof of Lemma \ref{lemma:delta}), which is contradictory to the given condition. Thus, when $\mu_{i,m}> p^\mli{max}_i$, we set $e_{i,m}=u_i + 1$, ensuring device $i$ not to be selected.
%The detailed proof is presented in Appendix \ref{proof:thm_find_z} of the technical report \cite{technicalreport}.
\end{proof}
Theorem \ref{thm:find_z} shows the optimal RB allocation strategy can be obtained by solving \eqref{eq:find_z}, given transmission delay $\delta^*$. Naturally, problem \eqref{eq:find_z} can be equivalently transformed to a bipartite matching problem. Consider a \emph{Bipartite Graph} $\mathcal{G}$ with source set $\mathcal{N}$ and destination set $\mathcal{M}$. For each $i\in\mathcal{N}$ and $m\in\mathcal{M}$, denote the weight of the edge from node $i$ to node $j$ as $w_{i\rightarrow j}$: If $u_i- e_{i,m}>0$, $w_{i\rightarrow j}=e_{i,m}-u_i$; otherwise, $w_{i\rightarrow j}=\infty$. Therefore, maximizing \eqref{eq:find_z} is equivalent to finding a matching in $\mathcal{G}$ with the minimum sum of weights. It means that we can obtain the optimal RB allocation strategy under fixed transmission delay via \emph{Kuhn-Munkres} algorithm with the worst complexity of $O(Mn^2)$ \cite{weisstein2011hungarian}.

We proceed to show how to iteratively approximate the optimal $\delta^*$, $\bm{p}^*$, and $\bm{z}^*$. 
\begin{lemma}
	\label{lem:sp2_pi}
	Let $j$ denote the communication straggler among all selected devices with respect to RB allocation $\bm{z}^*$ and transmission power $\bm{p}^*$, \ie, for any $i\in\mathcal{N}^*$, 
	\begin{align}
		\label{eq:straggler_co}
		&\sum_{m\in\mathcal{M}}z^*_{i,m}\underbrace{\frac{S}{B\log_2\left(1+\frac{h_i p^*_i}{I_{m} + BN_0}\right)}}_{T^\mli{co}_i(\bm{z}^*_i,p^*_i)}\nonumber\\
		&\le \sum_{m\in\mathcal{M}}z^*_{j,m}\underbrace{\frac{S}{B\log_2\left(1+\frac{h_j p^*_j}{I_{m} + BN_0}\right)}}_{T^\mli{co}_j(\bm{z}^*_j,p^*_j)}.
	\end{align}
	Then, the following holds true:
	\begin{enumerate}[label=\arabic*),leftmargin=*]
		\item Define function $f_4(p) \coloneqq b_1\big((1+p)\log_2(1+p)\ln2-p\big)-\eta_2$. Then, $f_4(p)$ is monotonically increasing with respect to $p\ge0$, and has unique zero point $\tilde{p}^0_j\in(0,b_2]$, where $b_1$ and $b_2$ are denoted as follows
		\begin{align}
		\label{eq:b1_lem6}
	   	b_1&\coloneqq\eta_1\sum_{i\in\mathcal{N}^*}\frac{I_{m^*_i} + BN_0}{h_i}\\
		\label{eq:b2_lem6}	b_2&\coloneqq2^{(1+\sqrt{\max\{\frac{\eta_2}{b_1},1\}-1})/\ln2};
		\end{align}
		\item Denote $(\mli{SNR})_j\coloneqq(I_{m^*_j}+BN_0)/h_j$. For $i\in\mathcal{N}^*$, we have
		\begin{align}
			\label{eq:optimal_pi}
			p^*_i=
			\begin{cases}
				\min\left\{\tilde{p}^0_j,\min_{i\in\mathcal{N}^*}\frac{h_i p^\mli{max}_i}{I_{m^*_i} + BN_0}\right\}&,\mathrm{if}~i=j\\
				\frac{(\mli{SNR})_i}{(\mli{SNR})_j}p^*_j&,\mathrm{otherwise}.
			\end{cases}
		\end{align}
	\end{enumerate}
\end{lemma}
\begin{proof}[Sketch of Proof]
We obtain the first result by analyzing the property of $f_4$ and derive \eqref{eq:optimal_pi} via solving the corresponding optimization problem. The detailed proof is presented in Appendix \ref{proof:lem_sp2_pi} of the technical report \cite{technicalreport}.
\end{proof}
Lemma \ref{lem:sp2_pi} indicates that, given optimal RB allocation strategy $\bm{z}^*$ and straggler, the optimal transmission power can be derived by \eqref{eq:optimal_pi}, different from Lemma \ref{lemma:delta} that requires the corresponding transmission delay $\delta^*$. Notably, in \eqref{eq:optimal_pi}, we can obtain zero point $\tilde{p}^0_j$ of $f_4$ with any required tolerance $\epsilon$ by \emph{Bisection method} in $\log_2(\frac{b_2}{\epsilon})$ iterations at most.

Similar to Theorem \ref{thm:find_nu}, we can find the optimal transmission power by the following theorem, given the RB allocation.
\begin{theorem}
	\label{thm:find_pi}
	Denote $\bm{p}^*_{\mli{straggler:}j}$ as the optimal solution under the assumption that $j$ is the communication straggler, given fixed RB allocation $\bm{z}^*$. The corresponding optimal transmission power is given by 
	\begin{align}
		\label{eq:find_optimal_p}
		\bm{p}^*=\mathop{\arg\max}_{\bm{p}\in\mathcal{P}}g_2(\bm{z}^*,\bm{p})
	\end{align}
	where $\mathcal{P}\coloneqq\{\bm{p}^*_{\mli{straggler:}j}\}_{j\in\mathcal{N}}$ and $g_2$ is the objective function defined in (SP2).
\end{theorem}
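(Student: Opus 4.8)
The plan is to establish this theorem as an immediate consequence of Lemma~\ref{lem:sp2_pi}, in exact parallel to how Theorem~\ref{thm:find_nu} follows from Lemma~\ref{lem:sp1_nui}. The core idea is an enumeration over all possible communication stragglers: although Lemma~\ref{lem:sp2_pi} yields the optimal transmission power in closed form \emph{once the straggler is known}, the identity of the straggler is not known a priori, so I would sweep over every candidate $j\in\mathcal{N}$ and let the objective $g_2$ itself select the correct one.

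First I would observe that, for the fixed optimal allocation $\bm{z}^*$, the true optimal power vector $\bm{p}^*$ induces a well-defined communication straggler $j^*\in\mathcal{N}^*$, namely the selected device attaining the maximum transmission time $\max_{i\in\mathcal{N}^*}T^\mli{co}_i(\bm{z}^*_i,p^*_i)$ (if $\mathcal{N}^*=\emptyset$ the claim is trivial with $\bm{p}^*=\bm{0}$). Applying Lemma~\ref{lem:sp2_pi} with this $j^*$, the closed form~\eqref{eq:optimal_pi} is both necessary and sufficient for optimality conditional on $j^*$ being the straggler; hence $\bm{p}^*=\bm{p}^*_{\mli{straggler:}j^*}$, and therefore $\bm{p}^*\in\mathcal{P}$. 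Next, since $\bm{p}^*$ maximizes $g_2(\bm{z}^*,\cdot)$ over the entire feasible region, it in particular maximizes $g_2(\bm{z}^*,\cdot)$ over the subset $\mathcal{P}$; combined with $\bm{p}^*\in\mathcal{P}$, this yields $\bm{p}^*=\mathop{\arg\max}_{\bm{p}\in\mathcal{P}}g_2(\bm{z}^*,\bm{p})$, which is precisely~\eqref{eq:find_optimal_p}.

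The one point requiring care is that the candidates $\{\bm{p}^*_{\mli{straggler:}j}\}_{j\in\mathcal{N}}$ generated under a \emph{hypothesized} straggler need not actually have $j$ as their straggler, and need not even be mutually consistent. I would argue this is harmless: the construction in~\eqref{eq:optimal_pi} caps the straggler power by $\min_{i\in\mathcal{N}^*}h_i p^\mli{max}_i/(I_{m^*_i}+BN_0)$, which guarantees $0\le p^*_i\le p^\mli{max}_i$ for every $i$, so each candidate remains feasible; and for any inconsistent or suboptimal candidate, the $\arg\max$ simply discards it, since the genuine optimum $\bm{p}^*_{\mli{straggler:}j^*}$ already lies in $\mathcal{P}$ and attains the global maximum of $g_2(\bm{z}^*,\cdot)$. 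Thus the enumeration is exhaustive with respect to the true optimum. The main (and modest) obstacle is therefore purely bookkeeping — confirming feasibility of the hypothesized-straggler closed-form powers and that the optimum is captured by the sweep — rather than any substantive analytical difficulty, so like Theorem~\ref{thm:find_nu} the argument reduces to a direct application of the preceding lemma.
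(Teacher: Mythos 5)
Your proposal is correct and follows essentially the same route as the paper, which simply states that the theorem is obtained directly from Lemma~\ref{lem:sp2_pi} (in parallel with how Theorem~\ref{thm:find_nu} follows from Lemma~\ref{lem:sp1_nui}) and omits the details. Your write-up is the natural fleshing-out of that argument — enumerating hypothesized stragglers, noting the true optimum's straggler places $\bm{p}^*$ in $\mathcal{P}$, and letting the arg-max over $\mathcal{P}$ recover it — including the feasibility check that the paper leaves implicit.
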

\begin{proof}
We can easily obtain the result from Lemma \ref{lem:sp2_pi}, thereby omitting it for brevity.
\end{proof}

Define the communication time corresponding to $\bm{z}$ and $\bm{p}$ as $T^\mli{co}(\bm{z},\bm{p})\coloneqq\max_{i\in\mathcal{N}} \sum_{m\in\mathcal{M}} z_{i,m}T^\mli{co}_i(\bm{z}_i,p_i)$. Based on Theorem \ref{thm:find_z} and \ref{thm:find_pi}, we have the following \emph{\underline{I}terati\underline{ve} \underline{S}olution (IVES)} algorithm to solve (SP2).

\emph{IVES:} We initialize transmission delay $\delta^0$ (based on \eqref{eq:Delta_2}) as follows
\begin{align}
	\label{eq:delta_0}
	\delta^0 = \max_{i\in\mathcal{N}}\frac{S}{B\log_2\left(1+\frac{h_i p^\mli{max}_i}{I_{m} + BN_0}\right)}.
\end{align} 
In each iteration $t$, we first compute an RB allocation strategy $\bm{z}^t$ via solving \eqref{eq:find_z} by the \emph{Kuhn-Munkres} algorithm. Then, based on $\bm{z}^t$, we find the corresponding transmission power $\bm{p}^t$ by \eqref{eq:find_optimal_p} and update the transmission delay by $\delta^{t+1}=T^\mli{co}(\bm{z}^t,\bm{p}^t)$ before the next iteration. The details of IVES are depicted in Algorithm \ref{alg:ives}.

\begin{algorithm}[t]
	\caption{Iterative Solution (IVES)}
	\label{alg:ives}
	\LinesNumbered
	%\SetKwFunction{LocalUpdate}{LocalUpdate}
	\KwIn{$\eta_1$, $\eta_2$, $S$, $B$, $\{h_i\}$, $\{I_m\}$}
	Initialize $t=0$ and $\delta_0$ by \eqref{eq:delta_0}\;
	\While{not done}{
		Compute RB allocation stratege $\bm{z}^t$ under $\delta^t$ using Kuhn-Munkres algorithm based on \eqref{eq:find_z}\;
		Compute transmission power $\bm{p}^t$ by \eqref{eq:find_optimal_p}\; 
		Update $\delta^{t+1}=T^\mli{co}(\bm{z}^t,\bm{p}^t)$\;
	}
	\KwRet{$\bm{z}^*$, $\bm{p}^*$}
\end{algorithm}

Using IVES, we can solve (SP2) in an iterative manner. In the following theorem, we provide the convergence guarantee for IVES.
\begin{theorem}
	\label{thm:local_minima}
	If we solve (SP2) by IVES, then $\{g_2(\bm{z}^t,\bm{p}^t)\}$ monotonically increases and converges to a unique point. 
\end{theorem}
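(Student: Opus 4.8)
The plan is to prove the two asserted properties in turn---monotonicity of the value sequence $\{g_2(\bm{z}^t,\bm{p}^t)\}$ and its boundedness from above---and then to invoke the monotone convergence theorem to conclude that the sequence converges to a unique limit. The backbone of the argument is to view each IVES iteration as two \emph{exact} maximizations composed back to back: the Kuhn-Munkres step (Theorem~\ref{thm:find_z}) that maximizes the matching objective at the current delay $\delta^t$ to produce $\bm{z}^t$, followed by the power step (Theorem~\ref{thm:find_pi}) that maximizes $g_2(\bm{z}^t,\cdot)$ to produce $\bm{p}^t$. The delay refresh $\delta^{t+1}=T^\mli{co}(\bm{z}^t,\bm{p}^t)$ is then shown to align the output of the power step with the input of the matching step in the next round.

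First I would rewrite $g_2(\bm{z}^t,\bm{p}^t)$ in the ``fixed-delay'' form that \eqref{eq:find_z} optimizes. The crucial structural fact, read off from the optimal power profile \eqref{eq:optimal_pi}, is that this profile equalizes the effective signal-to-noise ratios $h_ip^t_i/(I_{m^*_i}+BN_0)$ across all selected devices, so that each selected device attains communication time exactly $\delta^{t+1}$. Hence $p^t_i$ is precisely the power prescribed by \eqref{eq:Delta_2} at delay $\delta^{t+1}$, its $\eta_1$-weighted transmission energy equals $e_{i,m^*_i}(\delta^{t+1})$, and therefore
\begin{align}
\label{eq:align_ives}
g_2(\bm{z}^t,\bm{p}^t)=\sum_{i,m}z^t_{i,m}\big(u_i-e_{i,m}(\delta^{t+1})\big)-\eta_2\delta^{t+1},
\end{align}
which is exactly the matching objective of \eqref{eq:find_z} at delay $\delta^{t+1}$, up to the additive constant $-\eta_2\delta^{t+1}$.

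With \eqref{eq:align_ives} in hand, the monotone step follows by composing the two maximizations. By Theorem~\ref{thm:find_z}, $\bm{z}^{t+1}$ maximizes $\sum_{i,m}z_{i,m}(u_i-e_{i,m}(\delta^{t+1}))$; evaluating $g_2(\bm{z}^{t+1},\cdot)$ at the \eqref{eq:Delta_2}-power for delay $\delta^{t+1}$ therefore yields the value $\sum_{i,m}z^{t+1}_{i,m}(u_i-e_{i,m}(\delta^{t+1}))-\eta_2\delta^{t+1}$, which is at least $g_2(\bm{z}^t,\bm{p}^t)$ by \eqref{eq:align_ives} and the optimality of $\bm{z}^{t+1}$. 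Since the subsequent power optimization (Theorem~\ref{thm:find_pi}) can only increase $g_2(\bm{z}^{t+1},\cdot)$, we obtain $g_2(\bm{z}^{t+1},\bm{p}^{t+1})\ge g_2(\bm{z}^t,\bm{p}^t)$ for every $t\ge0$. Boundedness is immediate: the energy and delay terms of $g_2$ are non-negative and each device occupies at most one RB, whence $g_2(\bm{z}^t,\bm{p}^t)\le U(\bm{z}^t)\le\sum_{i\in\mathcal{N}}\max\{u_i,0\}<\infty$. A bounded non-decreasing real sequence converges to a unique limit, which yields the claim; moreover the limit is a fixed point of the IVES map and hence a local optimum of (SP2).

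The step I expect to be the main obstacle is the alignment identity \eqref{eq:align_ives}: certifying that the $g_2$-value returned by the exact power optimization coincides with the fixed-delay matching objective evaluated at the refreshed delay $\delta^{t+1}$. This hinges on the equalization property of \eqref{eq:optimal_pi}---that all selected devices share the single communication time $\delta^{t+1}$---together with the feasibility bound $\mu_{i,m^*_i}\le p^\mli{max}_i$ that forces $e_{i,m^*_i}$ onto its finite branch. Without this bridge between the power phase and the allocation phase, the two per-iteration maximizations could not be chained into a single monotone improvement, and the remainder of the argument would collapse.
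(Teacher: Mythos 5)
Your proof is correct and follows essentially the same route as the paper's: the paper also chains the Kuhn--Munkres step's optimality at the refreshed delay with the power step's optimality, via the intermediate power profile given by \eqref{eq:Delta_2} at $\delta^{t+1}$, and then invokes boundedness of $g_2$ by $\sum_{i\in\mathcal{N}}u_i$. Your alignment identity \eqref{eq:align_ives} (resting on the SNR-equalization property of \eqref{eq:optimal_pi} and the finite branch of $e_{i,m}$) is precisely the step the paper leaves implicit when asserting $g_2(\bm{z}^t,\bm{p}^t)\le g_2(\bm{z}^{t+1},\hat{\bm{p}}^{t+1})$, so you have simply made the same argument explicit.
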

\begin{proof}[Sketch of Proof]
The result is derived via proving $g(\bm{z}^t,\bm{p}^t)\le g(\bm{z}^{t+1},\hat{\bm{p}}^{t+1})$ and $g(\bm{z}^{t+1},\hat{\bm{p}}^{t+1})\le g(\bm{z}^{t+1},\bm{p}^{t+1})$. The detailed proof is presented in Appendix \ref{proof:thm_local_minima} of the technical report \cite{technicalreport}.
\end{proof}

Although IVES solves (SP2) iteratively, we observe that it can converge extremely fast (often with only two iterations) in the simulation, achieving a low computation complexity.

Combining the solutions of (SP1) and (SP2), we provide the \emph{\underline{U}ser selection and \underline{R}esource \underline{Al}location (URAL)} in Algorithm \ref{alg:ural} to solve the original problem (P). The URAL can simultaneously optimize the convergence speed, training time, and energy consumption via jointly selecting devices and allocating resources. Further, URAL can be directly integrated into the NUFM paradigm in the device selection phase to facilitate the deployment of FML in wireless networks. 

\begin{algorithm}[t]
	\caption{User Selection and Resource Allocation (\algname{URAL}) Algorithm}
	\label{alg:ural}
	\SetKwFunction{ives}{IVES}
	\LinesNumbered
	%\SetKwFunction{LocalUpdate}{LocalUpdate}
	\KwIn{$\eta_1$, $\eta_2$, $S$, $B$, $\{h_i\}$, $\{I_m\}$ $\{\iota_i\}$, $\{c_i\}$, $\{D_i\}$}
	Compute $\bm{\nu}^*$ by \eqref{eq:find_nu}\tcp*[r]{Solve (SP1)}
	Compute $\bm{z}^*$ and $\bm{p}^*$ by IVES\tcp*[r]{Solve (SP2)}
	\KwRet{$\bm{\nu}^*$, $\bm{z}^*$, $\bm{p}^*$}
	\
\end{algorithm}

\section{Extension to First-Order Approximations}
\label{sec:extension}
Due to the computation of Hessian in local update \eqref{eq:local_update}, it may cause high computational cost for resource-limited IoT devices. In this section, we address this challenge.

There are two common methods used in the literature to reduce the complexity in computing Hessian \cite{fallah2020personalized}: 
\begin{enumerate}
	\item replacing the stochastic gradient by 
	\begin{align}
		\label{eq:first_approx}
		\tilde{\nabla}F_i(\bfw)\approx\tilde{\nabla}f_i\big(\bfw - \alpha\tilde{\nabla} f_i(\bfw,\mathcal{D}_i),\mathcal{D}'_i\big);
	\end{align}
	
	\item replacing the Hessian-gradient product by 
	\begin{align}
		\label{eq:hessian_estimate}
		&\tilde{\nabla}^2 f_i(\bfw,\mathcal{D}''_i)\tilde{\nabla}f_i\big(\bfw - \alpha\tilde{\nabla} f_i(\bfw,\mathcal{D}_i),\mathcal{D}'_i\big)\nonumber\\
		&\approx \frac{\tilde{\nabla}f_i\big(\bfw+\epsilon \tilde{g}_i,\mathcal{D}''_i\big)-\tilde{\nabla}f_i\big(\bfw-\epsilon \tilde{g}_i,\mathcal{D}''_i\big)}{2\epsilon}
	\end{align}
	where $\tilde{g}_i=\tilde{\nabla}f_i\big(\bfw - \alpha\tilde{\nabla} f_i(\bfw,\mathcal{D}_i),\mathcal{D}'_i\big)$.
\end{enumerate}
By doing so, the computational complexity of a one-step local update can be reduced from $O(d^2)$ to $O(d)$, while not sacrificing too much learning performance. Next, we show that our results in Theorem \ref{thm:gap} hold in the above two cases.
\begin{corollary}
	\label{coro:extension}
	Suppose that Assumptions \ref{assump:smoothness}-\ref{assump:similarity} are satisfied, and $\mathcal{D}_i$, $\mathcal{D}'_i$, and $\mathcal{D}''_i$ are independent batches. If the local update and global aggregation follow \eqref{eq:local_update} and \eqref{eq:global_agregation} respectively, we have for $\tau=1$
	\begin{align}
		\mathbb{E}[F(\bfw^k) - F(\bfw^{k+1})]\ge\;& \beta\mathbb{E}\Bigg[\frac{1}{n_k}\sum_{i\in\mathcal{N}_k}\Bigg(\Big(1-\frac{L_F\beta}{2}\Big)\big\|\tilde{\nabla} F_i(\bfw^k)\big\|^2\nonumber\nonumber\\
		&-\Big(\sqrt{\left(1 + \alpha L\right)^2\gamma_G + \alpha \zeta \gamma_H}+\tilde{\sigma}_{F_i}\Big)\nonumber\\
		&\times\sqrt{ \mathbb{E}\Big[\big\|\tilde{\nabla} F_i(\bfw^k)\big\|^2\Big | \mathcal{N}_k\Big]}\Bigg)\Bigg]
	\end{align}
	where $\tilde{\sigma}_{F_i}$ is defined as follows 
	\begin{align}
		\tilde{\sigma}^2_{F_i}=
		\begin{cases} 		  
			4\sigma_{G}^{2}\left(\frac{1}{D^{\prime}_i}+\frac{(\alpha L)^{2}}{D_i}\right) + 2(\alpha L\zeta)^2&,~\mathrm{if~using}~\eqref{eq:first_approx}\\
			\begin{aligned}[c]
				&6 \sigma_{G}^{2}\left(\frac{\alpha^{2}}{ \epsilon^{2} D^{\prime \prime}_i}+\left(1 + 2(\alpha L)^2\right)\right.\\	&\left.\cdot\left(\frac{1}{D^{\prime}_i}+\frac{(\alpha L)^{2}}{D_i}\right)\right)+2(\alpha\rho\epsilon)^2\zeta^4
			\end{aligned}
			&,~\mathrm{if~ using}~\eqref{eq:hessian_estimate}.
		\end{cases}
	\end{align}
\end{corollary}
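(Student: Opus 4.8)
The plan is to exploit the modular structure of the proof of Theorem~\ref{thm:gap}. Inspecting that argument, the stochastic estimator $\tilde{\nabla}F_i(\bfw)$ enters the one-round descent bound only through its deviation from the exact meta-gradient $\nabla F_i(\bfw)$; moreover, by Jensen's inequality the mean deviation $\|\mathbb{E}[\tilde{\nabla}F_i-\nabla F_i]\|$ is dominated by the root-mean-square deviation, so the single certificate that drives the bound is the mean-square estimate $\mathbb{E}[\|\tilde{\nabla}F_i(\bfw)-\nabla F_i(\bfw)\|^2]\le\sigma_{F_i}^2$ supplied by Lemma~\ref{lem:variance_Fi}. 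By contrast, the smoothness constant $L_F$ (Lemma~\ref{lem:smoothness_Fi}) and the similarity term $\sqrt{(1+\alpha L)^2\gamma_G+\alpha\zeta\gamma_H}$ (Lemma~\ref{lem:similarity_Fi}) are properties of the \emph{exact} functions $F_i$ and $F$, hence untouched by any approximation of the gradient. Consequently the derivation carries over verbatim once we re-establish a matching mean-square bound $\mathbb{E}[\|\tilde{\nabla}F_i(\bfw)-\nabla F_i(\bfw)\|^2]\le\tilde{\sigma}_{F_i}^2$ for each approximate estimator, and the whole task reduces to proving this one inequality in each case.

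First I would treat the first-order scheme~\eqref{eq:first_approx}, in which the Hessian prefactor $I-\alpha\tilde{\nabla}^2 f_i$ is discarded. I would decompose $\tilde{\nabla}F_i(\bfw)-\nabla F_i(\bfw)$ into a \emph{stochastic} part and a \emph{systematic} part by inserting and subtracting the exact adapted gradient $\nabla f_i(\bfw-\alpha\nabla f_i(\bfw))$: the stochastic part captures the fluctuation of $\tilde{\nabla}f_i(\cdot,\mathcal{D}'_i)$ and of the inner update $\tilde{\nabla}f_i(\cdot,\mathcal{D}_i)$ about their expectations, while the systematic part is precisely the omitted term $\alpha\nabla^2 f_i\,\nabla f_i$. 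Applying $\|a+b\|^2\le 2\|a\|^2+2\|b\|^2$, the stochastic part is controlled by the bounded-variance Assumption~\ref{assump:bounded_variance} together with the $L$-smoothness of $f_i$ (the inner-batch noise propagating through the chain rule with a factor $(\alpha L)^2$), which yields the $2\sigma_G^2(1/D'+(\alpha L)^2/D)$ contribution; the systematic part is deterministic of magnitude at most $\alpha L$ times a gradient bound, contributing $2(\alpha LB)^2$. Summing the two reproduces the first branch of $\tilde{\sigma}_{F_i}^2$.

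The finite-difference scheme~\eqref{eq:hessian_estimate} is the delicate case and I expect it to be the main obstacle. Here the Hessian--gradient product is replaced by the symmetric difference $\big(\tilde{\nabla}f_i(\bfw+\epsilon\tilde{g}_i,\mathcal{D}''_i)-\tilde{\nabla}f_i(\bfw-\epsilon\tilde{g}_i,\mathcal{D}''_i)\big)/\epsilon$, and two competing effects must be tracked at once. On the one hand, a second-order Taylor expansion of $\nabla f_i$ about $\bfw$ shows that this difference approximates $\nabla^2 f_i(\bfw)\,\tilde{g}_i$ up to a remainder governed by the Lipschitz-Hessian Assumption~\ref{assump:lipschitz_hessian}; since that remainder is quadratic in $\epsilon\|\tilde{g}_i\|$ and $\|\tilde{g}_i\|$ is bounded through $\zeta$, it contributes the deterministic $2(\alpha\mu\delta)^2\zeta^4$ term. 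On the other hand---and this is where the real care is required---dividing the stochastic gradient noise by $\epsilon$ amplifies its variance by a factor $1/\epsilon^2$, so the $\mathcal{D}''_i$ fluctuations enter with the characteristic weight $\alpha^2/(2\epsilon^2 D'')$ rather than the $O(1)$ weight of the exact estimator. I would bound the total mean-square error by again splitting into stochastic and systematic parts through repeated use of $\|a+b\|^2\le 2\|a\|^2+2\|b\|^2$ to isolate the three independent batches $\mathcal{D}_i,\mathcal{D}'_i,\mathcal{D}''_i$, then invoking Assumption~\ref{assump:bounded_variance} on each; independence of the batches annihilates the cross terms and leaves exactly $6\sigma_G^2\big(2(\alpha L)^2/D+2/D'+\alpha^2/(2\epsilon^2 D'')\big)$, matching the second branch. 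With both mean-square certificates in hand, substituting $\tilde{\sigma}_{F_i}$ for $\sigma_{F_i}$ in the proof of Theorem~\ref{thm:gap} delivers the stated inequality with no further change.
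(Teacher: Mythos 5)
Your proposal is correct and takes essentially the same route as the paper: the paper's proof of Corollary~\ref{coro:extension} simply combines the mean-square error bounds for the two approximate estimators (citing Lemmas H.1 and H.2 of \cite{fallah2020personalized}) with the proof of Theorem~\ref{thm:gap}, which is precisely the modular substitution of $\tilde{\sigma}_{F_i}$ for $\sigma_{F_i}$ that you describe, since only the certificate $\mathbb{E}[\|\tilde{\nabla}F_i(\bfw)-\nabla F_i(\bfw)\|^2]$ enters that argument while $L_F$ and the similarity term are properties of the exact functions. The only difference is that you sketch inline derivations of the two variance certificates, whereas the paper imports them by citation.
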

\begin{proof}
The detailed proof is presented in Appendix \ref{proof:corollaries} of the technical report \cite{technicalreport}.
\end{proof}
Corollary \ref{coro:extension} indicates that \algname{NUFM} can be directly combined with the first-order approximation techniques to reduce computational cost. Further, similar to Corollary \ref{coro:gap}, Corollary \ref{coro:extension} can be extended to the multi-step cases.

\section{Simulation}
\label{sec:simulation}

This section evaluates the performance of our proposed algorithms by comparing with existing baselines in real-world datasets. We first present the experimental setup, including the datasets, models, parameters, baselines, and environment. Then we provide our results from various aspects.

\subsection{Experimental Setup}

\begin{figure*}[t]
	\centering
	\includegraphics[width=\textwidth]{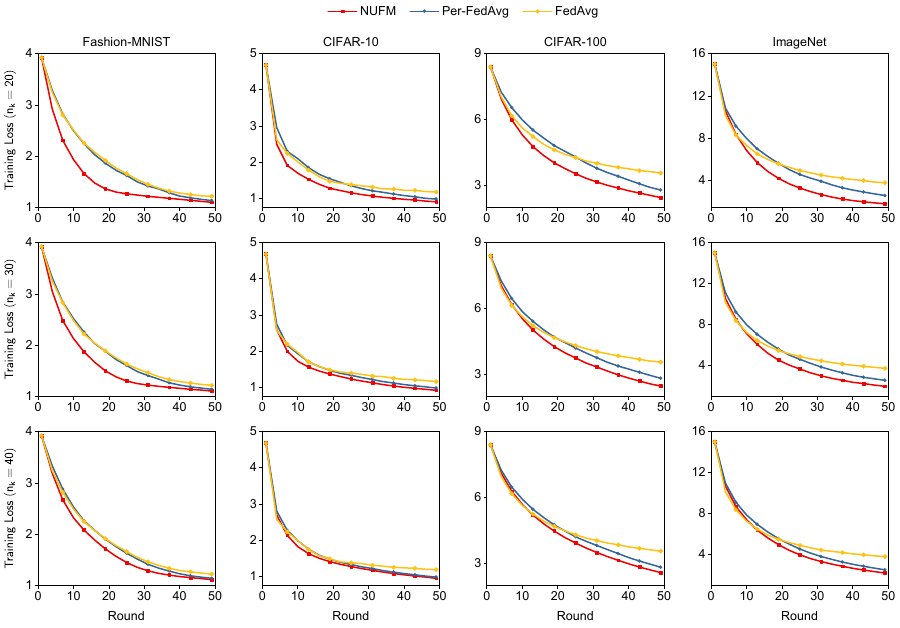}
	\caption{Comparison of convergence rates under different numbers of participated devices. NUFM significantly accelerates the convergence of the existing FML approach, especially with fewer participated devices. In addition, with more participating devices, the advantages of NUFM weaken, leading to smaller gaps between NUFM and the existing algorithms.}
	\label{figure:convergence}
\end{figure*}

\subsubsection{Datasets and Models}

We evaluate our algorithms on four widely-used benchmarks, namely Fashion-MNIST \cite{xiao2017fashion}, CIFAR-10 \cite{krizhevsky2009learning}, CIFAR-100 \cite{krizhevsky2009learning}, and ImageNet \cite{deng2009imagenet}. Specifically, the data is distributed among $n=100$ devices as follows: a) Each device has samples from two random classes; b) the number of samples per class follows a truncated Gaussian distribution $\mathcal{N}(\mu,\sigma^2)$ with $\mu=5$ and $\sigma=5$. We select 50\% devices at random for training with the rest for testing. For each device, we divide the local dataset into a support set and a query set. We consider 1-shot 2-class classification tasks, \ie, the support set contains only 1 labeled example for each class. We set the stepsizes as $\alpha=\beta=0.001$. We use a convolutional neural network (CNN) with max-pooling operations and the Leaky Rectified Linear Unit (Leaky ReLU) activation function, containing three convolutional layers with sizes 32, 64, and 128 respectively, followed by a fully connected layer and a softmax layer. The strides are set as 1 for convolution operation and 2 for the pooling operation. 

\subsubsection{Baselines}

To compare the performance of NUFM, we first consider two existing algorithms, \ie, FedAvg \cite{mcmahan2017communication} and Per-FedAvg \cite{fallah2020personalized}. Further, to validate the effectiveness of URAL in multi-access wireless networks, we use two baselines, called \emph{Greedy} and \emph{Random}. In each round, the \emph{Greedy} strategy determines the CPU-cycle frequency $\nu^\mli{g}_i$ and transmission power $p^\mli{g}_i$ for device $i\in\mathcal{N}$ by greedily minimizing its individual objective, \ie,
\begin{align}
	\label{eq:personal_nu}
	\nu^\mli{g}_i &= \mathop{\arg\min}_{\nu_i}\left\{\eta_1\frac{\iota_i c_i D_i \nu^2_i}{2} + \eta_2\frac{c_i D_i}{\nu_i},~\mathrm{s.t.}~\eqref{eq:constraints_nui}\right\}\\
	\label{eq:personal_p}
	p^\mli{g}_i &= \mathop{\arg\min}_{p_i}\left\{\sum_{m\in\mathcal{M}}\frac{z^\mli{g}_{i,m}(\eta_1 p_i + \eta_2)S}{B\log_2\left(\frac{1+h_i p_i}{I_{m} + BN_0}\right)},~\mathrm{s.t.}~\eqref{eq:constraints_pi}\right\}
\end{align} 
where $\{z^\mli{g}_{i,m}\}_{m\in\mathcal{M}}$ is selected at random (\ie, randomly allocating RBs to selected devices). The \emph{Random} strategy decides the CPU-cycle frequencies, transmission powers, and RB allocation for the selected devices uniformly at random from the feasible regions. 

\subsubsection{Implementation}

We implement the code in TensorFlow Version 1.14 on a server with two Intel$^\circledR$ Xeon$^\circledR$ Golden 5120 CPUs and one Nvidia$^\circledR$ Tesla-V100 32G GPU. The parameters used in the simulation can be found in Table \ref{table:parameters}.

\subsection{Experimental Results}

\subsubsection{Convergence Speed}
\label{subsec:convergence_speed}

To demonstrate the improvement of NUFM on the convergence speed, we compare the algorithms on different benchmarks with the same initial model and learning rate. We vary the number of participated devices $n_k$ from 20 to 40, and set the numbers of local update steps and total communication rounds as $\tau=1$ and $K=50$, respectively. We let $\lambda_1=\lambda_2=1$. As illustrated in Fig. \ref{figure:convergence} and Table \ref{table:accuracy}, NUFM significantly improve the convergence speed and corresponding test accuracy of the existing FML approach on all datasets\footnote{To make the graphs more legible, we draw symbols every two points in Fig. \ref{figure:convergence}.}. Clearly, it validates the effectiveness of our proposed device selection scheme that maximizes the lower bound of one-round global loss reduction. Interestingly, Fig. \ref{figure:convergence} also indicates that NUFM converges more quickly with relatively fewer participated devices. For example, in round 19, the loss achieved by NUFM with $n_k=20$ decreases by more than 9\% and 20\% over those with $n_k=30$ and $n_k=40$ on Fashion-MNIST, respectively. The underlying rationale is that relatively fewer ``good'' devices can provide a larger lower bound on the one-round global loss decrease (note that in \eqref{eq:thm1_1} the lower bound takes the average of the selected devices). More selected devices in each round generally require more communication resources. Thus, the results reveal the potential of NUFM in applications to resource-limited wireless systems.

%\label{table:accuracy}
%\caption{Test accuracy after fifty rounds of training.}
%\renewcommand\arraystretch{1.25}
\begin{table}[ht]
	\centering
	\label{table:accuracy}
	\setlength{\tabcolsep}{4pt}
	\caption{Test accuracy after fifty rounds of training.}
	\renewcommand\arraystretch{1.5}
	\begin{tabular}{cccccc}
		\hline
		Algorithm  & Fashion-MNIST    & CIFAR-10         & CIFAR-100        & ImageNet         &  \\ \hline
		NUFM       & \textbf{68.04\%} & \textbf{58.80\%} & \textbf{23.95\%} & \textbf{34.04\%} &  \\
		Per-FedAvg & 62.75\%          & 58.22\%          & 21.49\%          & 30.98\%          &  \\
		FedAvg     & 61.04\%          & 54.31\%          & 10.13\%          & 12.14\%          &  \\ \hline
	\end{tabular}
\end{table}

%Besides, Fig. \ref{figure:convergence} also shows that the convergence speed is improved more significantly as the training task becomes complicated. For example, compared with Per-FedAvg in the round 19, the loss achieved by NUFM on ImageNet (200-class classification\footnote{In our experiments, we only use 200 of the total 1000 classes in the ImageNet dataset.}) decreases by more than 25\% while it decreases by less than {\color{red}20\% and 10\%} on CIFAR-10 and FMNIST (both of which are 10-class classification datasets), respectively. 

\subsubsection{Effect of Local Update Steps}
\label{subsec:selected_users}

To show the effect of local update steps on the convergence rate of NUFM, we present results with varying numbers of local update steps $\tau=1,2,\dots,10$ in each round. For clarity of illustration, we compare the loss under different numbers of local steps in the round 19 on Fashion-MNIST and CIFAR-100. Fig. \ref{figure:vary_tau} shows that fewer local update steps lead to a larger gap between the baselines and NUFM, which verifies the theoretical result that a small number of local steps can slow the convergence of FedAvg and Per-FedAvg \cite[Theorem 4.5]{fallah2020personalized}. It also implies that NUFM can improve the computational efficiency of local devices.

\begin{figure}[ht]
	\centering
	\includegraphics[width=\linewidth]{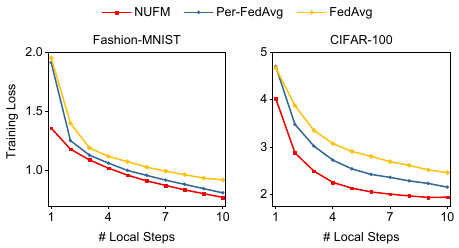}
	\caption{Effect of local update steps on convergence rates. Fewer local steps lead to larger gaps between NUFM and the existing methods.}
	\label{figure:vary_tau}
\end{figure}

%
%
%To show the effect of the number of selected users, we present results with varying numbers of users participating in each round. As shown in {\color{red} Figure}, even with a small number of selected users, NUFM can still maintain a good performance while FedAvg and Per-FedAvg suffer from slower convergence with fewer users. Since the communication resources are often insufficient in wireless networks, the number of selected users is naturally limited. Therefore, the results further demonstrate the promising efficacy of NUFM in practical systems.

\subsubsection{Performance of URAL in Wireless Networks}
\label{subsec:performance_ural}

We evaluate the performance of URAL by comparing with four baselines, namely NUFM-\emph{Greedy}, NUFM-\emph{Random}, RU-\emph{Greedy}, and RU-\emph{Random}, as detailed below.
\begin{enumerate}[label=\alph*)]
	\item NUFM-\emph{Greedy}: select devices by NUFM, decide CPU-cycle frequencies, RB allocation, and transmission power by \emph{Greedy} strategy;
	\item NUFM-\emph{Random}: select devices by NUFM, decide CPU-cycle frequencies, RB allocation, and transmission power by \emph{Random} strategy;
	\item RU-\emph{Greedy}: select devices uniformly at random, decide CPU-cycle frequencies, RB allocation, and transmission power by \emph{Greedy} strategy;
	\item RU-\emph{Random}: select devices uniformly at random, decide CPU-cycle frequencies, RB allocation, and transmission power by \emph{Random} strategy.
\end{enumerate} 

\begin{table*}[htpb]
	\caption{Parameters in Simulation}
	\label{table:parameters}
	\centering
	\renewcommand\arraystretch{1.25}
	% \resizebox{\columnwidth}{!}{
	\begin{tabular}{l|l}
		\hline
		\textbf{Parameter} & \textbf{Value} \\ 
		\hline
		Step size ($\alpha$) and meta--learning rate ($\beta$) & 0.001 \\
		\# edge devices ($n$) & 100 \\
		\# participating devices ($n_k$) in Experiments 1 and 2 & An integer varying among $\{20,30,40\}$ \\
		\# local updates ($\tau$) & An integer varying among $\{1,2,\dots,10\}$ \\
		Hyper-parameters ($\lambda_1$ and $\lambda_2$) & 1 \\
		Weight parameters ($\eta_1$ and $\eta_2$) & Real numbers varying among $\{0.5, 1, 1.5, 2.0, 2.5\}$ \\
		\# RBs & An integer varying among $\{1,5,10,20,30,40,50\}$ \\
		CPU cycles of devices($c_i$) & Real numbers following $U(0,0.25)$ \\
		Maximum CPU-cycle frequencies of devices ($\nu^\mli{max}_i$) & Real numbers following $U(0,2)$ \\
		Maximum transmission powers of devices ($p^\mli{max}_i$) & Real numbers following $U(0,1)$ \\
		Inference in RBs ($I_m$) & Real numbers following $U(0,0.8)$ \\
		Model size ($S$), Bandwidth ($B$),  & 1\\
		Noise power spectral density ($N_0$) & 1\\
		effective capacitance coefficients of devices ($\iota_i$) & Real numbers following $U(0,1)$ \\
		Channel gains of devices & 
		\begin{tabular}[c]{@{}l@{}}
			Real numbers following $U(0.1,h^\mli{max})$ \\ 
			where $h^\mli{max}$ varies among $\{0.25, 0.5, 0.75, \dots, 2.0\}$
		\end{tabular} \\
		\hline
	\end{tabular}
	% }
\end{table*}

We simulate a wireless system consisting of $M=20$ RBs and let the channel gain $h_i$ of device $i$ follow a uniform distribution $U(h^\mli{min},h^\mli{max})$ with $h^\mli{min}=0.1$ and $h^\mli{max}=1$. We set $S=1$, $B=1$, and $n_0=1$. The interference of RB $m$ is drawn from $I_m\sim U(0,0.8)$. We set $\iota_i\sim U(0,1)$, $c_i\sim U(0,0.25)$, $p^\mli{max}_i\sim U(0,1)$, and $\nu^\mli{max}_i\sim U(0,2)$ for each $i\in\mathcal{N}$ to simulate the device heterogeneity.  In the following experiments, we run the algorithms on FMNIST with local update steps $\tau=1$ and $\eta_1=\eta_2=1$.

As shown in Fig. \ref{figure:ural_comp}, URAL can significantly reduce energy consumption and wall-clock time, as compared with the baselines. However, it is counter-intuitive that the \emph{Greedy} strategy is not always better than \emph{Random}. There are two reasons. On one hand, the energy cost and wall-clock time depend on the selection of weight parameters $\eta_1$ and $\eta_2$. The results in Fig. \ref{figure:ural_comp} imply that, when $\eta_1=\eta_2=1$, \emph{Greedy} pays more attention to the wall-clock time than to energy consumption. Accordingly, \emph{Greedy} achieves much lower average delay than that of \emph{Random}, but sacrificing parts of the energy. On the other hand, the wall-clock time and energy cost require joint control with RB allocation. Although \emph{Greedy} minimizes the individual objectives \eqref{eq:personal_nu}-\eqref{eq:personal_p}, improper RB allocation can cause arbitrary performance degradation. Different from \emph{Greedy} and \emph{Random}--based baselines, since URAL aims to maximize the joint objective (P) via co-optimizing the CPU-cycle frequencies, transmission power, and RB allocation strategy, it can alleviate the above-mentioned issues, and achieve a better delay and energy control. At the same time, Fig. \ref{figure:ural_comp} indicates that URAL converges as fast as NUFM-\emph{Greedy} and NUFM-\emph{Random} (the corresponding lines almost overlap), which select devices greedily to accelerate convergence (as in NUFM). Thus, URAL can have an excellent convergence rate.

\begin{figure}[ht]
	\centering
	\includegraphics[width=\linewidth]{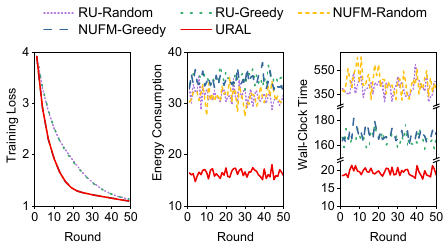}
	\caption{Comparison of convergence, energy cost, and wall-clock training time. URAL can achieve a great convergence speed and short wall-clock time with low energy consumption.}
	\label{figure:ural_comp}
\end{figure}

\subsubsection{Effect of Resource Blocks}
\label{subsec:resource_blocks}

In Fig. \ref{figure:vary_m}, we test the performance of URAL under different numbers of RBs. We vary the number of RBs $M$ from 1 to 50. More RBs enable more devices to be selected in each round, leading to larger energy consumption. As shown in Fig. \ref{figure:vary_m}, URAL keeps stable wall-clock time with the increase of RBs. Meanwhile, URAL can control the power in devices to avoid serious waste of energy. It is counter-intuitive that the convergence speed does not always decrease with the increase of RBs, especially for URAL. The reason is indeed the same as that in Section~\ref{subsec:convergence_speed}. That is, too few selected devices can slow the convergence due to insufficient information provided in each round while a large number of participated devices may weaken the global loss reduction as shown in \eqref{thm:gap}. Therefore, URAL can adapt to the practical systems with constrained wireless resources via achieving fast convergence with only a small set of devices.

\begin{figure}[ht]
	\centering
	\includegraphics[width=\linewidth]{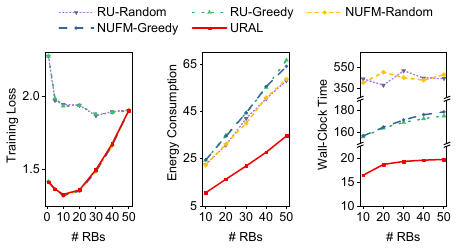}
	\caption{Comparison of convergence, energy cost, and wall-clock time under different numbers of RBs. URAL can well control the energy cost and wall-clock time with more available RBs. Meanwhile, it can achieve fast convergence with only a small number of RBs.}
	\label{figure:vary_m}
\end{figure}

\subsubsection{Effect of Channel Quality}
\label{subsec:channel_quality}

To investigate the effect of channel conditions on performance, we set the number of RBs $M=20$ and vary the maximum channel gain $h^\mli{max}$ from 0.25 to 2, and show the corresponding energy consumption and wall-clock training time in Fig. \ref{figure:vary_h}. The results indicate that the energy consumption and latency decrease as channel quality improves, because devices can use less power to achieve a relatively large transmission rate. 

% Surprisingly, the channel conditions almost do not affect the convergence speed. Combining the results on the effect of RBs, we find that the fast convergence property of URAL is robust to the wireless environment.

\begin{figure}[ht]
	\centering
	\includegraphics[width=\linewidth]{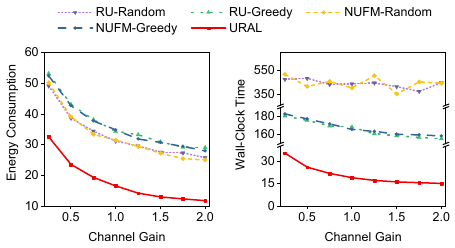}
	\caption{Effect of channel gains on performance. Worse channel conditions would induce larger transmission power and longer wall-clock time.}
	\label{figure:vary_h}
\end{figure}

\subsubsection{Effect of Weight Parameters}
\label{subsec:weight_parameters}

We study how weight parameters $\eta_1$ and $\eta_2$ affect the average energy consumption and wall-clock time of URAL in Fig. \ref{figure:vary_eta}. We first fix $\eta_2=1$ and vary $\eta_1$ from 0.5 to 2.5.  As expected, the total energy consumption decreases with the increase of $\eta_1$, with the opposite trend for wall-clock time. Then we vary $\eta_2$ with $\eta_1=1$. Similarly, a larger $\eta_2$ leads to less latency and more energy cost. It implies that we can control the levels of wall-clock training time and energy consumption by tuning the weight parameters. In particular, even with a large $\eta_1$ or $\eta_2$, the wall-clock time and energy cost can be controlled at low levels. Meanwhile, the convergence rate achieved by URAL is robust to $\eta_1$ and $\eta_2$. Thus, URAL can make full use of the resources (including datasets, bandwidth, and power) and achieve great trade-offs among the convergence rate, latency, and energy consumption.

\begin{figure}[ht]
	\centering
	\includegraphics[width=\linewidth]{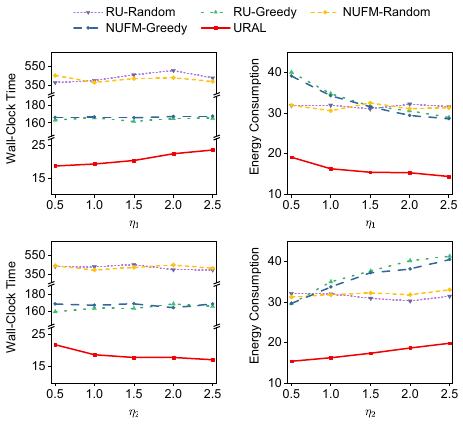}
	\caption{Effect of weight parameters $\eta_1$ and $\eta_2$. A large $\eta_1$ achieves lower energy consumption while leading to longer wall-clock time (the average wall-clock time is 18.63 when $\eta_1=0.5$; it is 21.53 when $\eta_1=2.5$). It is the opposite for $\eta_2$.}
	\label{figure:vary_eta}
\end{figure}

\section{Conclusion}
\label{sec:conclusion}
In this paper, we have proposed an FML algorithm, called NUFM, that maximizes the theoretical lower bound of global loss reduction in each round to accelerate the convergence. Aiming at effectively deploying NUFM in wireless networks, we present a device selection and resource allocation strategy (URAL), which jointly controls the CPU-cycle frequencies and RB allocation to optimize the trade-off between energy consumption and wall-clock training time. Moreover, we integrate the proposed algorithms with two first-order approximation techniques to further reduce the computational complexity in IoT devices. Extensive simulation results demonstrate that the proposed methods outperform the baseline algorithms. 

Future work will investigate the trade-off between the local update and global aggregation in FML to minimize the convergence time and energy cost from a long-term perspective. In addition, how to characterize the convergence properties and communication complexity of the NUFM algorithm requires further research.

\bibliographystyle{IEEEtran}
\bibliography{reference}

\clearpage
\onecolumn
\appendices

\section{Proof of Lemma \ref{lem:smoothness_Fi}}
\label{proof:lem_smooth_Fi}
The proof is standard \cite{yue2021inexact,fallah2020personalized}, and we provide it for completeness. Recalling the definition of $F_i$ as $F_i(\theta) \coloneqq f_i(\theta - \alpha\nabla f_i(\theta))$, we have
	\begin{align*}
		\nabla F_i(\theta)=(I-\alpha \nabla^{2} f_{i}(\theta)) \nabla f_{i}(\theta-\alpha \nabla f_{i}(\theta)).
	\end{align*}
Based on that, for any $\bfw_1,\bfw_2\in\mathbb{R}^d$ we have 
\begin{align}
	\|\nabla F_{i}(\bfw_1)-\nabla F_{i}(\bfw_2)\| =&\;\|(I-\alpha \nabla^{2} f_{i}(\bfw_1)) \nabla f_{i}(\bfw_1-\alpha \nabla f_{i}(\bfw_1))-(I-\alpha \nabla^{2} f_{i}(\bfw_2)) \nabla f_{i}(\bfw_2-\alpha \nabla f_{i}(\bfw_2))\| \nonumber\\
	=&\;\|(I-\alpha \nabla^{2} f_{i}(\bfw_1))(\nabla f_{i}(\bfw_1-\alpha \nabla f_{i}(\bfw_1))-\nabla f_{i}(\bfw_2-\alpha \nabla f_{i}(\bfw_2))) \nonumber\\
	&+((I-\alpha \nabla^{2} f_{i}(\bfw_1))-(I-\alpha \nabla^{2} f_{i}(\bfw_2))) \nabla f_{i}(\bfw_2-\alpha \nabla f_{i}(\bfw_2)) \| \tag*{(adding and subtracting the term $(I-\alpha \nabla^{2} f_{i}(\theta_1))\nabla f_{i}(\theta_2-\alpha \nabla f_{i}(\theta_2)$)}\\
	\leq&\;\underbrace{\|I-\alpha \nabla^{2} f_{i}(\bfw_1)\|\|\nabla f_{i}(\bfw_1-\alpha \nabla f_{i}(\bfw_1))-\nabla f_{i}(\bfw_2-\alpha \nabla f_{i}(\bfw_2))\|}_{(a)} \nonumber\tag*{(from triangle inequality)}\\
	\label{eq:lem1_inequality1}
	&+\alpha\underbrace{\|\nabla^{2} f_{i}(\bfw_1)-\nabla^{2} f_{i}(\bfw_2)\|\|\nabla f_{i}(\bfw_2-\alpha \nabla f_{i}(\bfw_2))\|}_{(b)}.
\end{align}
Then, for $(a)$, the following holds true
\begin{align}
	\label{eq:lem1_a}
	&\|I-\alpha \nabla^{2} f_{i}(\bfw_1)\|\|\nabla f_{i}(\bfw_1-\alpha \nabla f_{i}(\bfw_1))-\nabla f_{i}(\bfw_2-\alpha \nabla f_{i}(\bfw_2))\| \nonumber\\
	\leq\,&(1+\alpha L) L \| \bfw_1-\alpha \nabla f_{i}(\bfw_1))-\bfw_2+\alpha \nabla f_{i}(\bfw_2) \| \nonumber\tag*{(from Assumption \ref{assump:smoothness})}\\
	\leq\,&(1+\alpha L) L(\|\bfw_1-\bfw_2\|+\alpha\|\nabla f_{i}(\bfw_1)-\nabla f_{i}(\bfw_2)\|) \nonumber\tag*{(from triangle inequality)}\\
	\leq\,&(1+\alpha L)^2 L\|\bfw_1-\bfw_2\|.
\end{align}
Regarding $(b)$, it can be shown that
\begin{align}
	\label{eq:lem1_b}
	\|\nabla^{2} f_{i}(\bfw_1)-\nabla^{2} f_{i}(\bfw_2)\|\|\nabla f_{i}(\bfw_2-\alpha \nabla f_{i}(\bfw_2))\|\le\rho\zeta\|\bfw_1-\bfw_2\|.\tag*{(from Assumptions \ref{assump:smoothness} and \ref{assump:lipschitz_hessian})}
\end{align}
Substituting $(a)$ and $(b)$ into \eqref{eq:lem1_inequality1}, we have the result.

\section{Proof of Lemma \ref{lem:variance_Fi}}
\label{proof:lem_variance_Fi}

%We provide this proof for completeness, which is essentially the same as that for \cite[Lemma]{fallah2020personalized}. 

We rewrite the stochastic gradient $\tilde{\nabla}F_i(\bfw)$ as follows
\begin{align}
	\label{eq:lem_vaniance_1}
	\tilde{\nabla}F_i(\bfw)=\left(I - \alpha\nabla^2 f_i(\bfw) + \delta^*_1\right)\left(\nabla f_i\left(\bfw - \alpha\nabla f_i(\bfw)\right)+\delta^*_2\right)
\end{align}
where $\delta^*_1$ and $\delta^*_2$ are given by
\begin{align}
	\delta^*_1 &= \alpha\left(\nabla^2 f_i(\bfw) - \tilde{\nabla}^2 f_i(\bfw,\mathcal{D}''_i)\right)\\
	\delta^*_2 &= \tilde{\nabla} f_i\big(\bfw - \alpha\tilde{\nabla} f_i(\bfw,\mathcal{D}_i),\mathcal{D}'_i\big) - \nabla f_i\left(\bfw - \alpha\nabla f_i(\bfw)\right).
\end{align}
Note that $\delta^*_1$ and $\delta^*_2$ are independent. Due to Assumption \ref{assump:bounded_variance}, we have 
\begin{align}
	\label{eq:lem_variance_2}
	\mathbb{E}[\delta^*_1] &= 0\\
	\mathbb{E}[\|\delta^*_1\|^2] &\le \frac{(\alpha\sigma_H)^2}{D''_i}.
\end{align}
Next, we proceed to bound the first and second moments of $\delta^*_2$. Regarding the first moment, we have 
\begin{align}
	\left\|\mathbb{E}\left[\delta^*_2\right]\right\|& =  \left\|\mathbb{E}\left[\tilde{\nabla} f_i\big(\bfw - \alpha\tilde{\nabla} f_i(\bfw,\mathcal{D}_i),\mathcal{D}'_i\big) - \nabla f_i\big(\bfw - \alpha\tilde{\nabla} f_i(\bfw,\mathcal{D}_i)\big)+ \nabla f_i\big(\bfw - \alpha\tilde{\nabla} f_i(\bfw,\mathcal{D}_i)\big) - \nabla f_i\left(\bfw - \alpha\nabla f_i(\bfw)\right)\right]\right\|\nonumber\\
	&=  \left\|\mathbb{E}\left[\nabla f_i\big(\bfw - \alpha\tilde{\nabla} f_i(\bfw,\mathcal{D}_i)\big) - \nabla f_i\left(\bfw - \alpha\nabla f_i(\bfw)\right)\right]\right\| \tag*{(from the tower rule and independence between $\mathcal{D}$ and $\mathcal{D}'$)}\nonumber\\
	&\le \mathbb{E}\left[\left\|\nabla f_i\big(\bfw - \alpha\tilde{\nabla} f_i(\bfw,\mathcal{D}_i)\big) - \nabla f_i\left(\bfw - \alpha\nabla f_i(\bfw)\right)\right\|\right]\nonumber\\
	&\le \alpha L\mathbb{E}\left[\left\|\tilde{\nabla} f_i(\bfw,\mathcal{D}_i) - \nabla f_i(\bfw)\right\|\right]\tag*{(from the smoothness of $f_i$)}\\
	&\le\frac{\alpha\sigma_G L}{\sqrt{D_i}}\tag*{(from Assumption \ref{assump:bounded_variance})}.
\end{align}
Regarding the second moment, we have
\begin{align}
	\mathbb{E}\left[\|\delta^*_2\|^2\right]&\le 2\mathbb{E}\left[\left\|\tilde{\nabla} f_i\big(\bfw - \alpha\tilde{\nabla} f_i(\bfw,\mathcal{D}_i),\mathcal{D}'_i\big) - \nabla f_i\big(\bfw - \alpha\tilde{\nabla} f_i(\bfw,\mathcal{D}_i)\big)\right\|^2\right]+2\mathbb{E}\left[\left\|\nabla f_i\big(\bfw - \alpha\tilde{\nabla} f_i(\bfw,\mathcal{D}_i)\big) - \nabla f_i\left(\bfw - \alpha\nabla f_i(\bfw)\right)\right\|^2\right]\nonumber\\
	&\le \frac{2\sigma^2_G}{D'_i} + 2(\alpha L)^2\mathbb{E}\left[\left\|\tilde{\nabla}f_i(\bfw,\mathcal{D}_i) - \nabla f_i(\bfw)\right\|^2\right]\nonumber\tag*{(from the tower rule, smoothness of $f_i$ along with Assumption \ref{assump:bounded_variance})}\\
	&\le 2\sigma^2_G\left(\frac{1}{D'_i}+\frac{(\alpha L)^2}{D_i}\right).
\end{align}
Based on \eqref{eq:lem_vaniance_1}, we have
\begin{align}
	\left\|\mathbb{E}\left[\tilde{\nabla}F_i(\bfw) - \nabla F_i(\bfw)\right]\right\| &=  \left\|\left(I - \alpha\nabla^2 f_i(\bfw)\right)\mathbb{E}[\delta^*_2] + \mathbb{E}[\delta^*_1]\nabla f_i\left(\bfw - \alpha\nabla f_i(\bfw)\right) + \mathbb{E}[\delta^*_1\delta^*_2]\right\|\nonumber\\
	&\le  \left\|I - \alpha\nabla^2 f_i(\bfw)\right\|\left\|\mathbb{E}[\delta^*_2]\right\|\tag*{(from \eqref{eq:lem_variance_2} and submultiplicative property of matrix norm)}\nonumber\\
	&\le  \frac{\alpha\sigma_G L(1+\alpha L)}{\sqrt{D_i}}
\end{align}
which gives us the first result in Lemma \ref{lem:variance_Fi}.

By the submultiplicative property of matrix norm, 
\begin{align}
	\left\|\tilde{\nabla}F_i(\bfw) - \nabla F_i(\bfw)\right\|\le \left\|I - \alpha\nabla^2 f_i(\bfw)\right\|\left\|\delta^*_2\right\| + \left\|\nabla f_i\left(\bfw - \alpha\nabla f_i(\bfw)\right)\right\|\left\|\delta^*_1\right\| +\left\|\delta^*_1\right\|\left\|\delta^*_2\right\|.
\end{align}
Thus, we have
\begin{align}
	\left\|\tilde{\nabla}F_i(\bfw) - \nabla F_i(\bfw)\right\|^2\le 3\left\|I - \alpha\nabla^2 f_i(\bfw)\right\|^2\left\|\delta^*_2\right\|^2 + 3\left\|\nabla f_i\left(\bfw - \alpha\nabla f_i(\bfw)\right)\right\|^2\left\|\delta^*_1\right\|^2 +3\left\|\delta^*_1\right\|^2\left\|\delta^*_2\right\|^2.
\end{align}
Taking expectation on both sizes, we obtain
\begin{align}
	\mathbb{E}\left[\left\|\tilde{\nabla}F_i(\bfw) - \nabla F_i(\bfw)\right\|^2\right]&\le 3\left(1+\alpha L\right)^2\mathbb{E}\left[\left\|\delta^*_2\right\|^2\right] + 3\zeta^2\mathbb{E}\left[\left\|\delta^*_1\right\|^2\right]+3\mathbb{E}\left[\left\|\delta^*_1\right\|^2\right]\mathbb{E}\left[\left\|\delta^*_2\right\|^2\right]\nonumber\tag*{(using the fact that $\|I - \alpha\nabla^2 f_i(\bfw)\|\le1+\alpha L$)}\\
	&\le \frac{6(\alpha\sigma_G\sigma_H)^2}{D''_i}\left(\frac{1}{D'_i}+\frac{(\alpha L)^2}{D_i}\right) + 6\sigma^2_G(1+\alpha L)^2\left(\frac{1}{D'_i}+\frac{(\alpha L)^2}{D_i}\right)+\frac{3(\alpha \zeta\sigma_H)^2}{D''_i}
\end{align}
which completes the proof.

\section{Proof of Lemma \ref{lem:similarity_Fi}}
\label{proof:lem_similarity_Fi}

Recall the definition of $F_i(\bfw)$. We have
\begin{align}
	\label{eq:lem_similarity_3}
	\left\|\nabla F_i(\bfw) - F_j(\bfw)\right\|=\, & \left\|\left(I-\alpha\nabla^2 f_i(\bfw)\right)\nabla f_i\left(\bfw-\alpha\nabla f_i(\bfw)\right) - \left(I-\alpha\nabla^2 f_j(\bfw)\right)\nabla f_j\left(\bfw-\alpha\nabla f_j(\bfw)\right)\right\|\nonumber\\
	\le\, & \left\|\left(I-\alpha\nabla^2 f_i(\bfw)\right)\nabla f_i\left(\bfw-\alpha\nabla f_i(\bfw)\right) - \left(I-\alpha\nabla^2 f_j(\bfw)\right)\nabla f_i\left(\bfw-\alpha\nabla f_i(\bfw)\right)\right\|\nonumber\\
	&+ \left\|\left(I-\alpha\nabla^2 f_j(\bfw)\right)\nabla f_i\left(\bfw-\alpha\nabla f_i(\bfw)\right) - \left(I-\alpha\nabla^2 f_j(\bfw)\right)\nabla f_j\left(\bfw-\alpha\nabla f_j(\bfw)\right)\right\|\nonumber\\
	\le\, & \alpha\underbrace{\left\|\left(\nabla^2 f_i(\bfw)-\nabla^2 f_j(\bfw)\right)\nabla f_i\left(\bfw-\alpha\nabla f_i(\bfw)\right)\right\|}_A\nonumber\\
	& + \underbrace{\left\|\left(I-\alpha\nabla^2 f_j(\bfw)\right)\left(\nabla f_i\left(\bfw-\alpha\nabla f_i(\bfw)\right) - \nabla f_j\left(\bfw-\alpha\nabla f_j(\bfw)\right)\right)\right\|}_B.
\end{align}

Regrading $A$, due to the submultiplicative property of matrix norm, the following holds
\begin{align}
	\label{eq:lem_similarity_1}
	A&\le \left\|\nabla^2 f_i(\bfw)-\nabla^2 f_j(\bfw)\right\|\left\|\nabla f_i\left(\bfw-\alpha\nabla f_i(\bfw)\right)\right\|\nonumber\\
	&\le \zeta\gamma_H
\end{align}
where the last inequality follows from Assumption \ref{assump:similarity}.

Regrading $B$, similarly, we obtain
\begin{align}
	\label{eq:lem_similarity_2}
	B \le\, & \left\|I-\alpha\nabla^2 f_j(\bfw)\right\|\left\|\nabla f_i\left(\bfw-\alpha\nabla f_i(\bfw)\right) - \nabla f_j\left(\bfw-\alpha\nabla f_j(\bfw)\right)\right\|\nonumber\\
	\mathop{\le}^\textbf{(a)}\, & \left(1 + \alpha L\right)\left\|\nabla f_i\left(\bfw-\alpha\nabla f_i(\bfw)\right) - \nabla f_j\left(\bfw-\alpha\nabla f_j(\bfw)\right)\right\|\nonumber\\
	\le\, & \left(1 + \alpha L\right)\Big(\left\|\nabla f_i\left(\bfw-\alpha\nabla f_i(\bfw)\right) - \nabla f_i\left(\bfw-\alpha\nabla f_j(\bfw)\right)\right\|\nonumber\\
	& + \left\|\nabla f_i\left(\bfw-\alpha\nabla f_j(\bfw)\right) - \nabla f_j\left(\bfw-\alpha\nabla f_j(\bfw)\right)\right\|\Big)\nonumber\\
	\mathop{\le}^\text{(b)}\, & \left(1 + \alpha L\right)\left(\alpha L\left\|\nabla f_i(\bfw) - f_j(\bfw)\right\| + \gamma_G\right)\nonumber\\
	\le\, & \left(1 + \alpha L\right)^2\gamma_G
\end{align}
where (a) is derived via triangle inequality, and (b) follows from Assumptions \ref{assump:smoothness} and \ref{assump:similarity}.

Substituting \eqref{eq:lem_similarity_1} and \eqref{eq:lem_similarity_2} in \eqref{eq:lem_similarity_3}, we have
\begin{align}
	\label{eq:gap_tau1}
	\left\|\nabla F_i(\bfw) - F_j(\bfw)\right\|\le \left(1 + \alpha L\right)^2\gamma_G + \alpha \zeta \gamma_H
\end{align}
thereby completing the proof.

\section{Proof of Lemma \ref{lem:sp1_nui}}
\label{proof:lem_sp1_nui}

Since $j$ is the straggler among devices, (SP1) can be equivalently transformed to 
\begin{align}
	\label{prob:trans_sp1}
	\begin{split}
		\mathop{\min}_{\bm{\nu}}&\quad \eta_1\sum_{i\in\mathcal{N}}\frac{\iota_i}{2}c_i D_i \nu^2_i + \eta_2\frac{c_j D_j}{\nu_j}\\
		\mathrm{s.t.}&\quad 0 \le \nu_i \le \nu^\mli{max}_i,~\forall i\in\mathcal{N}\\
		&\quad \frac{c_i D_i \nu_j}{c_j D_j}\le \nu_i,~\forall i\in\mathcal{N}/j.
	\end{split}
\end{align}
Fixing $\nu_j$, for each $i\in\mathcal{N}/j$\footnote{We implicitly assume that $\mathcal{N}/j$ is not empty.}, we can show that the optimal CPU frequency $\nu^*_i$ of the problem \eqref{prob:trans_sp1} can be obtained via solving the following decomposed convex optimization problem
\begin{align}
	\label{prob:subprob_i}
	\begin{split}
		\mathop{\min}_{\nu_i}&\quad \frac{\iota_i}{2}\eta_1 c_i D_i \nu^2_i\\
		\mathrm{s.t.}&\quad 0 \le \nu_i \le \nu^\mli{max}_i\\
		&\quad \frac{c_i D_i \nu_j}{c_j D_j}\le \nu_i.
	\end{split}
\end{align}
If $\nu_j\le \frac{c_j D_j \nu^\mli{max}_i}{c_i D_i}$, the optimal solution of problem \eqref{prob:subprob_i} is given by
\begin{align}
	\label{eq:solution_vi}
	\nu^*_i = \frac{c_i D_i \nu_j}{c_j D_j}.
\end{align}
Otherwise, the problem is infeasible because the constraints are mutually contradictory. Substituting $\nu_i=\nu^*_i$ in problem \eqref{prob:trans_sp1}, we have the following problem with respect to $\nu_j$
\begin{align}
	\label{prob:subprob_j}
	\begin{split}
		\mathop{\min}_{\nu_j}\,&\quad g(\nu_j) = \underbrace{\eta_1\bigg(\sum_{i\in\mathcal{N}/j}\frac{\iota_i (c_i D_i)^3}{2(c_j D_j)^2} + \frac{\iota_j c_j D_j}{2}\bigg)}_{a_1}\nu^2_j + \underbrace{\eta_2c_j D_j}_{a_2}\frac{1}{\nu_j}\\
		\mathrm{s.t.}&\quad 0 \le \nu_j \le \frac{c_j D_j \nu^\mli{max}_i}{c_i D_i},~\forall i\in\mathcal{N}.
	\end{split}
\end{align}
We simplify the expression of $g(\nu_j)$ as $g(\nu_j)=a_1 \nu^2_j + a_2/\nu_j$ where positive constants $a_1$ and $a_2$ are defined in \eqref{prob:subprob_j}. Then, the derivative of $g(\nu_j)$ can be written as
\begin{align}
	\label{eq:deri_g}
	g'(\nu_j) = 2a_1\nu_j - \frac{a_2}{\nu^2_j}.
\end{align}
Based on \eqref{eq:deri_g}, the minimum value of $g(\nu_j)$ is obtained at its stationary point $\bar{\nu}_j\coloneqq{g'}^{-1}(0)=\sqrt[3]{\frac{a_2}{2a_1}}$. Thus, the optimal solution of problem \eqref{prob:subprob_j} is
\begin{align}
	\label{eq:solution_vj}
	\nu^*_j = \min\left\{\sqrt[3]{\frac{a_2}{2a_1}},\min_{i\in\mathcal{N}}\frac{c_j D_j \nu^\mli{max}_j}{c_i D_i}\right\}.
\end{align}
Combining \eqref{eq:solution_vi} and \eqref{eq:solution_vj}, we complete the proof.

\section{Proof of Lemma \ref{lemma:delta}}
\label{proof:lem_delta}

Given $\bm{z}^*$ and $\delta^*$, $p^*_i$ can be obtained via solving the following problem
\begin{align}
	\label{prob:find_pi}
	\begin{split}
		\mathop{\min}_{p_i}&\quad  \sum_{m\in\mathcal{M}}\frac{ z^*_{i,m} p_i}{\log_2\left(1+\frac{h_i p_i}{I_{m} + BN_0}\right)}\\
		\mathrm{s.t.} &~\,\quad0 \le p_i \le p^\mli{max}_i\\
		&\quad\sum_{m\in\mathcal{M}}\frac{z^*_{i,m}S}{B\log_2\left(1+\frac{h_i p_i}{I_{m} + BN_0}\right)}\le\delta^*
	\end{split}
\end{align}
where we eliminate $u_i$, $\eta_1$, $S$, and $B$ in the objective. Clearly, if $\sum_{m\in\mathcal{M}}z^*_{i,m}=0$, then $p^*_i=0$. If the constraints in \eqref{prob:find_pi} are mutually contradictory, \ie
\begin{align}
	p^\mli{max}_i < \frac{(I_{m^*_i}+BN_0)(2^\frac{S}{B\delta^*}-1)}{h_i}
\end{align}
then $z^*_{i,m}$ must be 0, which gives \eqref{eq:Delta_3}.

When there exists $m^*_i$ such that $z^*_{i,m^*_i}=1$, by denoting $\tilde{p}_i \coloneqq \frac{h_i p_i}{I_{m^*_i} + BN_0}$ and rearranging the terms, we can transform the problem \eqref{prob:find_pi} to 
\begin{align}
	\begin{split}
		\mathop{\min}_{\tilde{p}_i}&\quad  g_3(\tilde{p}_i)= \frac{ \tilde{p}_i}{\log_2\left(1+\tilde{p}_i\right)}\\
		\mathrm{s.t.} &\quad 0 \le \tilde{p}_i \le \frac{h_i p^\mli{max}_i}{I_{m^*_i} + BN_0}\\
		&\quad 2^\frac{S}{B\delta^*} - 1 \le \tilde{p}_i.
	\end{split}
\end{align}
We have
\begin{align}
	\label{eq:g3_gradient}
	g'_3(\tilde{p}_i) = \frac{\log_2(1+\tilde{p}_i)-\tilde{p}_i/\left((1+\tilde{p}_i)\ln2\right)}{\left(\log_2(1+\tilde{p}_i)\right)^2}.
\end{align}
Denoting the numerator of \eqref{eq:g3_gradient} as $f_3(\tilde{p}_i)$, we have $f_3(0) = 0$ and 
\begin{align}
	f'_3(\tilde{p}_i) = \frac{1}{\ln2}\left(\frac{1}{(1+\tilde{p}_i)} - \frac{1}{(1+\tilde{p}_i)^2}\right)>0,\quad\text{when}~\tilde{p}_i>0
\end{align}
which implies that $g_3(\tilde{p}_i)$ is monotonically increasing with $\tilde{p}_i>0$. Thus, recalling the definition of $\tilde{p}_i$, due to \eqref{eq:Delta_3}, we obtain
\begin{align}
	p^*_i = \frac{(I_{m^*_i}+BN_0)(2^\frac{S}{B\delta^*}-1)}{h_i}
\end{align}
which completes the proof of \eqref{eq:Delta_2}.

\section{Proof of Lemma \ref{lem:sp2_pi}}
\label{proof:lem_sp2_pi}

If $j$ denotes the straggler among all devices and $\mathcal{N}^*\neq\emptyset$, we can rewrite \eqref{eq:straggler_co} for each $i\in\mathcal{N}^*$ as
\begin{align}
	\label{eq:constraint_strag}
	&\sum_{m\in\mathcal{M}}z^*_{i,m}\frac{S}{B\log_2\left(1+\frac{h_i p^*_i}{I_{m} + BN_0}\right)} \le \sum_{m\in\mathcal{M}}z^*_{j,m}\frac{S}{B\log_2\left(1+\frac{h_j p^*_j}{I_{m} + BN_0}\right)}\nonumber\\
	\Leftrightarrow\quad&\log_2\left(1+\frac{h_j p^*_j}{I_{m^*_j} + BN_0}\right) \le \log_2\left(1+\frac{h_i p^*_i}{I_{m^*_i} + BN_0}\right)\nonumber\\
	\Leftrightarrow\quad&\frac{(I_{m^*_i}+BN_0)h_j}{(I_{m^*_j}+BN_0)h_i}p^*_j\le p^*_i.
\end{align}
Based on \eqref{eq:constraint_strag}, by eliminating constants $B$ and $S$, we transform (SP2) under fixed $\bm{z}^*$ to 
\begin{align}
	\label{prob:transformed_sp2}
	\begin{split}
		\mathop{\min}_{\{p_i\}_{i\in\mathcal{N}^*}}&\quad \sum_{i\in\mathcal{N}^*}\eta_1\frac{ p_i}{\log_2\left(1+\frac{h_i p_i}{I_{m^*_i} + BN_0}\right)}+\eta_2\frac{1}{\log_2\left(1+\frac{h_j p_j}{I_{m^*_j} + BN_0}\right)}\\
		\mathrm{s.t.} ~\;&~\,\quad0 \le p_i \le p^\mli{max}_i,~\forall i\in\mathcal{N}^*\\
		&~\quad\frac{(I_{m^*_i}+BN_0)h_j}{(I_{m^*_j}+BN_0)h_i}p_j\le p_i,~\forall i\in\mathcal{N}^*/j.
	\end{split}
\end{align}
Due to \eqref{eq:constraint_strag}, for each $i\in\mathcal{N}^*/j$, we can obtain the optimal solution $p^*_i$ of \eqref{prob:transformed_sp2} (abusing notations) via solving the decomposed problem as follows
\begin{align}
	\label{prob:decomposed_pi}
	\begin{split}
		\mathop{\min}_{\tilde{p}_i}\,&\quad g_3(\tilde{p}_i)= \frac{ \tilde{p}_i}{\log_2\left(1+\tilde{p}_i\right)}\\
		\mathrm{s.t.} &\quad0 \le \tilde{p}_i \le \frac{h_i p^\mli{max}_i}{I_{m^*_i} + BN_0}\\
		&\quad\frac{h_j}{I_{m^*_j}+BN_0}p_j\le \tilde{p}_i
	\end{split}
\end{align}
where $\tilde{p}_i \coloneqq \frac{h_i p_i}{I_{m^*_i} + BN_0}$.
Note that in \eqref{prob:decomposed_pi} we consider the nontrivial case where $\mathcal{N}/j$ is not empty. %Next, we give the optimal solution of \eqref{prob:decomposed_pi} to shown the effect of $p_j$ on $p^*_i$. 
It is easy to see that 
\begin{align}
	\label{eq:g3_derivative}
	g'_3(\tilde{p}_i) = \frac{\log_2(1+\tilde{p}_i)-\tilde{p}_i/\left((1+\tilde{p}_i)\ln2\right)}{\left(\log_2(1+\tilde{p}_i)\right)^2}.
\end{align}
Denoting the numerator of \eqref{eq:g3_derivative} as $f_3(\tilde{p}_i)$, we have $f(0) = 0$ and 
\begin{align}
	f'_3(\tilde{p}_i) = \frac{1}{\ln2}\left(\frac{1}{(1+\tilde{p}_i)} - \frac{1}{(1+\tilde{p}_i)^2}\right)>0,\quad\text{when}~\tilde{p}_i>0
\end{align}
which implies that $g_3(\tilde{p}_i)$ is monotonically increasing with $\tilde{p}_i>0$. Thus, if $p_j\le\frac{h_i p^\mli{max}_i(I_{m^*_j}+BN_0)}{h_j(I_{m^*_i} + BN_0)}$, recalling the definition of $\tilde{p}_i$, we have
\begin{align}
	p^*_i = \frac{h_j(I_{m^*_i} + BN_0)}{h_i(I_{m^*_j}+BN_0)}p_j,~\forall i\in\mathcal{N}^*/j.
\end{align}
Otherwise, problem \eqref{prob:decomposed_pi} is infeasible.

Similar to \eqref{prob:decomposed_pi}, denote $\tilde{p}_j\coloneqq \frac{h_j p_j}{I_{m^*_j}+BN_0}$ and institute $p_i = p^*_i$ in \eqref{prob:transformed_sp2} (noting that $p_i=\frac{(I_{m^*_i} + BN_0)\tilde{p}_j}{h_i}$), whereby we have the following problem regarding $\tilde{p}_j$
\begin{align}
	\label{prob:g4}
	\begin{split}
		\mathop{\min}_{\tilde{p}_j}&\quad g_4(\tilde{p}_j)= \underbrace{\eta_1\sum_{i\in\mathcal{N}^*}\frac{I_{m^*_i} + BN_0}{h_i}}_{b_1}\cdot\frac{\tilde{p}_j}{\log_2\left(1+\tilde{p}_j\right)}+\frac{\eta_2}{\log_2\left(1+\tilde{p}_j\right)}\\
		\mathrm{s.t.} &\quad0\le\tilde{p}_j\le\frac{h_i p^\mli{max}_i}{I_{m^*_i} + BN_0},~\forall i\in\mathcal{N}^*.
	\end{split}
\end{align}
Denoting positive constant $b_1$ as in \eqref{prob:g4}, we can write $
g_4(\tilde{p}_j)=\frac{b_1\tilde{p}_j}{\log_2(1+\tilde{p}_j)}+\frac{\eta_2}{\log_2(1+\tilde{p}_j)}$ and obtain
\begin{align}
	\label{eq:derivative_g4}
	g'_4(\tilde{p}_j)&=b_1 g'_3(\tilde{p}_j)-\frac{\eta_2}{(1+\tilde{p}_j)\left(\log_2(1+\tilde{p}_j)\right)^2\ln2}\nonumber\\
	&=b_1\frac{\log_2(1+\tilde{p}_j)-\tilde{p}_j/\left((1+\tilde{p}_j)\ln2\right)}{\left(\log_2(1+\tilde{p}_j)\right)^2}-\frac{\eta_2}{(1+\tilde{p}_j)\left(\log_2(1+\tilde{p}_j)\right)^2\ln2}\nonumber\\
	&=\frac{b_1\big((1+\tilde{p}_j)\log_2(1+\tilde{p}_j)\ln2-\tilde{p}_j\big)-\eta_2}{(1+\tilde{p}_j)\left(\log_2(1+\tilde{p}_j)\right)^2\ln2}.
\end{align}
Next, we show that $g_4(\tilde{p}_j)$ has unique minimum point. Denoting the numerator of \eqref{eq:derivative_g4} as $f_4(\tilde{p}_j)$, we have $f_4(0)=-\eta_2\le0$ and 
\begin{align}
	f'_4(\tilde{p}_j) = b_1\log_2(1+\tilde{p}_j)\ln2\ge0,\quad\text{when}~\tilde{p}_j\ge0
\end{align}
which implies $f_4(\tilde{p}_j)$ is monotonically increasing. Besides, it can be shown that $f_4(b_2)\ge0$, where $b_2= 2^{(1+\sqrt{\max\{\frac{\eta_2}{b_1},1\}-1})/\ln2}$. Thus, there must exist a unique point $\tilde{p}^0_j\in(0,b_2]$ such that $g'_4(\tilde{p}^0_j)=f_4(\tilde{p}^0_j)=0$ holds, and it is also the minimum value of $g_4(\tilde{p}_j)$. 

Accordingly, the optimal solution of the problem \eqref{prob:g4} can be expressed as
\begin{align}
	\tilde{p}^*_j=\min\left\{\tilde{p}^0_j,\min_{i\in\mathcal{N^*}}\frac{h_i p^\mli{max}_i}{I_{m^*_i} + BN_0}\right\}.
\end{align}
Therefore, we complete the proof.

\section{Proof of Theorem \ref{thm:gap}}
\label{proof:thm:gap}

From Lemma \ref{lem:smoothness_Fi} (smoothness of $F_i$), for any $\bfw_1,\bfw_2\in\mathbb{R}^d$, we have
\begin{align}
	\label{eq:thm_tau1_1}
	F(\bfw_2)\le F(\bfw_1) + \nabla F(\bfw_1)^\top(\bfw_2-\bfw_1) + \frac{L_F}{2}\|\bfw_2 - \bfw_1\|^2.
\end{align}
Combining \eqref{eq:thm_tau1_1} with \eqref{eq:local_update}, we have
\begin{align}
    \label{eq:lip_bound}
	F(\bfw^{k+1})&\le F(\bfw^k) + \nabla F(\bfw^k)^\top(\bfw^{k+1}-\bfw^k) + \frac{L_F}{2}\|\bfw^{k+1} - \bfw^k\|^2\nonumber\\
	&=F(\bfw^k) - \beta\nabla F(\bfw^k)^\top\left(\frac{1}{n_k}\sum_{i\in\mathcal{N}_k}\tilde{\nabla}F_i(\bfw^k)\right) + \frac{L_F\beta^2}{2}\left\|\frac{1}{n_k}\sum_{i\in\mathcal{N}_k}\tilde{\nabla}F_i(\bfw^k)\right\|^2.
\end{align}
Denoting
\begin{align}
	G^k\coloneqq\beta\nabla F(\bfw^k)^\top\left(\frac{1}{n_k}\sum_{i\in\mathcal{N}_k}\tilde{\nabla}F_i(\bfw^k)\right) - \frac{L_F\beta^2}{2}\left\|\frac{1}{n_k}\sum_{i\in\mathcal{N}_k}\tilde{\nabla}F_i(\bfw^k)\right\|^2
\end{align}
we have $\mathbb{E}[F(\bfw^k) - F(\bfw^{k+1})]\ge \mathbb{E}[G^k]$. Next, we bound $\mathbb{E}[G^k]$ from below. 

First, for any $i\in\mathcal{N}$, we rewrite $\nabla F(\bfw^K)$ as
\begin{align}
	\label{eq:tau1_4}
	\nabla F(\bfw^k) =& \underbrace{\frac{1}{n}\sum_{j\in\mathcal{N}}\nabla F_j(\bfw^k) - \nabla F_i(\bfw^k)}_{A_i}+ \underbrace{ \nabla F_i(\bfw^k) - \tilde{\nabla} F_i(\bfw^k)}_{B_i }+\tilde{\nabla} F_i(\bfw^k)
\end{align}
%where $\mathcal{N}'_k$ is a uniformly selected set at random with size $n_k$, which is just used for analysis. Then, we bound $\mathbb{E}[\|A\|^2]$, $\mathbb{E}[\|B\|^2]$, and $\mathbb{E}[\|C\|^2]$ respectively. 
and bound $\mathbb{E}[\|A_i\|^2]$ by
\begin{align}
	\label{eq:expect_A}
	\mathbb{E}[\|A_i\|^2]&=\mathbb{E}\bigg[\Big\|\nabla F_i(\bfw^k)-\frac{1}{n}\sum_{j\in\mathcal{N}}\nabla F_j(\bfw^k)\Big\|^2\bigg]\nonumber\\
	&= \mathbb{E}\bigg[\Big\|\frac{1}{n}\sum_{j\in\mathcal{N}}\left(\nabla F_j(\bfw^k)-\nabla F_i(\bfw^k)\right)\Big\|^2\bigg]\nonumber\\
	&\mathop{\le}^\text{(a)} \frac{1}{n}\sum_{j\in\mathcal{N}}\mathbb{E}\left[\left\|\nabla F_j(\bfw^k)-\nabla F_i(\bfw^k)\right\|^2\right] \nonumber\\
	&\mathop{\le}^\text{(b)}\left(1 + \alpha L\right)^2\gamma_G + \alpha \zeta \gamma_H
\end{align}
where (b) is derived from Lemma \ref{lem:similarity_Fi}. Inequality (a) follows from the fact that, for any $a_i\in\mathbb{R}^d$ and $b_i\in\mathbb{R}$,
\begin{align}
	\label{eq:inequality_cauthy}
	\left\|\sum_{i\in\mathcal{N}_k}b_i a_i \right\|^2 &= \sum^d_{j=1}\left(\sum_{i\in\mathcal{N}_k}b_i a^{(j)}_i\right)^2\nonumber\\
	&\le \sum^d_{j=1}\left(\sum_{i\in\mathcal{N}_k}\left(a^{(j)}_i\right)^2\right)\left(\sum_{i\in\mathcal{N}_k}b^2_i\right)\nonumber\\
	&= \left(\sum^d_{j=1}\sum_{i\in\mathcal{N}_k}\left(a^{(j)}_i\right)^2\right)\left(\sum_{i\in\mathcal{N}_k}b^2_i\right)\nonumber\\
	&= \left(\sum_{i\in\mathcal{N}_k}\left\|a_i\right\|^2\right)\left(\sum_{i\in\mathcal{N}_k}b^2_i\right)
\end{align}
where $a^{(j)}_i$ denotes the $j$-th coordinate of $a_i$, and the first inequality is derived by using Cauchy-Schwarz inequality.

Regarding $\mathbb{E}[\|B_i \|^2]$, from Lemma \ref{lem:variance_Fi}, 
\begin{align}
	\label{eq:expect_B}
	\mathbb{E}[\|B_i \|^2]\le\sigma^2_{F_i}.
\end{align}
Substituting \eqref{eq:tau1_4} in $\mathbb{E}[G^k]$, we obtain
\begin{align}
	\label{eq:singlestep_1}
	\mathbb{E}[G^k] &=  \mathbb{E}\left[\beta\nabla F(\bfw^k)^\top\left(\frac{1}{n_k}\sum_{i\in\mathcal{N}_k}\tilde{\nabla}F_i(\bfw^k)\right) - \frac{L_F\beta^2}{2}\left\|\frac{1}{n_k}\sum_{i\in\mathcal{N}_k}\tilde{\nabla}F_i(\bfw^k)\right\|^2\right]\nonumber\\
	&=  \mathbb{E}\left[\frac{\beta}{n_k}\sum_{i\in\mathcal{N}_k}\left(A_i+B_i +\tilde{\nabla} F_i(\bfw^k)\right)^\top\tilde{\nabla} F_i(\bfw^k) - \frac{L_F\beta^2}{2}\left\|\frac{1}{n_k}\sum_{i\in\mathcal{N}_k}\tilde{\nabla}F_i(\bfw^k)\right\|^2\right]\nonumber\\
	&=  \mathbb{E}\left[\frac{\beta}{n_k}\sum_{i\in\mathcal{N}_k}\left(A^\top_i \tilde{\nabla} F_i(\bfw^k)+B^\top_i\tilde{\nabla} F_i(\bfw^k)+\left\|\tilde{\nabla} F_i(\bfw^k)\right\|^2\right)\right] - \frac{L_F\beta^2}{2}\mathbb{E}\left[\left\|\frac{1}{n_k}\sum_{i\in\mathcal{N}_k}\tilde{\nabla}F_i(\bfw^k)\right\|^2\right].
\end{align}
Note that, for any random variables $a,b\in\mathbb{R}^d$ and for $c\neq0$,
\begin{align}
	\label{eq:inequality_ab}
	\mathbb{E}\left[a^\top b\right]\ge- \mathbb{E}\left[2\left\|\left(ca\right)^\top\frac{b}{2c}\right\|\right]\ge -\left(c^2\mathbb{E}\left[\|a\|^2\right] + \frac{\mathbb{E}\left[\|b\|^2\right]}{4c^2}\right).
\end{align}
With $g(x)\coloneqq x\mathbb{E}[\|a\|^2]+\mathbb{E}[\|b\|^2]/(4x)$, we have
\begin{align}
	\label{eq:tightness_bound}
	x^*=\mathop{\arg\min}_x g(x) = \sqrt{\frac{\mathbb{E}\left[\left\|b\right\|^2\right]}{4\mathbb{E}\left[\left\|a\right\|^2\right]}}
\end{align}
which implies that if we set $c^2=x^*$, the lower bound in \eqref{eq:inequality_ab} becomes tight. Thus, substituting this in \eqref{eq:inequality_ab} and rearranging the terms, we have
\begin{align}
	\label{eq:inequality_ab_opt}
	\mathbb{E}\left[a^\top b\right]\ge-\sqrt{\mathbb{E}\left[\left\|a\right\|^2\right]\mathbb{E}\left[\left\|b\right\|^2\right]}.
\end{align}

%For ease of computation, we set $c^2 = \mathbb{E}[\|b\|]/\sqrt{4\mathbb{E}[\|a\|^2]}\approx x^*$ and rewrite \eqref{eq:inequality_ab} as 

Based on \eqref{eq:inequality_ab_opt} along with \eqref{eq:inequality_cauthy}, due to the tower rule, we can bound $\mathbb{E}[G^k]$ as follows
\begin{align}
	\mathbb{E}[G^k] =\;&  \mathbb{E}\left[\frac{\beta}{n_k}\sum_{i\in\mathcal{N}_k}\left(A^\top_i \tilde{\nabla} F_i(\bfw^k)+B^\top_i\tilde{\nabla} F_i(\bfw^k)+\left\|\tilde{\nabla} F_i(\bfw^k)\right\|^2\right)\right] - \frac{L_F\beta^2}{2}\mathbb{E}\bigg[\Big\|\frac{1}{n_k}\sum_{i\in\mathcal{N}_k}\tilde{\nabla}F_i(\bfw^k)\Big\|^2\bigg] \tag*{(from \eqref{eq:singlestep_1})}\nonumber \\
	=\;& \mathbb{E}\left[\frac{\beta}{n_k}\sum_{i\in\mathcal{N}_k}\left(\mathbb{E}\left[A^\top_i \tilde{\nabla} F_i(\bfw^k)\Big | \mathcal{N}_k\right]+\mathbb{E}\left[B^\top_i\tilde{\nabla} F_i(\bfw^k)\Big| \mathcal{N}_k\right]+\left\|\tilde{\nabla} F_i(\bfw^k)\right\|^2\right)\right] \nonumber\\
	&- \frac{L_F\beta^2}{2}\mathbb{E}\bigg[\Big\|\frac{1}{n_k}\sum_{i\in\mathcal{N}_k}\tilde{\nabla}F_i(\bfw^k)\Big\|^2\bigg] \tag*{(using the tower rule)}\nonumber \\
	\ge\;&\mathbb{E}\left[\frac{\beta}{n_k}\sum_{i\in\mathcal{N}_k}\left(-\sqrt{\mathbb{E}\left[\left\|A_i\right\|^2\Big | \mathcal{N}_k\right]}\sqrt{ \mathbb{E}\Big[\big\|\tilde{\nabla} F_i(\bfw^k)\big\|^2\Big | \mathcal{N}_k\Big]}-\sqrt{\mathbb{E}\left[\left\|B_i \right\|^2\Big | \mathcal{N}_k\right]}\sqrt{ \mathbb{E}\Big[\big\|\tilde{\nabla} F_i(\bfw^k)\big\|^2\Big | \mathcal{N}_k\Big]}\right)\right]\nonumber\\
	&+ \beta\left(1-\frac{L_F\beta}{2}\right)\mathbb{E}\bigg[\frac{1}{n_k}\sum_{i\in\mathcal{N}_k}\Big\|\tilde{\nabla}F_i(\bfw^k)\Big\|^2\bigg] \tag*{(using \eqref{eq:inequality_ab_opt} and \eqref{eq:inequality_cauthy}, and rearranging terms)}\nonumber \\
	\ge\;& \mathbb{E}\left[\frac{\beta}{n_k}\sum_{i\in\mathcal{N}_k}-\left(\sqrt{\left(1 + \alpha L\right)^2\gamma_G + \alpha \zeta \gamma_H}+\sigma_{F_i}\right)\sqrt{ \mathbb{E}\Big[\big\|\tilde{\nabla} F_i(\bfw^k)\big\|^2\Big | \mathcal{N}_k\Big]}\right]\nonumber\tag*{(from \eqref{eq:expect_A} and \eqref{eq:expect_B})}\\
	&+ \beta\left(1-\frac{L_F\beta}{2}\right)\mathbb{E}\bigg[\frac{1}{n_k}\sum_{i\in\mathcal{N}_k}\Big\|\tilde{\nabla}F_i(\bfw^k)\Big\|^2\bigg] \nonumber \\
	\label{eq:lowerbound_Gk}
	=\;& \beta\mathbb{E}\left[\frac{1}{n_k}\sum_{i\in\mathcal{N}_k}\left(1-\frac{L_F\beta}{2}\right)\big\|\tilde{\nabla} F_i(\bfw^k)\big\|^2-\left(\sqrt{\left(1 + \alpha L\right)^2\gamma_G + \alpha \zeta \gamma_H}+\sigma_{F_i}\right)\sqrt{ \mathbb{E}\Big[\big\|\tilde{\nabla} F_i(\bfw^k)\big\|^2\Big | \mathcal{N}_k\Big]}\right]
\end{align}
thereby completing the proof.

% \section{Proof of Theorem \ref{thm:find_z}}
% \label{proof:thm_find_z}
 
% From Lemma \ref{lemma:delta}, \eqref{eq:find_z} holds if $\mu_{i,m}\le p^\mli{max}_i$. On the other hand, when $\mu_{i,m}> p^\mli{max}_i$, if device $i$ is selected, the transmission delay will be larger than $\delta^*$ (see the proof of Lemma \ref{lemma:delta}), which is contradictory to the given condition. Thus, when $\mu_{i,m}> p^\mli{max}_i$, we set $e_{i,m}=u_i + 1$, ensuring device $i$ not to be selected.

\section{Proof of Theorem \ref{thm:local_minima}} 
\label{proof:thm_local_minima}

First, it is easy to see that $g_2(\bm{z},\bm{p})$ is upper bounded by $\sum_{i\in\mathcal{N}}u_i$. Besides, due to \eqref{eq:find_z}, we have
\begin{align}
	\bm{z}^{t+1} = \mathop{\arg\max}_{\bm{z}}\left\{\sum_{i\in\mathcal{N}}\sum_{m\in\mathcal{M}}z_{i,m}\left(u_i-\frac{\eta_1\delta^t(I_m+BN_0)(2^\frac{S}{B\delta^t}-1)}{h_i}\right),~\mathrm{s.t.}~\eqref{eq:constraint_zm}-\eqref{eq:constraint_z01}\right\}.
\end{align}
Based on \eqref{eq:optimal_pi}, the optimal transmission power $\hat{\bm{p}}^{t+1}$ corresponding to $\bm{z}^{t+1}$ is given by
\begin{align}
	\hat{p}^{t+1}_i =
	\begin{cases}
		\frac{(I_{m^*_i}+BN_0)(2^\frac{S}{B\delta^t}-1)}{h_i}&,~\text{if}~\sum_{m\in\mathcal{M}}z^{t+1}_{i,m} = 1\\
		0&,~\text{otherwise}
	\end{cases}
\end{align}
where we slightly abuse the notation and denote the RB block allocated to $i$ as $m^*_i$, \ie, $z^{t+1}_{i,m^*_i}=1$. Thus, we have $g(\bm{z}^t,\bm{p}^t)\le g(\bm{z}^{t+1},\hat{\bm{p}}^{t+1})$. From \eqref{eq:find_optimal_p}, $g(\bm{z}^{t+1},\hat{\bm{p}}^{t+1})\le g(\bm{z}^{t+1},\bm{p}^{t+1})$. Using these two inequalities, we obtain
\begin{align}
	g(\bm{z}^t,\bm{p}^t)\le g(\bm{z}^{t+1},\bm{p}^{t+1})
\end{align}
thereby completing the proof.

\section{Proof of Corollaries \ref{coro:gap} and \ref{coro:extension}}
\label{proof:corollaries}

\subsection{Proof of Corollary \ref{coro:gap}}

Denote auxiliary variable $\bfw^{k,t}\coloneqq\frac{1}{n_k}\sum_{i\in\mathcal{N}_k}\bfw^{k,t}_i$ as a ``virtual'' global model in local step $t$ ($\bfw^{k,t}_i$ is defined in \eqref{eq:local_update}). Similar to \eqref{eq:lip_bound}, the following holds
\begin{align}
    F(\bfw^{k,t+1})&\le F(\bfw^{k,t}) + \nabla F(\bfw^{k,t})^\top(\bfw^{k,t+1}-\bfw^{k,t}) + \frac{L_F}{2}\|\bfw^{k,t+1} - \bfw^{k,t}\|^2\tag*{(from Lemma \ref{lem:smoothness_Fi})}\\
    &=F(\bfw^{k,t}) + \nabla F(\bfw^{k,t})^\top\left(\frac{1}{n_k}\sum_{i\in\mathcal{N}_k}\bfw^{k,t+1}_i -\bfw^{k,t}_i\right) + \frac{L_F}{2}\|\bfw^{k,t+1} - \bfw^{k,t}\|^2\tag*{(from the definition of $\theta^{k,t}$)}\\
	&=F(\bfw^{k,t}) - \beta\nabla F(\bfw^{k,t})^\top\left(\frac{1}{n_k}\sum_{i\in\mathcal{N}_k}\tilde{\nabla}F_i(\bfw^{k,t})\right) + \frac{L_F\beta^2}{2}\left\|\frac{1}{n_k}\sum_{i\in\mathcal{N}_k}\tilde{\nabla}F_i(\bfw^{k,t})\right\|^2.\tag*{(from \eqref{eq:local_update})}
\end{align}
Denote 
\begin{align}
    G^{k,t}\coloneqq \frac{\beta}{n_k}\sum_{i\in\mathcal{N}_k}\nabla F(\bfw^{k,t})^\top\tilde{\nabla}F_i(\bfw^{k,t}) - \frac{L_F\beta^2}{2}\left\|\frac{1}{n_k}\sum_{i\in\mathcal{N}_k}\tilde{\nabla}F_i(\bfw^{k,t})\right\|^2.
\end{align}
We have $\mathbb{E}[F(\bfw^{k,t})-F(\bfw^{k,t+1})]\ge\mathbb{E}[G^{k,t}]$. Rewrite $\nabla F(\bfw^{k,t})$ for each $i\in\mathcal{N}_k$ as follows
\begin{align}
    \label{eq:mul_tau_1}
    \nabla F(\bfw^{k,t}) = \underbrace{\nabla F(\bfw^{k,t}) - \nabla F(\bfw^{k,t}_i)}_{C_i} + \underbrace{\nabla F(\bfw^{k,t}_i) - \nabla F_i(\bfw^{k,t}_i)}_{A_i} + \underbrace{\nabla F_i(\bfw^{k,t}_i) - \tilde{\nabla} F_i(\bfw^{k,t}_i)}_{B_i} + \tilde{\nabla} F_i(\bfw^{k,t}_i).
\end{align}
Note that in \eqref{eq:mul_tau_1}, $A_i$ and $B_i$ are similar to those in \eqref{eq:tau1_4} with $\theta^k$ being replaced by $\theta^{k,t}_i$. Thus, from \eqref{eq:expect_A} and \eqref{eq:expect_B}, $\mathbb{E}[\|A_i\|^2]$ and $\mathbb{E}[\|B_i\|^2]$ are bounded by
\begin{align}
    \label{eq:bound_Ai}
    \mathbb{E}[\|A_i\|^2] &\le \left(1 + \alpha L\right)^2\gamma_G + \alpha \zeta \gamma_H\\
    \label{eq:bound_Bi}
    \mathbb{E}[\|B_i\|^2] &\le \sigma^2_{F_i}.
\end{align}
Regarding $\mathbb{E}[\|C_i\|^2]$, we can write 
\begin{align}
    \mathbb{E}[\|C_i\|^2] &= \mathbb{E}\left[\left\|\nabla F(\bfw^{k,t}) - \nabla F(\bfw^{k,t}_i)\right\|^2\right]\nonumber\\
    &\le L^2_F\mathbb{E}\left[\left\|\bfw^{k,t} - \bfw^{k,t}_i\right\|^2\right].\tag*{(From Lemma \ref{lem:smoothness_Fi})}
\end{align}
To bound $\mathbb{E}[\|C_i\|^2]$, denote 
\begin{align}
    a_t\coloneqq\max_{i\in\mathcal{N}_k}\left\{\mathbb{E}\left[\left\|\bfw^{k,t} - \bfw^{k,t}_i\right\|^2\right]\right\}
\end{align}
with $a_0 = 0$. Then we have
\begin{align}
    a_{t+1} &= \max_{i\in\mathcal{N}_k}\left\{\mathbb{E}\left[\left\|\bfw^{k,t+1} - \bfw^{k,t+1}_i\right\|^2\right]\right\}\nonumber\\
    &= \max_{i\in\mathcal{N}_k}\left\{\mathbb{E}\left[\left\|\bfw^{k,t}_i - \beta\tilde{\nabla}F_i(\bfw^{k,t}_i) - \frac{1}{n_k}\sum_{j\in\mathcal{N}_k}\left(\bfw^{k,t}_j - \beta\tilde{\nabla}F_j(\bfw^{k,t}_j)\right) \right\|^2\right]\right\}\nonumber\\
    &\le \max_{i\in\mathcal{N}_k}\left\{(1+\epsilon)\mathbb{E}\left[\left\|\bfw^{k,t}_i - \frac{1}{n_k}\sum_{j\in\mathcal{N}_k}\bfw^{k,t}_j \right\|^2\right] + \beta^2\left(1+\frac{1}{\epsilon}\right)\mathbb{E}\left[\left\|\tilde{\nabla}F_i(\bfw^{k,t}_i) - \frac{1}{n_k}\sum_{j\in\mathcal{N}_k}\tilde{\nabla}F_j(\bfw^{k,t}_j)\right\|^2\right]\right\}\nonumber\tag*{(from \cite[Equation (68)]{fallah2020personalized})}\\
    &\le (1+\epsilon)\max_{i\in\mathcal{N}_k}\left\{\mathbb{E}\left[\left\|\bfw^{k,t}_i - \frac{1}{n_k}\sum_{j\in\mathcal{N}_k}\bfw^{k,t}_j \right\|^2\right]\right\} + \beta^2\left(1+\frac{1}{\epsilon}\right)\max_{i\in\mathcal{N}_k}\left\{\mathbb{E}\left[\left\|\tilde{\nabla}F_i(\bfw^{k,t}_i) - \frac{1}{n_k}\sum_{j\in\mathcal{N}_k}\tilde{\nabla}F_j(\bfw^{k,t}_j)\right\|^2\right]\right\}\nonumber\\
    \label{eq:sequence_a}
    &= (1+\epsilon)a_t + \beta^2\left(1+\frac{1}{\epsilon}\right)\max_{i\in\mathcal{N}_k}\Bigg\{\underbrace{\mathbb{E}\bigg[\bigg\|\tilde{\nabla}F_i(\bfw^{k,t}_i) - \frac{1}{n_k}\sum_{j\in\mathcal{N}_k}\tilde{\nabla}F_j(\bfw^{k,t}_j)\bigg\|^2\bigg]}_{H_i}\Bigg\}
\end{align}
for any $\epsilon>0$. For $H_i$, we first write
\begin{align}
    \label{eq:bound_Hi}
    H_i \le 2\underbrace{\mathbb{E}\left[\left\|\nabla F_i(\bfw^{k,t}_i) - \frac{1}{n_k}\sum_{j\in\mathcal{N}_k}\nabla F_j(\bfw^{k,t}_j)\right\|^2\right]}_{H_{i,1}}+ 2\underbrace{\mathbb{E}\left[\left\| \frac{1}{n_k}\sum_{j\in\mathcal{N}_k}\left(\nabla F_j(\bfw^{k,t}_j)-\tilde{\nabla}F_j(\bfw^{k,t}_j)\right) + \left(\tilde{\nabla}F_i(\bfw^{k,t}_i)-\nabla F_i(\bfw^{k,t}_i)\right)\right\|^2\right]}_{H_{i,2}}.
\end{align}
Next, we bound $H_{i,1}$ and $H_{i,2}$ separately.
\begin{itemize}
    \item \emph{Upper Bound of $H_{i,1}$:} Based on Lemma \ref{lem:similarity_Fi}, we have
    \begin{align}
        \label{eq:H_i1}
        H_{i,1} =\;& \mathbb{E}\left[\left\|\frac{1}{n_k}\sum_{j\in\mathcal{N}_k}\left(\nabla F_i(\bfw^{k,t}) - \nabla F_j(\bfw^{k,t})\right) + \nabla F_i(\bfw^{k,t}_i) - \nabla F_i(\bfw^{k,t}) + \frac{1}{n_k}\sum_{j\in\mathcal{N}_k}\left( \nabla F_j(\bfw^{k,t}) - \nabla F_j(\bfw^{k,t}_j)\right)\right\|^2\right]\nonumber\\
        \le\;& \frac{2}{n_k}\sum_{j\in\mathcal{N}_k}\mathbb{E}\left[\left\|\nabla F_i(\bfw^{k,t}) - \nabla F_j(\bfw^{k,t}) \right\|^2\right]\tag*{(from \eqref{eq:inequality_cauthy})}\\
        &+ 2\mathbb{E}\left[\left\|\nabla F_i(\bfw^{k,t}_i) - \nabla F_i(\bfw^{k,t}) + \frac{1}{n_k}\sum_{j\in\mathcal{N}_k}\left( \nabla F_j(\bfw^{k,t}) - \nabla F_j(\bfw^{k,t}_j)\right)\right\|^2\right]\nonumber\\
        \le\;& 2\gamma^2_G +  2\mathbb{E}\left[\left\|\nabla F_i(\bfw^{k,t}_i) - \nabla F_i(\bfw^{k,t}) + \frac{1}{n_k}\sum_{j\in\mathcal{N}_k}\left( \nabla F_j(\bfw^{k,t}) - \nabla F_j(\bfw^{k,t}_j)\right)\right\|^2\right] \tag*{(from Lemma \ref{lem:similarity_Fi})}\\
        \le\;& 2\gamma^2_G + 4\mathbb{E}\left[\left\|\nabla F_i(\bfw^{k,t}_i) - \nabla F_i(\bfw^{k,t})\right\|^2 + \frac{1}{n_k}\sum_{j\in\mathcal{N}_k} \left\|\nabla F_j(\bfw^{k,t}) - \nabla F_j(\bfw^{k,t}_j)\right\|^2\right]\tag*{(using Cauchy-Schwarz inequality similar to \eqref{eq:mul_tau_2})}\\
        \le\;& 2\gamma^2_G + 4L^2_F\mathbb{E}\left[\left\|\bfw^{k,t}_i - \bfw^{k,t}\right\|^2 + \frac{1}{n_k}\sum_{j\in\mathcal{N}_k} \left\|\bfw^{k,t} - \bfw^{k,t}_j\right\|^2\right]\nonumber\\
        \le\;& 2\gamma^2_G + 8L^2_F a_t.
    \end{align}
    
    \item \emph{Upper Bound of $H_{i,2}$:} Due to Cauchy-Schwarz inequality, the following holds 
    \begin{align}
        \label{eq:mul_tau_2}
        \left\|\sum^{n_k+1}_{j=1}x_j y_j\right\|^2\le\left(\sum^{n_k+1}_{j=1}\left\|x_j\right\|^2\right)\left(\sum^{n_k+1}_{j=1}\left\|y_j\right\|^2\right)
    \end{align}
    where $x_j=\frac{1}{\sqrt{n_k}}$ and $y_j=\frac{1}{\sqrt{n_k}}(\nabla F_j(\bfw^{k,t}_j)-\tilde{\nabla}F_j(\bfw^{k,t}_j))$ for $1\le j\le n_k$; $x_{n_k + 1} = 1$ and $y_{n_k + 1} = \tilde{\nabla}F_i(\bfw^{k,t}_i)-\nabla F_i(\bfw^{k,t}_i)$. Thus, we have
    \begin{align}
        \label{eq:H_i2}
        H_{i,2}&\le 2 \mathbb{E}\left[\left\|\tilde{\nabla}F_i(\bfw^{k,t}_i)-\nabla F_i(\bfw^{k,t}_i)\right\|^2 + \frac{1}{n_k}\sum_{j\in\mathcal{N}_k}\left\|\nabla F_j(\bfw^{k,t}_j)-\tilde{\nabla}F_j(\bfw^{k,t}_j)\right\|^2\right]\nonumber\\
        &\le 4\sigma^2_F \tag*{(using Lemma \ref{lem:variance_Fi})}
    \end{align}
    where $\sigma_F = \max_{i\in\mathcal{N}}\{\sigma_{F_i}\}$.
\end{itemize}
Based on the above results, substituting \eqref{eq:bound_Hi} into \eqref{eq:sequence_a}, we obtain
\begin{align}
    \label{eq:mul_tau_3}
    a_{t+1} &= (1+\epsilon)a_t + 2\beta^2\left(1+\frac{1}{\epsilon}\right)\max_{i\in\mathcal{N}_k}\left\{H_{i,1}+H_{i,2}\right\}\nonumber\\
    &\le \left(1+\epsilon+16\beta^2L^2_F\left(1+\frac{1}{\epsilon}\right)\right)a_t + 4\beta^2\left(1+\frac{1}{\epsilon}\right)\left(\gamma^2_G+2\sigma^2_F\right).
\end{align}
Note that \eqref{eq:mul_tau_3} is essentially the same as \cite[Equation (78)]{fallah2020personalized}. Therefore, due to $\beta\in[0,1/(10L_F\tau))$ and \cite[Corollay F.2.]{fallah2020personalized}, we have
\begin{align}
    \label{eq:bound_Ci}
    \mathbb{E}[\|C_i\|^2]\le a_t \le 35\beta^2 t \tau(\gamma^2_G + 2\sigma^2_F).
\end{align}

Similar to \eqref{eq:lowerbound_Gk}, substituting \eqref{eq:mul_tau_1} in $\mathbb{E}[G^{k,t}]$, we obtain 
\begin{align}
    \mathbb{E}[G^{k,t}] =\;& \mathbb{E}\left[\frac{\beta}{n_k}\sum_{i\in\mathcal{N}_k}\nabla F(\bfw^{k,t})^\top\tilde{\nabla}F_i(\bfw^{k,t}) - \frac{L_F\beta^2}{2}\left\|\frac{1}{n_k}\sum_{i\in\mathcal{N}_k}\tilde{\nabla}F_i(\bfw^{k,t})\right\|^2\right]\nonumber\\
    =\;& \mathbb{E}\left[\frac{\beta}{n_k}\sum_{i\in\mathcal{N}_k}\left(A_i + B_i + C_i + \tilde{\nabla} F_i(\bfw^{k,t}_i)\right)^\top\tilde{\nabla}F_i(\bfw^{k,t}) - \frac{L_F\beta^2}{2}\left\|\frac{1}{n_k}\sum_{i\in\mathcal{N}_k}\tilde{\nabla}F_i(\bfw^{k,t})\right\|^2\right]\nonumber\\
    =\;& \mathbb{E}\left[\frac{\beta}{n_k}\sum_{i\in\mathcal{N}_k}\left(A^\top_i\tilde{\nabla}F_i(\bfw^{k,t}) + B^\top_i\tilde{\nabla}F_i(\bfw^{k,t}) + C^\top_i\tilde{\nabla}F_i(\bfw^{k,t}) + \left\|\tilde{\nabla} F_i(\bfw^{k,t}_i)\right\|^2\right) - \frac{L_F\beta^2}{2}\left\|\frac{1}{n_k}\sum_{i\in\mathcal{N}_k}\tilde{\nabla}F_i(\bfw^{k,t})\right\|^2\right]\nonumber\\
    =\;& \mathbb{E}\left[\frac{\beta}{n_k}\sum_{i\in\mathcal{N}_k}\left(\mathbb{E}\Big[A^\top_i\tilde{\nabla}F_i(\bfw^{k,t})\Big|\mathcal{N}_k\Big] + \mathbb{E}\Big[B^\top_i\tilde{\nabla}F_i(\bfw^{k,t})\Big|\mathcal{N}_k\Big] + \mathbb{E}\Big[C^\top_i\tilde{\nabla}F_i(\bfw^{k,t})\Big|\mathcal{N}_k\Big] + \left\|\tilde{\nabla} F_i(\bfw^{k,t}_i)\right\|^2\right) \nonumber\right.\\
    &\left.- \frac{L_F\beta^2}{2}\left\|\frac{1}{n_k}\sum_{i\in\mathcal{N}_k}\tilde{\nabla}F_i(\bfw^{k,t})\right\|^2\right] \tag*{(using the tower rule)}\nonumber\\
    \ge\;& \mathbb{E}\left[\frac{\beta}{n_k}\sum_{i\in\mathcal{N}_k}\left(-\sqrt{\mathbb{E}\left[\left\|A_i\right\|^2\Big | \mathcal{N}_k\right]}\sqrt{ \mathbb{E}\Big[\big\|\tilde{\nabla} F_i(\bfw^{k,t})\big\|^2\Big | \mathcal{N}_k\Big]}-\sqrt{\mathbb{E}\left[\left\|B_i \right\|^2\Big | \mathcal{N}_k\right]}\sqrt{ \mathbb{E}\Big[\big\|\tilde{\nabla} F_i(\bfw^{k,t})\big\|^2\Big | \mathcal{N}_k\Big]}\right.\right.\nonumber\\
    & -\left.\left. \sqrt{\mathbb{E}\left[\left\|C_i \right\|^2\Big | \mathcal{N}_k\right]}\sqrt{ \mathbb{E}\Big[\big\|\tilde{\nabla} F_i(\bfw^{k,t})\big\|^2\Big | \mathcal{N}_k\Big]}\right)\right] + \beta\left(1-\frac{L_F\beta}{2}\right)\mathbb{E}\bigg[\frac{1}{n_k}\sum_{i\in\mathcal{N}_k}\Big\|\tilde{\nabla}F_i(\bfw^{k,t})\Big\|^2\bigg] \tag*{(using \eqref{eq:inequality_ab_opt} and \eqref{eq:inequality_cauthy}, and rearranging terms)}\\
    \ge\;& \beta\mathbb{E}\left[\frac{1}{n_k}\sum_{i\in\mathcal{N}_k}\left(\left(1-\frac{L_F\beta}{2}\right)\Big\|\tilde{\nabla}F_i(\bfw^{k,t})\Big\|^2\right.\right.\nonumber\\
    &\left.\left.-\left(\sqrt{\left(1 + \alpha L\right)^2\gamma_G + \alpha \zeta \gamma_H}+\sigma_{F_i}+\beta\sqrt{35 t \tau(\gamma^2_G + 2\sigma^2_F)}\right)\sqrt{ \mathbb{E}\Big[\big\|\tilde{\nabla} F_i(\bfw^{k,t})\big\|^2\Big | \mathcal{N}_k\Big]} \right)\right]. \tag*{(from \eqref{eq:bound_Ai}, \eqref{eq:bound_Bi}, and \eqref{eq:bound_Ci})}
\end{align}
Therefore, the following holds
\begin{align}
    \mathbb{E}\left[F(\bfw^k) - F(\bfw^{k+1})\right] &= \mathbb{E}\left[\sum^{\tau-1}_{t=0}F(\bfw^{k,t})-F(\bfw^{k,t+1})\right]\nonumber\\
    &\ge \sum^{\tau-1}_{t=0}\mathbb{E}\left[G^{k,t}\right]\nonumber\\
    &\ge \frac{\beta}{2}\mathbb{E}\left[\frac{1}{n_k}\sum_{i\in\mathcal{N}_k}\sum^{\tau-1}_{t=0}\left(\Big\|\tilde{\nabla}F_i(\bfw^{k,t})\Big\|^2-2\left(\lambda_1+\frac{\lambda_2}{\sqrt{D_i}}\right)\sqrt{ \mathbb{E}\Big[\big\|\tilde{\nabla} F_i(\bfw^{k,t})\big\|^2\Big | \mathcal{N}_k\Big]} \right)\right]\tag*{(substituting \eqref{eq:sigma_Fi})}
\end{align}
where 
\begin{align}
    \lambda_1 &\ge \sqrt{\left(1 + \alpha L\right)^2\gamma_G + \alpha \zeta \gamma_H} + \beta\tau\sqrt{35 (\gamma^2_G + 2\sigma^2_F)}\\
    \lambda_2 &\ge \sqrt{6\sigma^2_G\left(1+(\alpha L)^2\right)\left((\alpha\sigma_H)^2+(1+\alpha L)^2\right)+3(\alpha\zeta\sigma_H)^2}.
\end{align}
We complete the proof.
% The result of Corollary \ref{coro:gap} can be directly derived by substituting \eqref{eq:sigma_Fi} into the RHS of \eqref{eq:thm1_1} and summing over $t=0,\dots,\tau-1$.

\subsection{Proof of Corollary \ref{coro:extension}}

The result of Corollary \ref{coro:extension} can be obtained via combining \cite[Lemma H.1]{fallah2020personalized} and \cite[Lemma H.2]{fallah2020personalized} with the proof of Lemma \ref{thm:gap}.

% use section* for acknowledgment
%\section*{Acknowledgment}
%
%***

% Can use something like this to put references on a page
% by themselves when using endfloat and the captionsoff option.
\ifCLASSOPTIONcaptionsoff
  \newpage
\fi

\end{document}